\newcommand{\cA}{\mathcal{A}}
\newcommand{\cF}{\mathcal{F}}
\newcommand{\cN}{\mathcal{N}}
\newcommand{\RR}{\mathbb{R}}
\newcommand{\argmin}{\mathop{\mathrm{argmin}}}
\newcommand{\argmax}{\mathop{\mathrm{argmax}}}
\def\eqref#1{equation~\ref{#1}}
\def\1{\bm{1}}
\DeclareMathAlphabet{\mathsfit}{\encodingdefault}{\sfdefault}{m}{sl}
\SetMathAlphabet{\mathsfit}{bold}{\encodingdefault}{\sfdefault}{bx}{n}
\newcommand{\E}{\mathbb{E}}
\newtheorem{theorem}{Theorem}
\newtheorem{lemma}{Lemma}
\newtheorem{corollary}{Corollary}
\newtheorem{assumption}{Assumption}
\definecolor{full}{HTML}{E76BF3}
\definecolor{trun}{HTML}{00BA38}
\definecolor{ours}{HTML}{619CFF}
\definecolor{lightteal}{rgb}{0.7, 0.9, 0.9}
\title{Score Distillation Beyond Acceleration: Generative Modeling from Corrupted Data}
\author{Yasi Zhang$^{*,1}$, Tianyu Chen$^{*,2,3,\ddagger}$,  Zhendong Wang$^{3}$,  Ying Nian Wu$^{1}$,\\ \textbf{Mingyuan Zhou}$^{\dagger,2,3}$, \textbf{Oscar Leong}$^{\dagger,1}$\\
    $^{*}$Equal contribution \quad $^{\dagger}$Equal advising\\
    $^{1}$University of California, Los Angeles \quad $^{2}$University of Texas at Austin \\ $^{3}$Microsoft AI Superintelligence \quad $\ddagger$ Work done during an internship in Microsoft\\
 Correspondonce to  \texttt{yasminzhang@ucla.edu, tianyuchen@utexas.edu}
}
\begin{document}

\maketitle
\begin{abstract}
Learning generative models directly from corrupted observations is a long-standing challenge across natural and scientific domains. We introduce \emph{Restoration Score Distillation (RSD)}, a unified framework for learning high-fidelity, one-step generative models using \textbf{only} degraded data of the form
$
y = \mathcal{A}(x) + \sigma \varepsilon, x\sim p_X,\ \varepsilon\sim \mathcal{N}(0,I_m),
$
where the mapping $\mathcal{A}$ may be the identity or a non-invertible corruption operator (e.g., blur, masking, subsampling, Fourier acquisition). RSD first pretrains a \emph{corruption-aware diffusion teacher} on the observed measurements, then \emph{distills} it into an efficient one-step generator whose samples are statistically closer to the clean distribution $p_X$. The framework subsumes identity corruption (denoising task) as a special case of our general formulation. 

Empirically, RSD consistently reduces Fr\'echet Inception Distance (FID) relative to corruption-aware diffusion teachers across noisy generation (\textsc{CIFAR-10}, \textsc{FFHQ}, \textsc{CelebA-HQ}, \textsc{AFHQ-v2}), image restoration (Gaussian deblurring, random inpainting, super-resolution, and mixtures with additive noise), and multi-coil MRI—\emph{without access to any clean images}. The distilled generator inherits one-step sampling efficiency, yielding up to $30\times$ speedups over multi-step diffusion while surpassing the teachers after substantially fewer training iterations. These results establish score distillation as a practical tool for generative modeling from corrupted data, \emph{not merely for acceleration}. We provide theoretical support for the use of distillation in enhancing generation quality in the Appendix. The code is available at \url{https://github.com/TianyuCodings/RSD}.

\end{abstract}

\section{Introduction}
\label{sec:intro}

Learning from corrupted data is central to many scientific and engineering domains where clean observations are scarce or costly, including astronomy~\cite{roddier1988interferometric, lin2024imaging}, medical imaging~\cite{reed2021dynamic, jalal2021robust}, and seismology~\cite{nolet2008breviary, rawlinson2014seismic}. For instance, fully sampled MRI acquisitions are time-consuming and uncomfortable for patients~\cite{knoll2020fastmri, zbontar2018fastmri}, motivating methods that recover the structure of the underlying clean distribution from corrupted measurements alone.

\begin{figure}[h!]
    \centering
    \includegraphics[width=\linewidth]{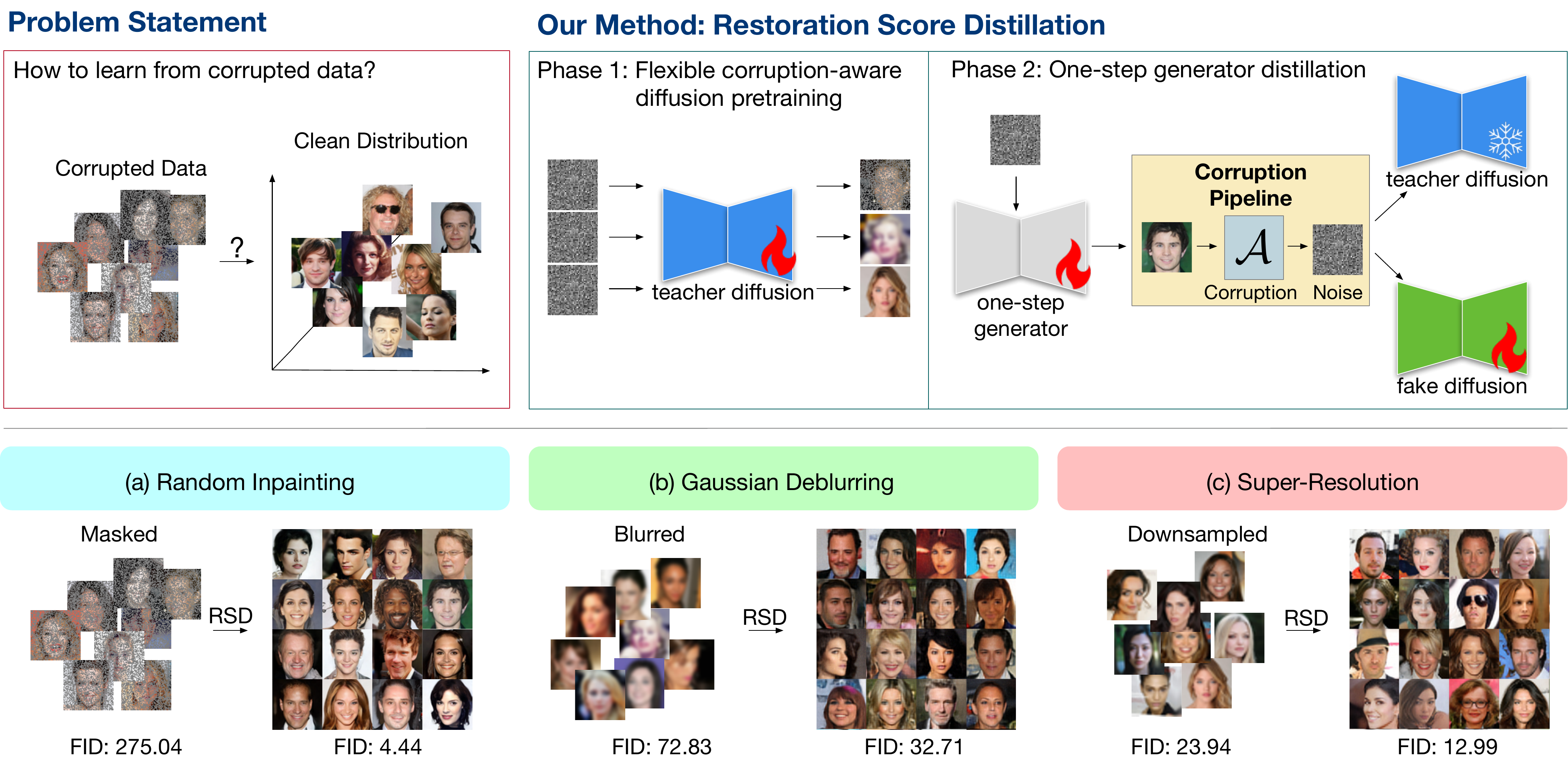}
    \caption{Overview of \emph{Restoration Score
Distillation} (RSD) and some qualitative results across diverse operators: Gaussian deblurring, random inpainting, and super-resolution. Additional examples appear in Appendix~\ref{app:snapshot} and ~\ref{app:quality}.}
    \label{fig:first-figure}
    \vspace{-0.5cm}
\end{figure}

\paragraph{Problem Statement.}
We study \emph{generative modeling from corrupted observations}. Let \(x\in\mathbb{R}^d\) be drawn from an unknown clean distribution \(p_X\). We observe only
\begin{align}
y = \mathcal{A}(x) + \sigma \epsilon,\qquad \epsilon \sim \mathcal{N}(0,I_m),
\label{eq:corrupted-obs}
\end{align}
where \(\mathcal{A}:\mathbb{R}^d\!\to\!\mathbb{R}^m\) is a (known) non-invertible corruption operator and \(\sigma\) is the noise level. The operator may be identity (i.e., denoising), a deterministic linear map (blur, downsampling), a random mask (inpainting), or a Fourier-domain undersampling pattern (MRI). Our goal is to learn a generator whose samples follow \(p_X\) using only a dataset of $N$ corrupted datapoints \(\{y^{(i)}\}_{i=1}^N\) of the form Eq~\ref{eq:corrupted-obs}.
\paragraph{Background and limitations.}
Diffusion models~\cite{sohl2015deep, ho2020denoising}, also known as score-based generative models~\cite{song2019generative, song2021scorebased}, achieve state-of-the-art results in high-dimensional image synthesis~\cite{dhariwal2021diffusion, ho2022cascaded, ramesh2022hierarchical, rombach2022high,   peebles2023scalable, zheng2024learning, zhang2024object, chang2025skews, chen2025edival}. When only measurements are available, \emph{corruption-aware} training adapts diffusion objectives to the forward operator: Ambient Diffusion for masking~\cite{daras2023ambientdiffusion}, Ambient Tweedie for additive noise~\cite{daras2025ambientscaling}, and Fourier-space variants for MRI~\cite{aali2025ambient-mri}. These methods, however, inherit the sampling cost of multi-step reverse processes. EM-Diffusion~\cite{weiminbai2024emdiffusion} offers broader operator coverage but requires a few clean images, is computationally expensive, and can struggle under severe corruption.

\paragraph{Distillation beyond acceleration.}
Score distillation transfers a pretrained diffusion teacher into a one-step generator while largely preserving fidelity~\citep{poole2022dreamfusion,wang2024prolificdreamer,luo2024diff,yin2024one,zhou2024score,xie2024em,xu2025one,yin2024improved}. 
Recent reports indicate that distilled generators can even outperform their diffusion teachers~\citep{zhou2024score}, though gains under clean-data training are typically modest. 
In contrast, under \emph{corrupted-data–only} training, we observe substantially larger improvements over the teacher (Sec.~\ref{sec:experiment}), underscoring distillation’s particular advantage in challenging regimes.

\paragraph{Restoration Score
Distillation (RSD).}
We introduce \emph{RSD}, a unified framework for learning high-fidelity one-step generators directly and only from corrupted observations, with denoising (\(\mathcal{A}{=}I\)) as a special case. 
RSD proceeds in two stages: 
(i) \emph{corruption-aware diffusion pretraining}, where a teacher is trained on measurements using an objective matched to the forward operator \(\mathcal{A}\) (Sec.~\ref{sec:phase1}); and 
(ii) \emph{score distillation}, which transfers the teacher into a single-step generator while explicitly \emph{respecting the measurement operator} during training (Sec.~\ref{sec:phase2}). 
Concretely, we synthesize measurements by applying the same corruption pipeline to generator outputs and then align the induced generator scores with the teacher’s scores under a divergence (e.g., Fisher or KL). 
The distillation phase, which includes a corruption-respecting procedure, consistently improves generation quality upon the diffusion teacher across both denoising and more general operators (Fig.~\ref{fig:first-figure}).


\paragraph{Contributions.} 
Our work makes three primary contributions. 
\textbf{Unified framework for diverse corruptions:} We propose \emph{RSD}, a unified approach that learns generators directly from diverse corrupted measurements \(y = \mathcal{A}(x) + \sigma \epsilon\). This formulation encompasses denoising (\(\mathcal{A}=I\)) as well as more general operators, including blur, downsampling, random masking, and Fourier undersampling, under both noisy and noiseless regimes, and achieves state-of-the-art performance across these settings. 
\textbf{Modular training:} Our training pipeline is organized into two phases. Phase~I accommodates a variety of corruption-aware techniques, such as standard diffusion, diffusion for denoising, random inpainting, and masked Fourier-space (F.S.) transformations. Phase~II distills the teacher model into a one-step generator while retaining the corruption pipeline used during training. This modularity makes it straightforward to plug in new forward operators or training objectives. We also provide theoretical analysis that explains why the distillation phase can enhance generation quality. 
\textbf{Extensive experiments:} We conduct comprehensive evaluations on natural-image benchmarks (CIFAR-10, CelebA-HQ, FFHQ, AFHQ-v2), restoration tasks (denoising with \(\sigma \in \{0.1,0.2,0.4\}\), Gaussian deblurring, random inpainting with \(p \in \{0.6,0.8,0.9\}\), and \(2\times\) super-resolution), and multi-coil MRI with acceleration factors \(R \in \{4,6,8\}\). Across all settings, RSD consistently improves FID over corruption-aware diffusion teachers while offering substantial speedups via one-step generation (Sec.~\ref{sec:experiment}). Additional ablations (Sec.~\ref{sec:ablations}) on unknown corruption types and different data size further underscore the robustness of our framework. Moreover, we demonstrate that the learned clean-image prior (the generator) can be directly leveraged for downstream conditional inverse problems, achieving good performance (Sec.~\ref{sec:inverse}).

\section{Background}

\subsection{Diffusion Models}

Diffusion models \cite{sohl2015deep, ho2020denoising}, also known as score-based generative models \cite{song2019generative, song2021scorebased}, consist of a {forward process} that gradually perturbs data with noise and a  {reverse process} that denoises this signal to recover the data distribution \( p_X(x) \). Specifically, the forward process defines a family of conditional distributions over noise levels \( t \in (0,1] \), given by
$q_t(x_t \mid x) = \mathcal{N}(\alpha_t x, \sigma_t^2 I),$
with marginals \( q_t(x_t) \). We adopt a  {variance-exploding (VE)} process \cite{song2019generative} by setting \( \alpha_t = 1 \), yielding the simple form \( x_t = x + \sigma_t \varepsilon \), where \( \varepsilon \sim \mathcal{N}(0, I_d) \).
To model the reverse denoising process, one typically trains a time-dependent denoising autoencoder (DAE) \( f_\phi(\cdot, t): \mathbb{R}^d \times [0,1] \to \mathbb{R}^d \) \cite{vincent2011connection}, parameterized by a neural network, to approximate the posterior mean \( \mathbb{E}[x \mid x_t] \). This is achieved by minimizing the following standard diffusion loss:
\begin{align}\label{eq:diffusion}
\mathcal{L}_{\text{SD}}(\phi; \{x^{(i)}\}_{i=1}^N) := \E_{x,\sigma_t,\varepsilon} \big[ \lambda(t) \| f_\phi(x_t, t) - x \|_2^2 \big]
\end{align}
where \(x_t = x + \sigma_t \varepsilon\) and \(\{x^{(i)}\}_{i=1}^N\) denotes the dataset. One can also apply the loss to the corrupted data \(\{y^{(i)}\}_{i=1}^N\) to directly learn the distribution \(p_Y(y)\). For clarity, we omit the diffusion training schedule $\lambda(t)$, $p(\sigma_t)$ and the noise term $\varepsilon \sim \mathcal{N}(0,I_m)$; full details are deferred to Appendix~\ref{app:stand_diffusion}, Alg~\ref{alg:standard_diffusion}. The corresponding objectives are summarized in Table~\ref{tab:pretraining_objectives}, with additional discussion below.

\begin{table}[t]
\centering
\caption{Summary of corruption-aware diffusion objectives used for pretraining. Our framework can be seamlessly integrated with existing advanced corruption-aware diffusion objectives. }
\label{tab:pretraining_objectives}
\resizebox{\textwidth}{!}{\begin{tabular}{l|c|c|c|c|c}
\toprule
\textbf{Suitable Scenario} & \textbf{Algorithm} & \textbf{Operator} & \textbf{Domain} & \textbf{Pretrain Objective} & \textbf{Notation} \\
\midrule
Noiseless Corruption & Alg(\ref{alg:standard_diffusion}) & $\cA(x)$ & Image &
$\begin{aligned}
\mathcal{L}_{\text{SD}} &= \E \big[ \lambda(t) \| f_\phi(y + \sigma_t \varepsilon, t) - y \|_2^2 \big]
\end{aligned}$ &
$y$: corrupted data \\
\midrule
Noisy Corruption & Alg(\ref{alg:ambient_tweedie}) & $\cA(x) + \sigma \epsilon$ & Image &
$\begin{aligned}
\mathcal{L}_{\text{N}} &= \E \Big[\| \tfrac{\tilde\sigma_t^2 - \sigma^2}{\tilde\sigma_t^2} f_\phi(y_t, t) + \tfrac{\sigma^2}{\tilde\sigma_t^2} y_t - y \|_2^2\Big]
\end{aligned}$ &
$\sigma$: data noise level, $\tilde \sigma_t=\max\{\sigma_t,\sigma\}$ \\
\midrule
Random Inpainting & Alg(\ref{alg:ambient_inpainting}) & $Mx$ & Image &
$\begin{aligned}
\mathcal{L}_{\text{RI}} &= \E \big[ \| M (f_\phi(\tilde{M}, \tilde{M} y_t, t) - y) \|_2^2 \big]
\end{aligned}$ &
$\tilde M$: further corrupted mask $M$ \\
\midrule
Masked F.S. & Alg(\ref{alg:ambient_mri}) & $M\cF x$ & Fourier &
$\begin{aligned}
\mathcal{L}_{\text{FS}} &= \E \big[ \| \cA f_\phi(\tilde M,\tilde y_t,  t ) - y \|_2^2 \big]
\end{aligned}$ &
$\tilde y_t$: further corrupted $y_t$, $\cA$ defined in Appx.\ref{app:ambient_mri} \\
\bottomrule
\end{tabular}}
\vspace{-0.4cm}
\end{table}

\subsection{Diffusion Models for Corruptions}
Recent advances have extended diffusion models to address diverse forms of data corruption. We present several variants of the diffusion loss tailored to different corruption settings, including objectives for noisy data, random inpainting in the image domain and Fourier domain.

\paragraph{Diffusion for Noisy Corruptions.}
\cite{daras2025ambientscaling} generalizes score matching to noisy observations $y = x + \sigma \varepsilon$, $\varepsilon \sim \mathcal{N}(0, I)$. Here, $\sigma > 0$ is a known noise level in the measurements. One can incorporate the known corruption by minimizing the following diffusion for noisy corruptions
\begin{align}
   \mathcal{L}_{\text{N}}(\phi;\{y^{(i)}\}_{i=1}^N ) = \mathbb{E}_{y,\tilde\sigma_t,\epsilon}
\left[\lambda(t)\left\| \frac{\tilde\sigma_t^2 -\sigma^2}{\tilde\sigma_t^2 } f_\phi(y_t, t) + \frac{\sigma^2}{\tilde\sigma_t^2} y_t - y\right\|_2^2\right],  
\label{eq:diffusion_denoise}
\end{align}
where $\tilde \sigma_t=\max\{\sigma_t,\sigma\}, y_t=y+\tilde\sigma_t$, and $p(\sigma_t)$ and $\lambda(t)$ arise from the diffusion training schedule.
 This formula models the distribution of the $x_t:=x+\sigma_t\varepsilon$ where $\sigma_t\ge\sigma$. Full details are deferred to Appendix~\ref{app:ambient_tweedie}, the training algorithm in Algorithm~\ref{alg:ambient_tweedie} and the sampling algorithm in Algorithm~\ref{alg:ambient_sampling}.

\paragraph{Diffusion for Image-Space Random Masked Corruptions.}
\citet{daras2023ambientdiffusion} learn \(p_X(x)\) from randomly inpainted measurements by incorporating mask into the diffusion objective. 
Given observations \(y = Mx\) for a binary mask $M$, the teacher is trained with a random-inpainting loss
\begin{align}
\mathcal{L}_{\mathrm{RI}}(\phi;\{y^{(i)}\}_{i=1}^N) 
&= \mathbb{E}_{(y,M),\,\tilde M,\,\sigma_t,\,\varepsilon}\!
\left[\,\big\| M \!\left( f_\phi(\tilde M,\, \tilde M y_t,\, t) - y \right)\big\|_2^2 \right],
\label{eq:diffusion_inpainting}
\end{align}
where \(y_t = y + \sigma_t \varepsilon\) is from diffusion schedule, \(\varepsilon \sim \mathcal{N}(0,I)\), and \(\tilde M\) is a secondary mask that further erases pixels based on \(M\). Details of the diffusion schedule \(\lambda(t)\), \(p(\sigma_t)\), and the full training procedure are provided in Appendix~\ref{app:ambient_inpainting}, Algorithm~\ref{alg:ambient_inpainting}.

\paragraph{Diffusion for Fourier-Space Random Masked Corruptions.} 
\citep{aali2025ambient-mri} extends diffusion models to handle frequency-domain measurements with random masking, which are used in scientific imaging (e.g., MRI). Specifically, observations take the form \(y = M \cF x\), where \(\cF\) denotes the Fourier transform operator and \(M\) is a sampling mask. Formal definitions of the measurement process and the corresponding training algorithm are provided in Appendix~\ref{app:ambient_mri} and Algorithm~\ref{alg:ambient_mri}.

These specialized frameworks enable generative modeling from realistic scenarios such as noisy, missing data, or frequency-domain degradation. All objective functions are summarized in Table \ref{tab:pretraining_objectives}.

\subsection{Score Distillation for Generative Modeling}
\label{sec:distill}

Score distillation compresses multi-step diffusion models into efficient one-step generators. Originally proposed for text-to-3D generation~\cite{poole2022dreamfusion, wang2024prolificdreamer} and later extended to image synthesis~\cite{luo2024diff, yin2024one, zhou2024score, xie2024em}, it transfers knowledge from a pretrained diffusion teacher $f_\phi$ to a generator $G_\theta: \mathbb{R}^d \to \mathbb{R}^d$. To bridge the two, a fake diffusion model $f_\psi$ approximates the distribution induced by $G_\theta(\cdot)$ across diffusion noise levels. Training encourages consistency between $f_\psi$ and $f_\phi$ over time:
\begin{align}
\mathcal{L}_{\text{distill}}(\theta) = 
\mathbb{E}_{\sigma_t,\, z \sim \mathcal{N}(0,I_d)} 
\mathbb{E}_{x_{\theta} = G_\theta(z)}
\!\left[
\| f_\phi(x_t, t) - f_\psi(x_t, t) \|_2^2
\right],
\label{eq:distillation}
\end{align}
where $x_t = x_{\theta} + \sigma_t \epsilon$ and the loss corresponds to Fisher divergence, following SiD~\cite{zhou2024score}. This divergence of true and fake distribution can be replaced by KL divergence; ablation results are reported in Section~\ref{sec:ablations} and Appendix \ref{app:distill_abalate}. Intuitively, the fixed teacher $f_\phi$ represents the true data distribution, while $f_\psi$ captures the generator’s induced distribution. Updating $G_\theta$ to minimize $\mathcal{L}_{\text{distill}}(\theta)$ aligns generator samples with the true data. Notably, distilled generators can even outperform their teachers: for instance, \cite{zhou2024score} reports that on FFHQ the teacher achieves an FID of 2.39, whereas its distilled generator attains 1.55. In our setting, we find that such improvements are further amplified when distillation is performed under corrupted-data training. Details of the distillation training schedules are deferred to Appendix~\ref{app:distillation}.

\section{Restoration Score Distillation}
\label{sec:method}

\paragraph{Problem Statement.} 
Suppose we are given a finite corrupted dataset of size $N$, denoted by $\{y^{(i)}\}_{i=1}^N$. Each corrupted observation is generated as
$
y^{(i)} = \mathcal{A}(x^{(i)}) + \sigma \varepsilon^{(i)},
$
where $\sigma$ is a known noise level (with extensions available for the unknown-$\sigma$ setting, see Section \ref{sec:ablations}), and $\varepsilon^{(i)} \overset{\text{i.i.d.}}{\sim} \mathcal{N}(0, I_m)$. Crucially, the clean data $\{x^{(i)}\}_{i=1}^N$ is never accessible. In certain scenarios, such as random inpainting, the corruption operator $\mathcal{A}$ may vary across samples, drawn from a common distribution. In this case, each corrupted observation takes the form
$
y^{(i)} = \mathcal{A}^{(i)}(x^{(i)}) + \sigma \varepsilon^{(i)}.
$



RSD is a two-phase framework for learning generative models solely from corrupted observations. It decouples the process into: (1) a flexible corruption-aware diffusion pretraining stage, and (2) a distillation stage that compresses the pretrained model into a single-step generator and further improves generation quality. The overall procedure is summarized in Algorithm~\ref{alg:main_dcd}. We summarize different types of pretrained diffusion models in Table~\ref{tab:pretraining_objectives}, and then describe how they can be seamlessly integrated into RSD to further enhance performance. Notably, even when a suitable corruption-aware pretrained model is unavailable, employing a standard diffusion model (Tab~\ref{tab:pretraining_objectives}) to learn the corrupted data distribution still yields substantial improvements after distillation (Tab~\ref{tab:celeba_results}).

\subsection{Phase I: Flexible Corruption-Aware Pretraining}
\label{sec:phase1}

The first phase trains a teacher diffusion model \( f_\phi(\cdot, t) \) directly on corrupted observations \( y = \mathcal{A}(x) + \sigma \epsilon \), where \( \mathcal{A} \) is a potentially non-invertible corruption operator. A straightforward approach is to train on corrupted data \( y \) using the standard diffusion loss (Eq.~\ref{eq:diffusion}). In practice, we find that this naive strategy already yields strong performance. To further improve performance across diverse corruption operators, we incorporate existing corruption-aware diffusion training methods, as summarized in Table~\ref{tab:pretraining_objectives}. Our choice of pretraining objectives for different corruption scenarios is as follows:  
1) {Standard Diffusion} is effective under \textbf{noiseless corruptions}, directly modeling the distribution of \( y = \mathcal{A}(x) \).  
2) \textbf{Noisy Corruption}~\cite{daras2025ambientscaling} targets additive noise settings, learning the distribution of \( \mathcal{A}(x) \) from noisy observations \( y = \mathcal{A}(x) + \sigma \epsilon \), with denoising as a special case when \( \mathcal{A} = I \).  
3) \textbf{Random Inpainting}~\cite{daras2023ambientdiffusion} addresses learning from partial observations \( Mx \), where \( M \) is a random inpainting mask, and learns the distribution of \( x \).  
4) \textbf{Masked F.S}~\cite{aali2025ambient-mri} extends inpainting to Fourier-domain corruptions, well-suited for frequency-based degradation such as accelerated MRI, learning from masked Fourier observations \( M\mathcal{F}x \).


 \subsection{Phase II: One-Step Generator Distillation}
\label{sec:phase2}
After pretraining, we distill the teacher diffusion model \(f_\phi(\cdot,t)\) into a one-step generator \(G_\theta\) that maps latent noise \(z \sim \mathcal{N}(0, I_d)\) directly to clean samples. To facilitate this distillation, we introduce an auxiliary fake diffusion model \(f_\psi(\cdot,t)\), initialized from \(f_\phi\) and trained on corrupted output of \(G_\theta(z)\), which is also initialized from $f_\phi$. Note that this is common practice in the distillation literature to facilitate and stabilize training. In this case, $G_\theta, f_\phi, f_\psi$ share the same network structure; see Appendix~\ref{app:distillation} for further discussion. By encouraging \(f_\psi\) to match the teacher’s time-dependent dynamics (e.g., score fields parameterized by \(f_\phi\)), we align the behavior of \(G_\theta\) with the data distribution learned by \(f_\phi\), thereby narrowing the gap between the generated and clean data distributions.

We adopt the SiD~\cite{zhou2024score} framework for distillation and optimize the distillation loss $\mathcal{L}_{\text{distill}}(\theta)$ (Eq.~\ref{eq:distillation}). An ablation of loss choices is provided in Sec.~\ref{sec:ablations}. Beyond standard score distillation, we further corrupt the generated samples $x_g=G_\theta(z)$ into $\tilde{y}$ using the same corruption pipeline as the training data (Algorithm~\ref{alg:main_dcd}, Line~6). These corrupted samples are then used to train a fake diffusion model $f_{\psi}(\cdot, t)$. In Algorithm~\ref{alg:main_dcd}, Lines~2 and~7 allow any of the pretraining objectives summarized in Table~\ref{tab:pretraining_objectives} (or new objectives tailored to specific corruptions), but \emph{the same} objective must be used for both lines; mixing objectives destabilizes training and can lead to divergence. To complement the empirical results in Section \ref{sec:experiment}, Appendix \ref{app:theory} offers a theoretical analysis establishing conditions under which distillation yields improved sample quality.

\begin{algorithm}
\caption{Restoration Score Distillation (RSD)}
\label{alg:main_dcd}
\begin{algorithmic}[1]
\Procedure{Restoration Score Distillation}{$ \{y^{(i)}\}_{i=1}^N$}
    \State $f_\phi\leftarrow$\Call{DiffusionTraining}{$\{y^{(i)}\}_{i=1}^N$, \textcolor{red}{Obj}} \Comment{Diffusion training with Obj $\in$ Tab.~\ref{tab:pretraining_objectives}}
    \State Initialize $f_\psi\leftarrow f_\phi$, $G_\theta\leftarrow f_\phi$
    \For{$j=1,\dots,K$}
        \State $x_g\leftarrow G_\theta(z),\quad z\sim \mathcal{N}(0,I_d)$ \Comment{Generate fake clean images}
        \State \textcolor{blue}{$\tilde y = \mathcal{A}\!\big(\operatorname{stopgrad}(x_g)\big) + \sigma\,\varepsilon, \varepsilon \sim \mathcal{N}(0, I_d)$} \Comment{Corrupt $x_g$ same way as observation}
        \State \textcolor{gray}{\textit{\# If corruption contains noise, choose an Obj for $f_\phi,f_\psi$ that is denoising-aware}}
        \State $f_\psi\leftarrow$\Call{DiffusionTraining}{$\{\tilde y^{(i)}\}_{i=1}^N$, \textcolor{red}{Obj}} \Comment{\textbf{Same Obj} as above}
\State \textcolor{gray}{\textit{\# Do NOT inject additional noise during distillation when updating $\theta$}}
        \State \textcolor{blue}{$ y_\theta\leftarrow \mathcal{A}(G_\theta(z)),\quad z\sim \mathcal{N}(0,I_d)$}
        \State Update $\theta$ by distillation loss $\mathcal{L}_{\text{distill}}(\theta)$ with generated $x_{\theta} = y_\theta$ \Comment{Eq.~\ref{eq:distillation}}
    \EndFor
\EndProcedure
\end{algorithmic}
\end{algorithm}









\begin{figure}[t] \centering \begin{subfigure}{0.24\textwidth} \centering 
\vspace{-0.5cm}\includegraphics[width=\linewidth]{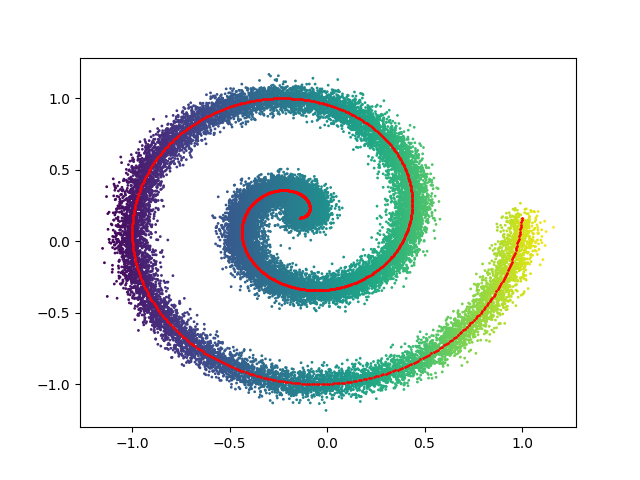} \caption{Noisy dataset $\sigma=0.05$} \label{fig:sub1} \end{subfigure} \begin{subfigure}{0.24\textwidth} \centering \includegraphics[width=\linewidth]{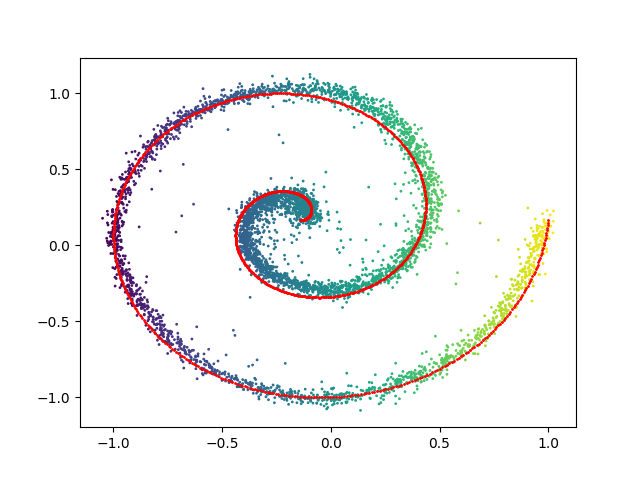} \caption{Teacher-Full} \label{fig:sub2} \end{subfigure} \begin{subfigure}{0.24\textwidth} \centering \includegraphics[width=\linewidth]{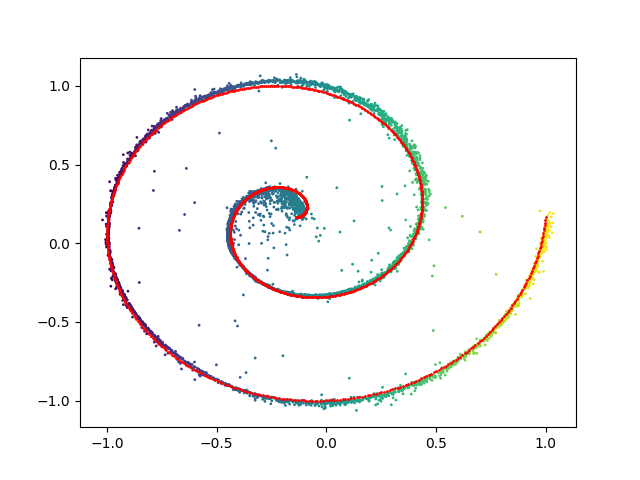} \caption{Teacher-Truncated} \label{fig:sub3} \end{subfigure} \begin{subfigure}{0.24\textwidth} \centering \includegraphics[width=\linewidth]{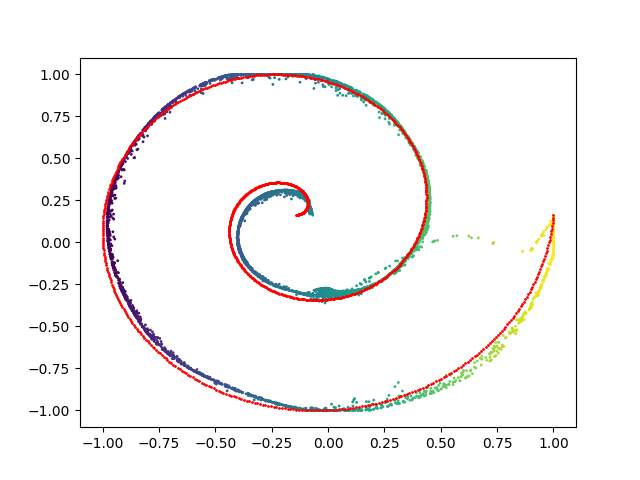} \caption{RSD (ours one-step)} \label{fig:sub4} \end{subfigure} \caption{{\textbf{A toy example of learning from a noisy dataset with $\sigma=0.05$.} 
}
} \label{fig:toy} 
\vspace{-0.5cm}
\end{figure}

\section{Experiments} 
\label{sec:experiment}

We conduct comprehensive experiments to evaluate the effectiveness, flexibility, and 
generality of our framework, RSD, across both natural and scientific imaging 
domains. We first demonstrate its performance through a 2D toy task and qualitative 
visualizations, providing intuition into the underlying mechanism. Then we consider four challenging image restoration tasks—denoising, Gaussian deblurring, random inpainting, and super-resolution—on CIFAR-10 (\(32\times 32\)), FFHQ (\(64\times 64\)), and CelebA-HQ (\(64\times 64\))~\cite{liu2015deepceleba, karras2017progressiveceleba}, under both noiseless (\(\sigma=0.0\)) and noisy (\(\sigma=0.2\)) regimes. We further show that RSD can be distilled directly from pretrained teacher models trained with corruption-aware objectives, including diffusion for denoising~\cite{daras2025ambientscaling}, diffusion for random inpainting~\cite{daras2023ambient}, and diffusion for Fourier-space inpainting~\cite{aali2025ambient-mri}, highlighting the framework’s modularity and flexibility. Sec.~\ref{sec:denoising} benchmarks denoising across multiple datasets and noise levels. Sec.~\ref{sec:general_corruptions} evaluates general corruption operators in both noiseless and noisy settings, including a sweep over random-inpainting missing rates \(p \in \{0.6, 0.8, 0.9\}\).

To assess real-world applicability, we evaluate RSD on multi-coil Magnetic Resonance Imaging (MRI) using the FastMRI dataset~\cite{zbontar2018fastmri, knoll2020fastmri}, a setting where fully sampled ground truth is often unavailable. Across tasks and corruption regimes, RSD consistently surpasses teacher diffusion models in generative quality as measured by FID~\cite{heusel2017gans} without access to clean data (Sec.~\ref{sec:mri}).

Finally, RSD exhibits strong \emph{distillation–data efficiency} (Sec.~\ref{sec:efficiency})—it quickly surpasses the teacher—and offers substantial inference-time gains from its one-step design (up to \(30\times\); see Sec.~\ref{sec:efficiency}). Our ablations (Sec.~\ref{sec:ablations}) show that the framework accommodates unknown noise levels \(\sigma\) and is robust with respect to the training-data size; we also examine alternative distillation losses in an ablation study. Separately, Sec.~\ref{sec:inverse} demonstrates that the trained generator achieves superior performance on downstream conditional inverse problems, indicating broad applicability.

\subsection{{A 2D Toy Example} } In Fig. \ref{fig:toy}, the observation is a noisy spiral dataset with $\sigma=0.05$.    
The core issue with teacher diffusion models trained on noisy datasets, i.e., (b) Teacher-Full and (c) Teacher-Truncated (\cite{daras2024ambienttweedie}, Algorithm \ref{alg:ambient_sampling}), is that they force the approximating distribution to spread its probability mass across all regions, making the learned density overly diffuse.
In contrast, 
(d) RSD excels at denoising the original dataset, producing a narrow, concentrated, and sharp approximation.  This effect occurs \emph{without} providing the clean data itself during training.

\subsection{Denoising as a Special Case of Restoration Score Distillation}
\label{sec:denoising}

\begin{table}[!t]
\centering
\caption{\textbf{Denoising results on CIFAR-10 and CelebA-HQ at \(\sigma=0.2\).} 
Rows with \(\sigma{=}0.0\) are clean-data upper bounds. Few-shot methods use 50 clean images. 
Baseline numbers are from the original papers or~\cite{weiminbai2024emdiffusion,lu2025stochastic}. 
The distilled student (RSD, one-step) improves over teacher models (Full/Truncated/Consistency) and surpasses all few-shot baselines.}
\label{tab:main_table}
\resizebox{0.8\textwidth}{!}{
\begin{tabular}{l|ccc|ccc}
\toprule
\multirow{2}{*}{\textbf{Methods}} & \multicolumn{3}{c|}{\textbf{CIFAR-10 (32$\times$32)}} & \multicolumn{3}{c}{\textbf{CelebA-HQ (64$\times$64)}} \\
\cmidrule(lr){2-4} \cmidrule(lr){5-7}
& $\boldsymbol{\sigma}$ & \textbf{Type} & \textbf{FID} & $\boldsymbol{\sigma}$ & \textbf{Type} & \textbf{FID} \\
\midrule
DDPM~\cite{ho2020denoising}           & 0.0 & Full-Shot  & 4.04  & 0.0 & Full-Shot  & 3.26 \\
DDIM~\cite{song2021ddim}              & 0.0 & Full-Shot  & 4.16  & 0.0 & Full-Shot  & 6.53 \\
EDM~\cite{karras2022elucidating}      & 0.0 & Full-Shot  & \textbf{1.97} & -- & -- & -- \\ 
\midrule
SURE-Score~\cite{aali2023solving-sure}   & 0.2 & Few-Shot & 132.61 & -- & -- & -- \\
Ambient Diffusion~\cite{daras2023ambient} & 0.2 & Few-Shot  & 114.13 & -- & -- & -- \\
EM-Diffusion~\cite{weiminbai2024emdiffusion} & 0.2 & Few-Shot & 86.47  & -- & -- & -- \\
TweedieDiff~\cite{daras2024ambienttweedie} & 0.2 & Few-Shot & 65.21  & 0.2 & Few-Shot & 58.52 \\
SFBD~\cite{lu2025stochastic}             & 0.2 & Few-Shot & \textbf{13.53} & 0.2 & Few-Shot & \textbf{6.49} \\ 
\midrule
TweedieDiff~\cite{daras2024ambienttweedie}     & 0.2 & Zero-Shot  & 167.23 & 0.2 & Zero-Shot  & 246.95 \\
Teacher-Full~\cite{daras2025ambientscaling}    & 0.2 & Zero-Shot  & 60.73 & 0.2 & Zero-Shot  & 61.14  \\
Teacher-Truncated~\cite{daras2025ambientscaling} & 0.2 & Zero-Shot  & 12.21 & 0.2 & Zero-Shot  & 13.90 \\ 
Teacher-Consistency~\cite{daras2025ambientscaling} & 0.2 & Zero-Shot & 11.93 & 0.2 & Zero-Shot & 12.97 \\     
\textbf{RSD (Ours, One-Step)}         & 0.2 & Zero-Shot & \textbf{4.77} & 0.2 & Zero-Shot & \textbf{6.48} \\
\bottomrule
\end{tabular}}
\vspace{-0.2cm}
\end{table}

\begin{wraptable}{r}{0.6\textwidth}
\vspace{-8pt}
\centering
\caption{\textbf{RSD vs. teachers on CIFAR-10/FFHQ/AFHQ-v2.}
RSD (distilled) consistently surpasses teacher diffusion (\textit{Full, Truncated}) across datasets and noise levels. (FID).}
\label{tab:merged_results}
\vspace{4pt}
\resizebox{0.6\textwidth}{!}{
\begin{tabular}{l|ccc|c|c}
\toprule
\textbf{Methods} & \multicolumn{3}{c|}{\textbf{CIFAR-10}} & \textbf{FFHQ} & \textbf{AFHQ-v2} \\
\textbf{Data noise} & \(\sigma{=}0.1\) & \(\sigma{=}0.2\) & \(\sigma{=}0.4\) & \(\sigma{=}0.2\) & \(\sigma{=}0.2\) \\
\midrule
Observation & 73.74 & 127.22 & 205.52 & 110.83 & 51.51 \\
Teacher-Full & 25.55 & 60.73 & 124.28 & 41.52 & 17.93 \\
Teacher-Truncated & 7.55 & 12.21 & 22.12 & 14.67 & 9.82 \\
\midrule
\textbf{RSD (Distilled)} & \textbf{3.98} & \textbf{4.77} & \textbf{21.63} & \textbf{6.29} & \textbf{5.42} \\
\bottomrule
\end{tabular}}
\vspace{-8pt}
\end{wraptable}

We begin with the denoising special case \(y = x + \sigma \epsilon\) where the forward operator \(\mathcal{A}=I\).
We compare RSD against three groups of baselines (Table~\ref{tab:main_table}). 
\textbf{(1) Teacher diffusion models.} A teacher trained with the noisy corruption loss in Eq.~\ref{eq:diffusion_denoise} serves as a strong generative baseline. We adopt the two sampling schedules of~\cite{daras2024ambienttweedie}—\textbf{Teacher-Full} and \textbf{Teacher-Truncated} (Algorithm~\ref{alg:ambient_sampling})—and additionally evaluate the \textbf{Teacher-consistency} variant~\cite{daras2025ambientscaling}. There may exist better sampling schedules for denoising; however, to the best of our knowledge, the schedules we adopt achieve strongest performance reported in prior work.
\textbf{(2) Few-shot methods.} \textbf{EM-Diffusion}~\cite{weiminbai2024emdiffusion} alternates DPS-based reconstructions (E-step) with model refinement (M-step), while \textbf{SFBD}~\cite{lu2025stochastic} casts the problem as density deconvolution; both use 50 clean images for initialization. 
\textbf{(3) Clean-data diffusion (upper bound).} \textbf{DDPM}~\cite{ho2020denoising}, \textbf{DDIM}~\cite{song2021ddim}, and \textbf{EDM}~\cite{karras2022elucidating} are trained on clean data (\(\sigma{=}0\)) and serve as upper bounds for any method trained purely on corrupted observations.
Across CIFAR-10 and CelebA-HQ at \(\sigma{=}0.2\), RSD (one-step) outperforms all zero-shot and few-shot baselines and improves over its teachers (Table.~\ref{tab:main_table}). 
For FFHQ and AFHQ-v2 where prior few-shot results are not reported, RSD also surpasses teacher models (Table~\ref{tab:merged_results}). 
Finally, we sweep noise levels \(\sigma\in\{0.1,0.2,0.4\}\) on CIFAR-10 and observe consistent gains over teachers. Quality examples are provided in Appendix \ref{app:snapshot} and \ref{app:quality}.

\subsection{General corruptions mixed with noise}
\label{sec:general_corruptions}

\begin{table}[t!]
\centering
\caption{\textbf{CelebA-HQ restoration under noiseless/noisy settings.}
Baselines are taken from original papers when available, otherwise reproduced. 
EM-Diffusion uses 50 clean images for initialization, while RSD uses none. 
Best results are highlighted.}

\label{tab:celeba_results}
\resizebox{\textwidth}{!}{
\begin{tabular}{l|c|ccc}
\toprule
\textbf{Methods} & $\boldsymbol{\sigma}$ & \textbf{Gaussian Deblurring} & \textbf{Random Inpainting ($p=0.9$)} & \textbf{Super-Resolution ($\times$2)} \\
\midrule
Observation & \multirow{4}{*}{0.0} & 72.83 & 396.14 & 23.94 \\
\underline{Teacher Diffusion} & & \underline{94.40} & \underline{25.53} & \underline{23.28} \\
EM-Diffusion \textit{(Few-Shot)} & & 56.69 & 104.68 & 58.99 \\
\textbf{RSD (Ours, One-Step)} & & \textbf{31.90} & \textbf{16.86} & \textbf{12.99} \\
\midrule
Observation & \multirow{4}{*}{0.2} & 264.37 & 419.92 & 200.04 \\
\underline{Teacher Diffusion} & & \underline{99.19} & \underline{319.34} & \underline{23.92} \\
EM-Diffusion \textit{(Few-Shot)} & & \textbf{51.33} & 165.60 & 57.31 \\
\textbf{RSD (Ours, One-Step)} & & 76.98 & \textbf{79.48} & \textbf{22.00} \\
\bottomrule
\end{tabular}}
\vspace{-0.2cm}
\end{table}

We now move beyond denoising to the general measurement model \(y=\mathcal{A}(x)+\sigma\epsilon\), where \(\mathcal{A}\) may be non-invertible. 
Using CelebA-HQ as a running example, we study both noiseless (\(\sigma=0.0\)) and noisy (\(\sigma=0.2\)) regimes and instantiate \(\mathcal{A}\) as (i) Gaussian deblurring, (ii) random inpainting, and (iii) \(2\times\) super-resolution. 
Aggregate results across tasks and noise levels are reported in Table~\ref{tab:celeba_results}. 
For random inpainting, we further sweep the missing rate \(p \in \{0.6, 0.8, 0.9\}\); see Table~\ref{tab:inpainting_results}. 
Across settings, RSD—trained \emph{without} clean images—consistently improves over its teacher diffusion models and is competitive with, or exceeds, few-shot methods that rely on clean initialization. Qualitative examples are provided in Appendix \ref{app:snapshot} and \ref{app:quality}.

\begin{table}[t]
\centering
\begin{minipage}{0.45\textwidth}
\vspace{-0.4cm}
\centering
\caption{\textbf{CelebA-HQ random inpainting vs. missing rate \(p\).} 
Teacher diffusion is trained as in \cite{daras2023ambientdiffusion}.}
\label{tab:inpainting_results}
\resizebox{\textwidth}{!}{
\begin{tabular}{l|ccc}
\toprule
\textbf{Method} & \textbf{$p=0.6$} & \textbf{$p=0.8$} & \textbf{$p=0.9$} \\
\midrule
Observation       & 275.04 & 383.82 & 396.14 \\
Teacher Diffusion & 6.08   & 11.19  & 25.53 \\
\textbf{RSD}      & \textbf{4.44} & \textbf{7.10} & \textbf{16.86} \\
\bottomrule
\end{tabular}}
\vspace{-0.5cm}
\end{minipage}
\hfill
\begin{minipage}{0.50\textwidth}
\centering
\caption{\textbf{FID across acceleration levels \(R\) in multi-coil MRI.}
Teacher diffusion is trained as in \cite{aali2024ambient}.}
\label{tab:mri-fid}
\resizebox{\textwidth}{!}{
\begin{tabular}{l|cccc}
\toprule
\textbf{Method} & \textbf{$R=2$} & \textbf{$R=4$} & \textbf{$R=6$} & \textbf{$R=8$} \\
\midrule
L1-EDM & 18.55 & 27.64 & 51.43 & 102.98 \\
Teacher Diffusion & 30.34 & 32.31 & 31.50 & 48.15 \\
\textbf{RSD} & \textbf{12.95} & \textbf{10.71} & \textbf{14.64} & \textbf{22.51} \\
\bottomrule
\end{tabular}}
\end{minipage}
\vspace{-0.1cm}
\end{table}

\subsection{Beyond natural images: Multi-coil Magnetic Resonance Imaging (MRI)}
\label{sec:mri}

\begin{wraptable}{r}{0.6\textwidth}
\centering
\vspace{-10pt} 
\footnotesize  
\caption{Comparison of RSD with teacher diffusion and baselines on CIFAR-10. Metrics for DDPM and Rectified Flow are from [\citenum{liu2022rflow}].}
\label{tab:recall}
\begin{tabular}{l|cccccc}
\toprule
\textbf{Method} & \textbf{$\sigma$} & \textbf{FID$\downarrow$} & \textbf{IS$\uparrow$} & \textbf{Prec.$\uparrow$} & \textbf{Rec.$\uparrow$} & \textbf{KID$\downarrow$} \\
\midrule
{Teacher} & 0.2 & 12.21 & 8.31 & 0.59 & 0.41 & .0059 \\
{RSD} & 0.4 & 21.63 & 7.93 & 0.53 & 0.38 & .0127 \\
{RSD} & 0.2 & 4.77  & 9.16 & \textbf{0.65} & 0.56 & .0025 \\
{RSD} & 0.1 & 3.98  & 9.34 & 0.64 & \textbf{0.57} & .0015 \\
{DDPM} & 0.0 & 3.21  & 9.46 & N/A  & 0.57 & N/A \\
{RF (ODE)} & 0.0 & \textbf{2.58}  & \textbf{9.60} & N/A  & 0.57 & N/A \\
\bottomrule
\end{tabular}
\vspace{-10pt}
\end{wraptable}

We next apply RSD to a practical medical-imaging setting where fully sampled data are often unavailable due to time and cost constraints~\cite{knoll2020fastmri, zbontar2018fastmri, tibrewala2023fastmri-prostate, desai2022skm}. 
This case study demonstrates: (i) the flexibility of RSD to integrate with advanced corruption-aware diffusion techniques, (ii) robustness across acceleration factors \(R\), and (iii) an extension from real- to complex-valued signals, \(x \in \mathbb{C}^d\).
Table~\ref{tab:mri-fid} reports FID versus acceleration \(R \in \{2,4,6,8\}\) comparing RSD, a teacher trained via \textbf{Fourier-Space Ambient Diffusion}~\cite{aali2025ambient-mri}, and a baseline \textbf{L1-EDM} that trains EDM~\cite{karras2022elucidating} on L1–Wavelet reconstructions~\cite{lustig2007sparse}. 
Across all acceleration levels, RSD improves over the teacher; notably, it also outperforms L1-EDM at low acceleration (\(R{=}2\)), suggesting that distillation-based regularization is more effective than handcrafted L1 priors in the wavelet domain.
Qualitative examples are shown in Fig.~\ref{fig:qual_mri}, with full algorithmic details in Appendix~\ref{app:ambient_mri} and ~\ref{app:distillation}.

\begin{figure}[t]
    \centering
    \includegraphics[width=\linewidth]{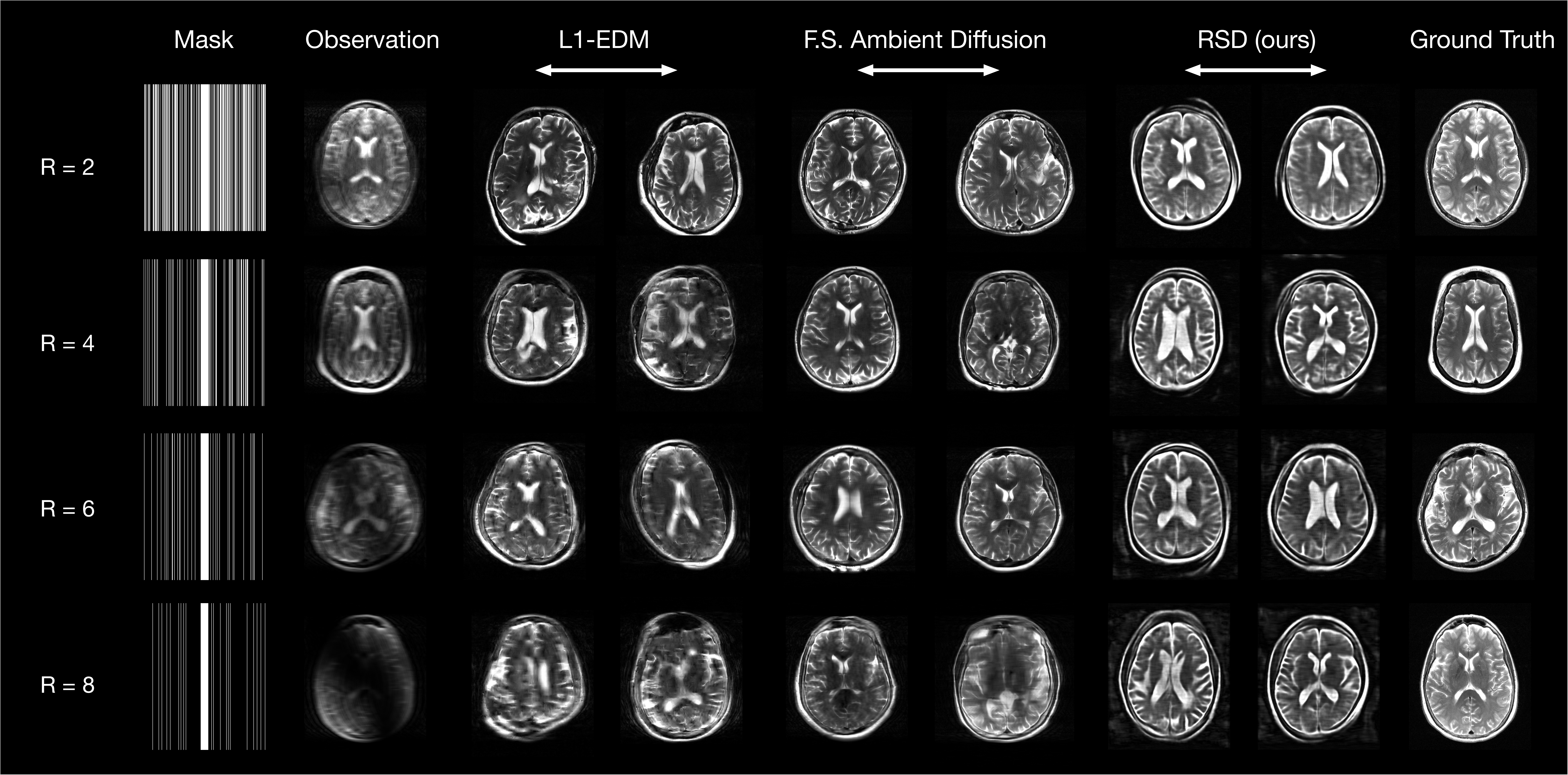}
    \caption{\textbf{Multi-coil MRI:} qualitative reconstructions from \textbf{RSD} across acceleration levels \(R\). See Tab.~\ref{tab:mri-fid} for corresponding FID trends.}
    \label{fig:qual_mri}
    \vspace{-5mm}
\end{figure}

\subsection{Ablations: Data Scale, Distillation Loss Choice, and More}
\label{sec:ablations}


\begin{wraptable}{r}{0.5\textwidth}
\centering
\vspace{-0.4cm}
\captionsetup{type=table}
\caption{\textbf{Data-size ablation on CelebA-HQ.}}
\label{tab:data_size_results}
\resizebox{0.5\textwidth}{!}{
\begin{tabular}{l|c|c|c}
\toprule
\textbf{Data Size} & \textbf{Teacher-Full} & \textbf{Teacher-Trunc.} & \textbf{RSD} \\
\midrule
$10\%$  & 62.25 & 14.36 & \textbf{10.53} \\
$50\%$  & 56.09 & 17.19 & \textbf{9.76}  \\
$100\%$ & 61.14 & 13.90 & \textbf{6.48}  \\
\bottomrule
\end{tabular}}
\vspace{-0.4cm}
\end{wraptable}

\paragraph{Data scale.}
We next vary the training set size on CelebA-HQ to test data efficiency. RSD maintains strong performance even with $10\%$ of the data and improves as more data become available, outperforming both Teacher-Full and Teacher-Truncated across settings. The results are shown in Table \ref{tab:data_size_results}.

\paragraph{Comprehensive data quality metrics.} 
While FID is the primary metric for assessing data quality in image generation, we incorporate additional metrics for a more holistic evaluation, as detailed in Table~\ref{tab:recall}. The results demonstrate that our distilled RSD generator achieves both superior density modeling (low FID) and robust mode coverage (high Recall), even when compared to generative models trained on clean datasets.

\paragraph{Choice of distillation loss.}
We compare several popular distillation \emph{losses} used to compress diffusion teachers into one-step generators, including SDS~\cite{poole2022dreamfusion}, DMD~\cite{yin2024one}, and SiD~\cite{zhou2024score}. Unless otherwise noted, we use the \emph{default} hyperparameters from the original papers (and public repos) without task-specific tuning. Under these settings, SiD consistently yields the strongest FID across our datasets, while SDS and DMD underperform. We report the full per-dataset and per-$\sigma$ breakdown in Table~\ref{tab:loss_ablation} (Appendix~\ref{app:distill_abalate}). We emphasize that we did not perform a hyperparameter sweep; consequently, better-tuned configurations of SDS or DMD may close the gap to SiD. Empirically, we find SiD’s default hyperparameters at different setting is a stable choice in our RSD pipeline.

More ablations can be found in the Sec. \ref{app:exp} of the Appendix.

\begin{wrapfigure}{r}{0.3\textwidth}
\centering
\vspace{-0.5cm}
\captionsetup{font=footnotesize, labelfont=footnotesize} 
\includegraphics[width=\linewidth]{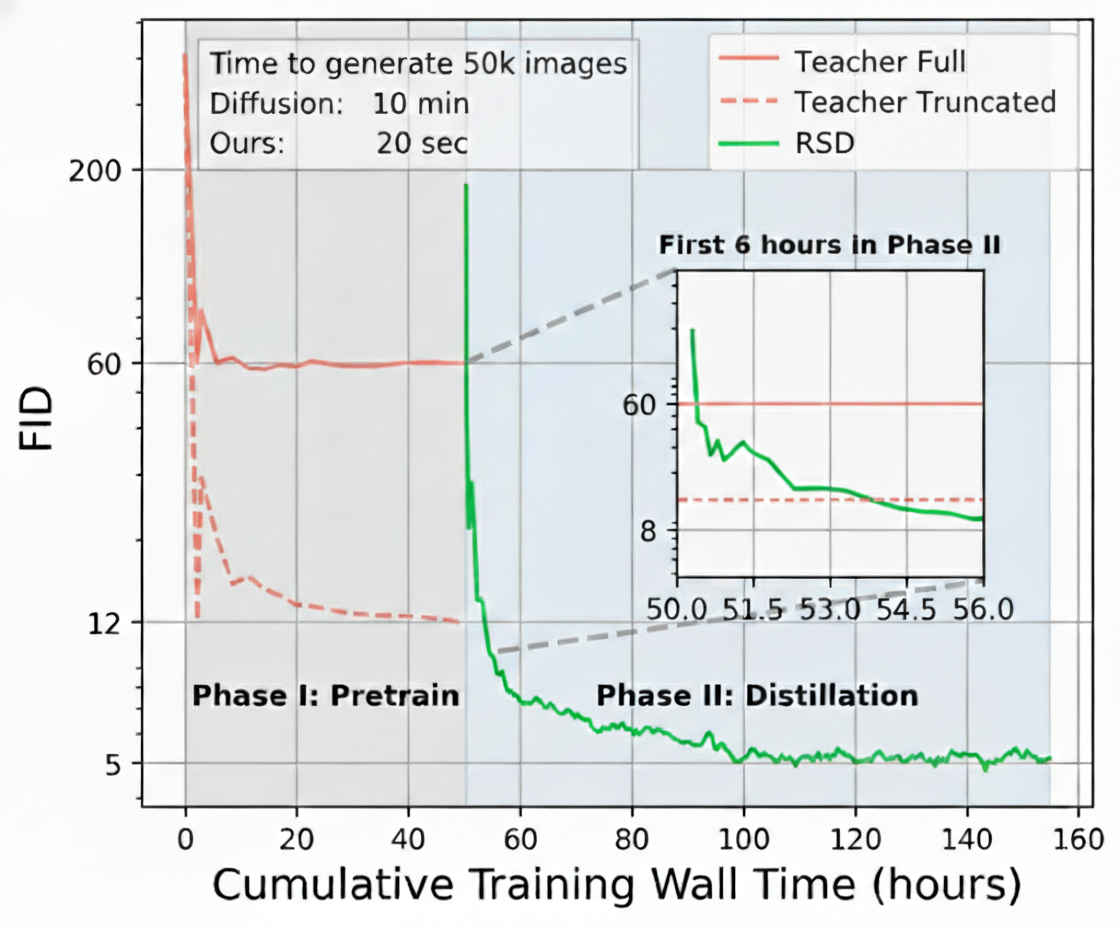}
\caption{\textbf{Efficiency of RSD (CIFAR-10, $\sigma=0.2$).}
}
\vspace{-0.5cm}
\label{fig:efficiency}
\end{wrapfigure}

\subsection{Sampling and Training Efficiency for Distillation}
\label{sec:efficiency}

After distillation, {RSD} attains markedly higher inference throughput than its diffusion teacher. On CIFAR-10 at $\sigma=0.2$, producing $50{,}000$ samples drops from \textbf{10 minutes} (teacher) to \textbf{20 seconds} (generator), a \textbf{$\sim$30$\times$} reduction in wall-clock time.  Training is likewise efficient: the \emph{Phase~II} distillation stage surpasses the teacher’s FID in roughly \textbf{4 hours}, compared to \textbf{48 hours} for \emph{Phase~I} teacher pretraining. Thus, once the teacher is available, running RSD adds only a minor computational budget yet yields substantial quality and speed gains. Figure~\ref{fig:efficiency} summarizes both sampling and training efficiency. All wall-clock times were measured on 8$\times$ RTX~A6000 (32\,GB). Additional quantitative statistics and efficiency examples are provided in Appendix~\ref{app:efficiency} and Table~\ref{tab:efficiency}.

\begin{wraptable}{l}{0.5\linewidth}
\vspace{-0.5cm} 
\centering
\caption{Conditional inverse problem results of denoising on CIFAR10 at $\sigma=0.2$. Results for the baselines are taken from [\citenum{weiminbai2024emdiffusion}]. We follow [\citenum{weiminbai2024emdiffusion}] and sample 250 test images and compute the average PSNR and LPIPS. }
\resizebox{ 0.5\textwidth}{!}{\begin{tabular}{lccc }
\toprule
\textbf{Method} & \textbf{Type} & \textbf{PSNR$\uparrow$} & \textbf{LPIPS$\downarrow$} \\
\midrule
Observations         &      & 18.05 & 0.047  \\
DPS w/ clean prior [\citenum{chung2022diffusion}]   & Full-Shot    & 25.91 & 0.010  \\ \midrule
SURE-Score [\citenum{aali2023solving-sure}] &  Few-Shot   & 22.42 & 0.138  \\
AmbientDiffusion [\citenum{daras2023ambient}] &  Few-Shot  & 21.37 & 0.033   \\
EM-Diffusion [\citenum{weiminbai2024emdiffusion}]  &   Few-Shot        &  {23.16} &  {0.022}  \\ \midrule
Noise2Self [\citenum{batson2019noise2self}] & Zero-Shot   & 21.32 & 0.227   \\ 
RSD (ours) & Zero-Shot   &   \textbf{24.11} &       \textbf{0.025}        \\ 
\bottomrule
\end{tabular}}
\label{tab:cifar10_denoising}
 \end{wraptable}
 
\subsection{After Distillation: Solving Conditional Inverse Problems}
\label{sec:inverse}

Because our framework yields a high-quality clean image generator—which naturally serves as a prior in conditional inverse solvers~\cite{chung2022diffusion, zhang2024flow, zhu2024think, zhang2025learning, hertrich2025learningbook}—a direct extension is to evaluate its utility on downstream inverse problems. We report a denoising task performance in Table~\ref{tab:cifar10_denoising}, where RSD substantially outperforms prior  methods and achieves performance comparable to few-shot approaches such as EM-Diffusion. In our implementation, we solve
$
\min_{z}\ \big\| \mathcal{A}\!\left(G_\theta(z)\right) - y \big\|_2^2
$
for 1000 steps using Adam~\cite{kingma2014adam} with a learning rate of \(0.05\).
Exploring alternative strategies for inverse problems with one-step generators is an exciting direction. Additional results are provided in Appendix~\ref{app:more_inverse}.

\subsection{Model Selection Criterion under Corruption: Proximal FID}

Prior baselines \cite{daras2023ambient, weiminbai2024emdiffusion} report FID scores yet do not specify how to perform \emph{model selection} under corruption during training. To address this gap, we introduce \emph{Proximal FID}, a model-selection metric tailored to corrupted-data regimes. Concretely, we generate \(50\text{k}\) clean samples \(\{x^{(i)}\}_{i=1}^{50\text{k}}\) from the current generator, corrupt them to match the training noise and operator—yielding
$ 
\big\{\mathcal{A}\!\left(x^{(i)}\right) + \sigma\,\epsilon^{(i)}\big\}_{i=1}^{50\text{k}},
$
and compute FID against the corrupted training set \(\{y^{(i)}\}_{i=1}^n\).
As shown on CIFAR-10 in Fig.~\ref{fig:proximal_fid} and   Tab.~\ref{tab:proximal}, Proximal FID tracks the true FID closely throughout distillation and attains near-optimal \emph{true} FID across datasets. More results can be found in the Appendix.    Taken together, these results support Proximal FID as a practical and reliable proxy for model selection when clean data are unavailable.

\begin{minipage}{0.48\textwidth}
    \centering
    \includegraphics[width=0.6\textwidth]{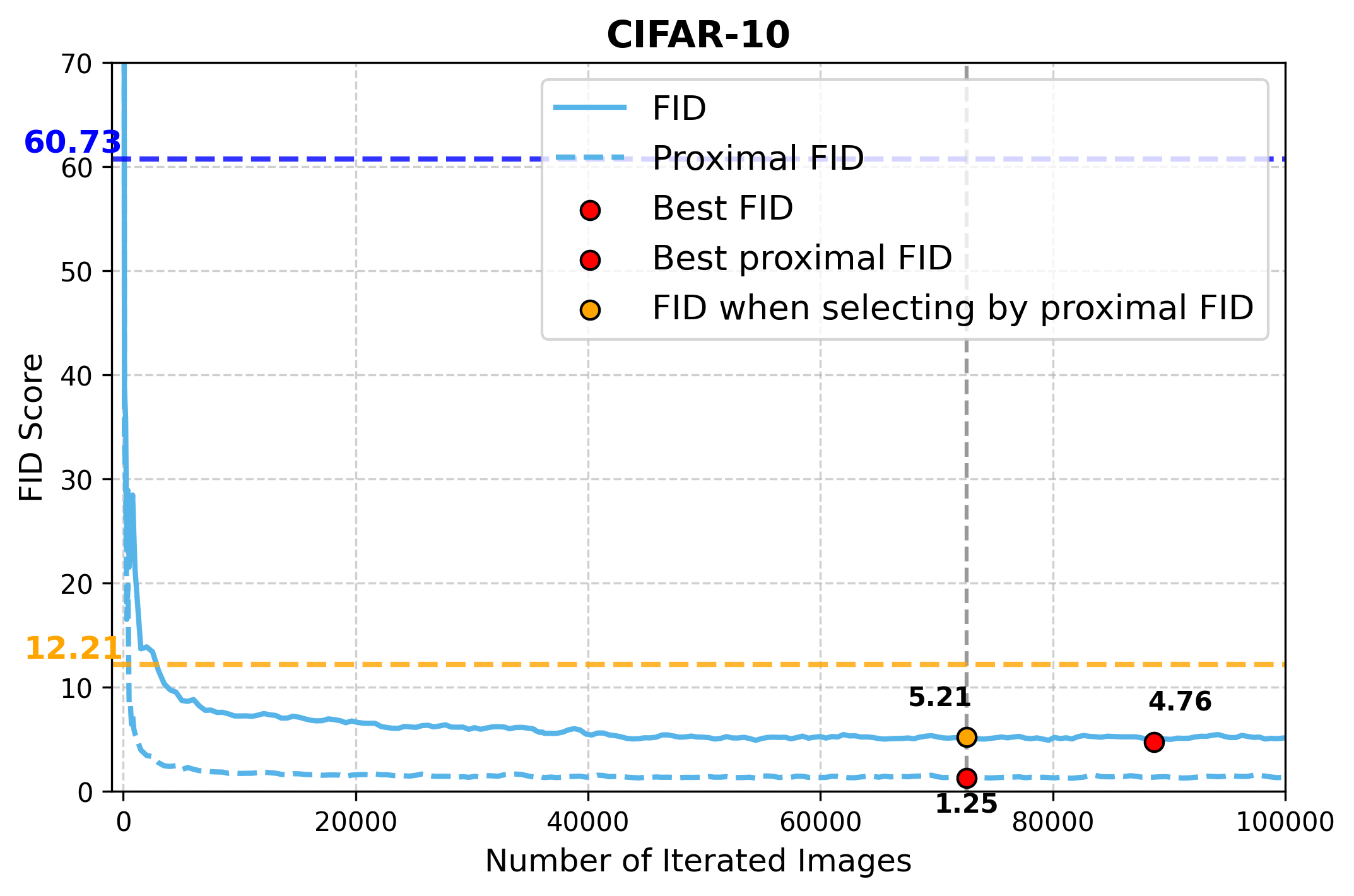}
    \captionof{figure}{\textbf{CIFAR-10 (denoising, \(\sigma=0.2\)):} 
    Proximal FID
    closely tracks true FID and selects a near-optimal checkpoint.
    }
    \label{fig:proximal_fid}
\end{minipage}\hfill
\begin{minipage}{0.48\textwidth}
    \centering
    \captionof{table}{\textbf{Model selection by Proximal FID.} Best true FID obtained by selecting checkpoints via Proximal FID vs.\ selecting via true FID (oracle). The red \((+\cdot)\) indicates the gap to the best achievable true FID during training.}
    \label{tab:proximal}
    \resizebox{1\textwidth}{!}{
    \begin{tabular}{l|cc}
        \toprule
        \textbf{Dataset} & \textbf{Proximal FID (selected)} & \textbf{Best true FID (oracle)} \\
        \midrule
        CIFAR-10          & 5.21 \textcolor{red}{(+0.45)} & 4.76 \\
        FFHQ              & 6.12 \textcolor{red}{(+0.04)} & 6.08 \\
        CelebA-HQ         & 6.90 \textcolor{red}{(+0.54)} & 6.36 \\
        AFHQ-v2           & 5.45 \textcolor{red}{(+0.06)} & 5.39 \\
        \bottomrule
    \end{tabular}}
\end{minipage}

\subsection{Why Score Distillation Beyond Acceleration?}

Score distillation is traditionally viewed as a means of \emph{accelerating} sampling by compressing a multi-step diffusion process into a single forward pass. However, our paper aims to  reveal a more fundamental \textbf{conceptual shift}: In corrupted or low-quality data regimes, score distillation serves as \textit{a principled mechanism} for improving the sample quality of teacher diffusion models.


\paragraph{{Intuitive Understanding:}} Diffusion models are trained using the ELBO objective for maximum likelihood, which minimizes the \emph{forward} KL divergence, a mode-covering objective. As a result, teacher diffusions are incentivized to place probability mass on \emph{all} plausible locations suggested by noisy data, even if these regions do not correspond to the true clean distribution. 

In contrast, the \emph{reverse} {distillation objective} (Eq.~\ref{eq:distillation}) takes its expectation over the \emph{student generator’s} distribution, which naturally induces \emph{mode-seeking} behavior: the generator does not need to cover all regions where the teacher assigns non-zero mass. Instead, it is encouraged to place probability only where it will actually sample from. Importantly, we stress that \emph{encouraging mode seeking does not imply mode collapse}. This is also evidenced by the FID and Recall in Tab. \ref{tab:recall}, qualitative results in Fig.~\ref{fig:toy}, and extensive qualitative examples in Appendix \ref{app:quality}.
This change in objective allows the one-step generator to concentrate its probability mass on the \emph{high-density, well-supported regions} implied by the teacher’s score field, while discarding diffuse or low-density regions that the teacher includes.

\paragraph{Theoretical Support:}
Detailed analysis is provided in the appendix. Sec.~\ref{appx:linear-results} first focuses on a linear Gaussian setting, providing a direct quantitative analysis and error bounds (Eq.~\ref{eq:w2less}). These findings are extended to general linear corruption in Sec.~\ref{appx:linear-results-extensions}. Sec.~\ref{appx:general-results} derives performance bounds for the distilled generator under a more general setting. We posit that a deeper theoretical analysis of the surprising effects of score distillation under corruption is a promising research direction.

\section{Conclusion}

In this work, we introduced Restoration Score Distillation (RSD) 
to learn clean data distribution from a broad class of corruption types. 
Our empirical results on natural images and scientific MRI datasets show consistent improvements over existing baselines.
Moreover, beyond standard diffusion objectives, the RSD framework is compatible with several corruption-aware training techniques, enabling flexible integration with recent advances in diffusion modeling.
Together, our contributions highlight the potential of score distillation
as a powerful mechanism for robust generative learning in real-world settings where clean data are scarce or unavailable. A detailed discussion of limitations is provided in Appendix \ref{app:discuss}.

\section*{Ethics Statement}

This work develops methods for training generative models from corrupted data without requiring access to clean ground truth. The primary applications we target are scientific and medical imaging domains where clean acquisitions are expensive or infeasible. Our framework does not involve human subjects, personal data, or harmful content, and thus poses minimal ethical risks. Nevertheless, as with all generative models, there is potential for misuse in creating synthetic content; to mitigate this, we emphasize the intended use of our approach in scientific and restoration contexts.

\section*{Reproducibility Statement}

We provide complete algorithmic and training details in the Appendix. To ensure reproducibility, we will release all code and model checkpoints upon acceptance of this manuscript.

\section*{Acknowledgments}
Ying Nian Wu was supported in part by NSF DMS-2415226, DARPA W912CG25CA007, and research gifts from Amazon and Qualcomm. Tianyu Chen and Mingyuan Zhou acknowledge the support of NSF-IIS 2212418 and NIH-R37 CA271186. We gratefully acknowledge Microsoft for providing the computing resources used in this project.

\bibliographystyle{abbrvnat}
\bibliography{main}

\clearpage
\newpage
\appendix

\startcontents[append]
\printcontents[append]{l}{0}{\setcounter{tocdepth}{2}}

\section{Qualitative Snapshots of Generated Results}
\label{app:snapshot}
In this section, we present visual examples highlighting the quality of outputs produced by our model across various tasks and corruption settings. A full version of qualitative examples are in Appendix \ref{app:quality}.

\begin{figure}[h]
    \centering
\includegraphics[width=1\linewidth]{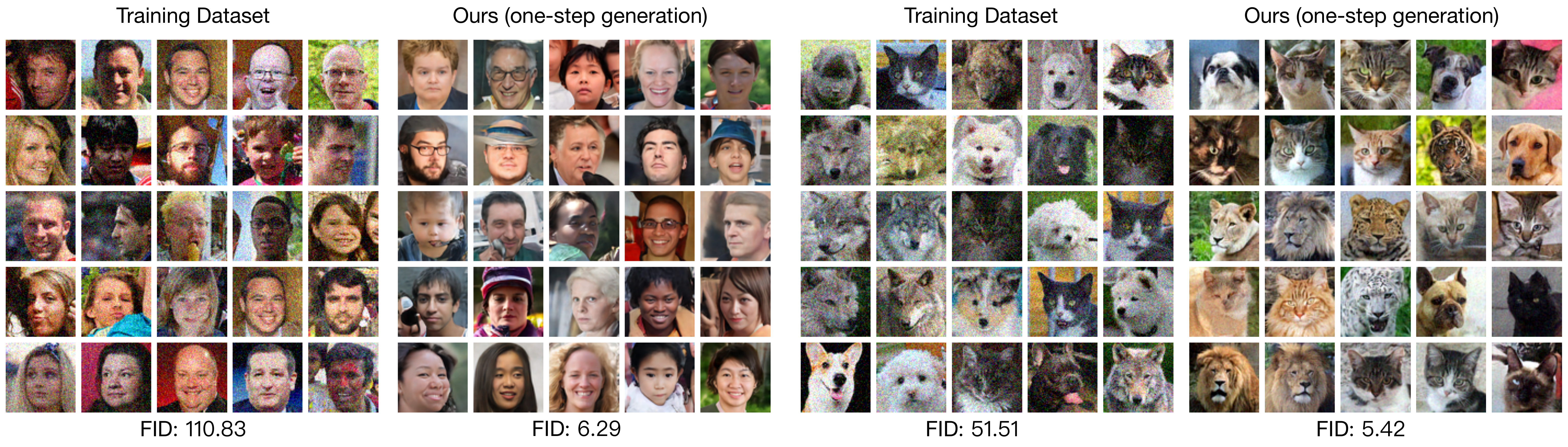}
    \caption{Qualitative results for the Denoising task. Each pair shows the corrupted input and the generation output from our RSD at $\sigma=0.2$.  The left two panels are from FFHQ, while the right two are from AFHQ-v2.}
    \label{fig:qual_results_add}
\end{figure}

\begin{figure}[H]
\vspace{-0.3cm}
    \centering
    \begin{subfigure}[t]{0.23\linewidth}
        \centering
        \includegraphics[width=\linewidth]{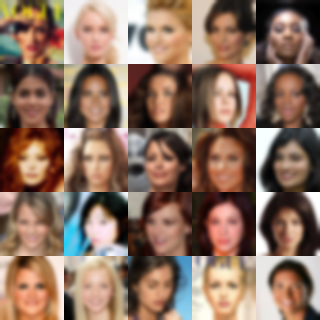}
        \caption{Dataset ($\sigma = 0$)}
    \end{subfigure}
    \hfill
    \begin{subfigure}[t]{0.23\linewidth}
        \centering
        \includegraphics[width=\linewidth]{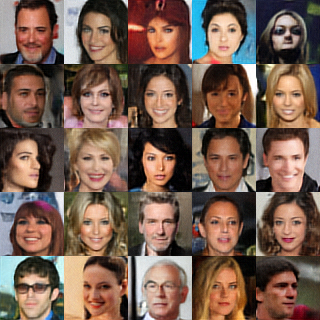}
        \caption{RSD (ours)}
    \end{subfigure}
    \hfill
    \begin{subfigure}[t]{0.23\linewidth}
        \centering
        \includegraphics[width=\linewidth]{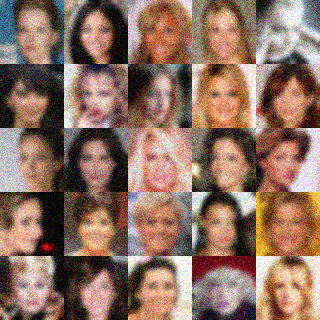}
        \caption{Dataset ($\sigma = 0.2$)}
    \end{subfigure}
    \hfill
    \begin{subfigure}[t]{0.23\linewidth}
        \centering
        \includegraphics[width=\linewidth]{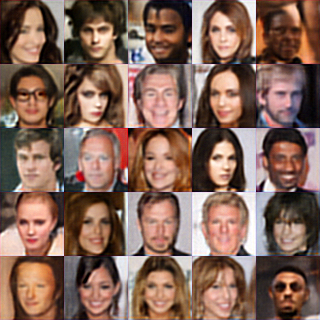}
        \caption{RSD (ours)}
    \end{subfigure}
    
    \caption{Qualitative results for the Gaussian blur task. Each pair shows the corrupted input and the generation output from our RSD.}
    \label{fig:sid_gaussian_four_column}
\end{figure}

\begin{figure}[H]
\vspace{-0.5cm}
    \centering
    \begin{subfigure}[t]{0.23\linewidth}
        \centering
        \includegraphics[width=\linewidth]{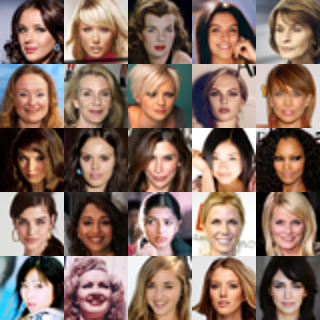}
        \caption{Dataset ($\sigma = 0$)}
    \end{subfigure}
    \hfill
    \begin{subfigure}[t]{0.23\linewidth}
        \centering
        \includegraphics[width=\linewidth]{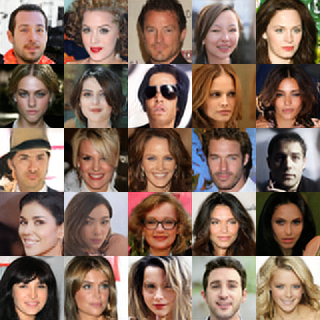}
        \caption{RSD (ours)}
    \end{subfigure}
    \hfill
    \begin{subfigure}[t]{0.23\linewidth}
        \centering
        \includegraphics[width=\linewidth]{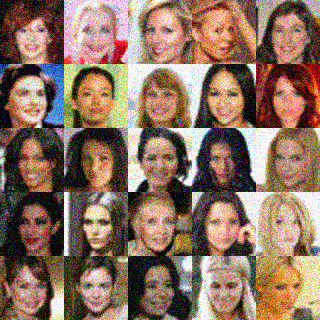}
        \caption{Dataset ($\sigma = 0.2$)}
    \end{subfigure}
    \hfill
    \begin{subfigure}[t]{0.23\linewidth}
        \centering
        \includegraphics[width=\linewidth]{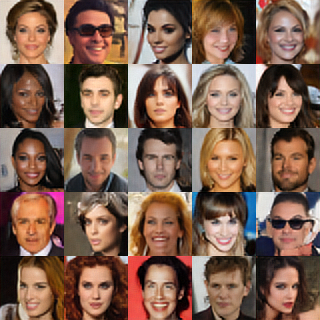}
        \caption{RSD (ours)}
    \end{subfigure}
    
    \caption{Qualitative results for the Super Resolution task. Each pair shows the corrupted input and the generation output from our RSD.}
    \label{fig:sid_sr_four_column}
\end{figure}

\begin{figure}[H]
\vspace{-0.5cm}
    \centering
    \begin{subfigure}[t]{0.23\linewidth}
        \centering
        \includegraphics[width=\linewidth]{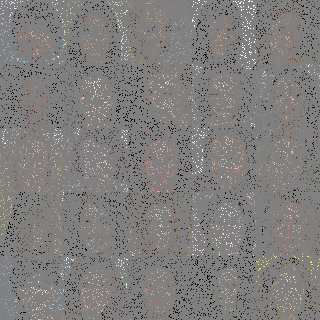}
        \caption{Dataset ($\sigma = 0$)}
    \end{subfigure}
    \hfill
    \begin{subfigure}[t]{0.23\linewidth}
        \centering
        \includegraphics[width=\linewidth]{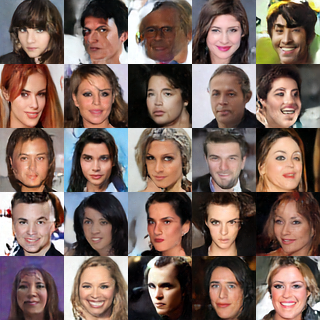}
        \caption{RSD (ours)}
    \end{subfigure}
    \hfill
    \begin{subfigure}[t]{0.23\linewidth}
        \centering
        \includegraphics[width=\linewidth]{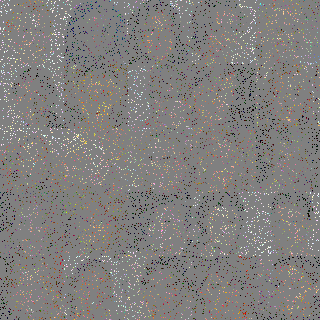}
        \caption{Dataset ($\sigma = 0.2$)}
    \end{subfigure}
    \hfill
    \begin{subfigure}[t]{0.23\linewidth}
        \centering
        \includegraphics[width=\linewidth]{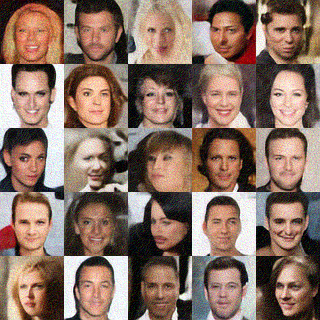}
        \caption{RSD (ours)}
    \end{subfigure}
    
    \caption{Qualitative results for the Random Inpainting task with missing probability $(p=0.9)$. Each pair shows the corrupted input and the generation output from our RSD.}
    \label{fig:sid_inpainting09_four_column}
\end{figure}

\begin{figure}[H]
\vspace{-0.5cm}
    \centering
    \begin{subfigure}[t]{0.23\linewidth}
        \centering
        \includegraphics[width=\linewidth]{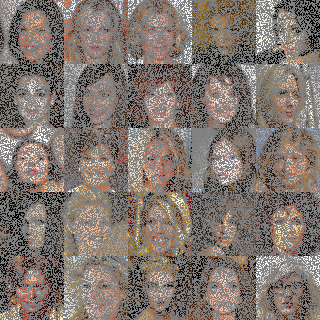}
        \caption{Dataset ($p=0.6$)}
    \end{subfigure}
    \hfill
    \begin{subfigure}[t]{0.23\linewidth}
        \centering
        \includegraphics[width=\linewidth]{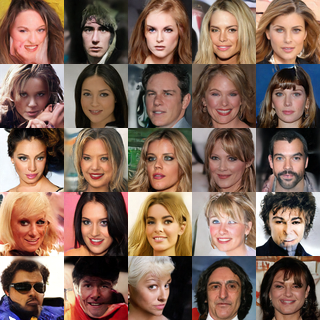}
        \caption{RSD (ours)}
    \end{subfigure}
    \hfill
    \begin{subfigure}[t]{0.23\linewidth}
        \centering
        \includegraphics[width=\linewidth]{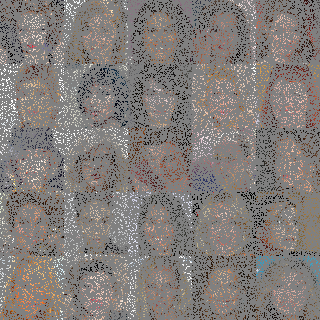}
        \caption{Dataset ($p=0.8$)}
    \end{subfigure}
    \hfill
    \begin{subfigure}[t]{0.23\linewidth}
        \centering
        \includegraphics[width=\linewidth]{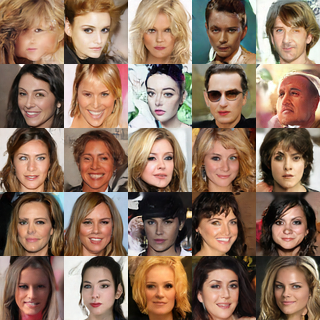}
        \caption{RSD (ours)}
    \end{subfigure}
    
    \caption{Qualitative results for the Random Inpainting task with different missing probability $(p=0.6, p=0.8)$. Each pair shows the corrupted input and the generation output from our RSD.}
    \label{fig:sid_inpainting_four_column}
\end{figure}

\section{Proximal FID}
\label{sec:proximal_fid}

When only corrupted data are available, model selection must proceed without clean references. In standard image generation, FID is the de facto criterion, but in our setting computing FID against clean ground-truth images is infeasible.

Prior baselines—Ambient Diffusion~\cite{daras2023ambient} and EM-Diffusion~\cite{weiminbai2024emdiffusion}—report FID scores yet do not specify how to perform \emph{model selection} under corruption during training. To address this gap, we introduce \emph{Proximal FID}, a model-selection metric tailored to corrupted-data regimes. Concretely, we generate \(50\text{k}\) clean samples \(\{x^{(i)}\}_{i=1}^{50\text{k}}\) from the current generator, corrupt them to match the training noise and operator—yielding
\[
\big\{\mathcal{A}\!\left(x^{(i)}\right) + \sigma\,\epsilon^{(i)}\big\}_{i=1}^{50\text{k}},
\]
and compute FID against the corrupted training set \(\{y^{(i)}\}_{i=1}^n\).
As shown on CIFAR-10 in Fig.~\ref{fig:proximal_fid}, Proximal FID tracks the true FID closely throughout distillation. Quantitatively, Table~\ref{tab:proximal} shows that the model chosen by Proximal FID attains near-optimal \emph{true} FID across datasets (e.g., \(6.12\) vs.\ best \(6.08\) on FFHQ) for denoising task, and Table~\ref{tab:proximal_fid_full} for general corruption task. We further visualize the dynamics on FFHQ, CelebA-HQ, and AFHQ-v2 in Fig.~\ref{fig:proximal_three}, and extend the analysis to multiple corruption operators in Fig.~\ref{fig:proximal_full}. Taken together, these results support Proximal FID as a practical and reliable proxy for model selection when clean data are unavailable.

\begin{figure}[htbp]
    \centering
    \begin{subfigure}{0.32\textwidth}
        \centering
        \includegraphics[width=\linewidth]{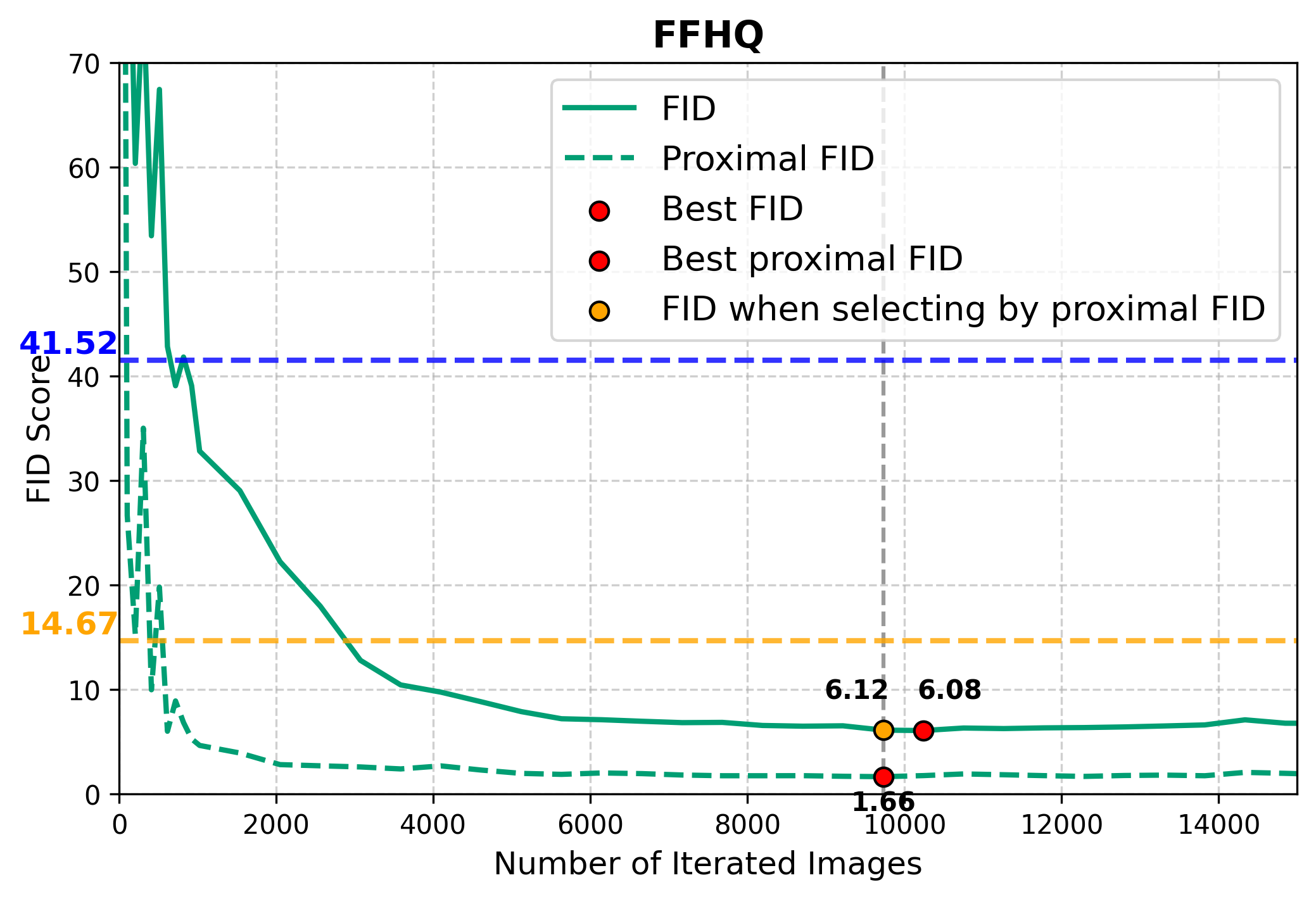}
        \caption{FFHQ (\(\sigma=0.2\))}
    \end{subfigure}\hfill
    \begin{subfigure}{0.32\textwidth}
        \centering
        \includegraphics[width=\linewidth]{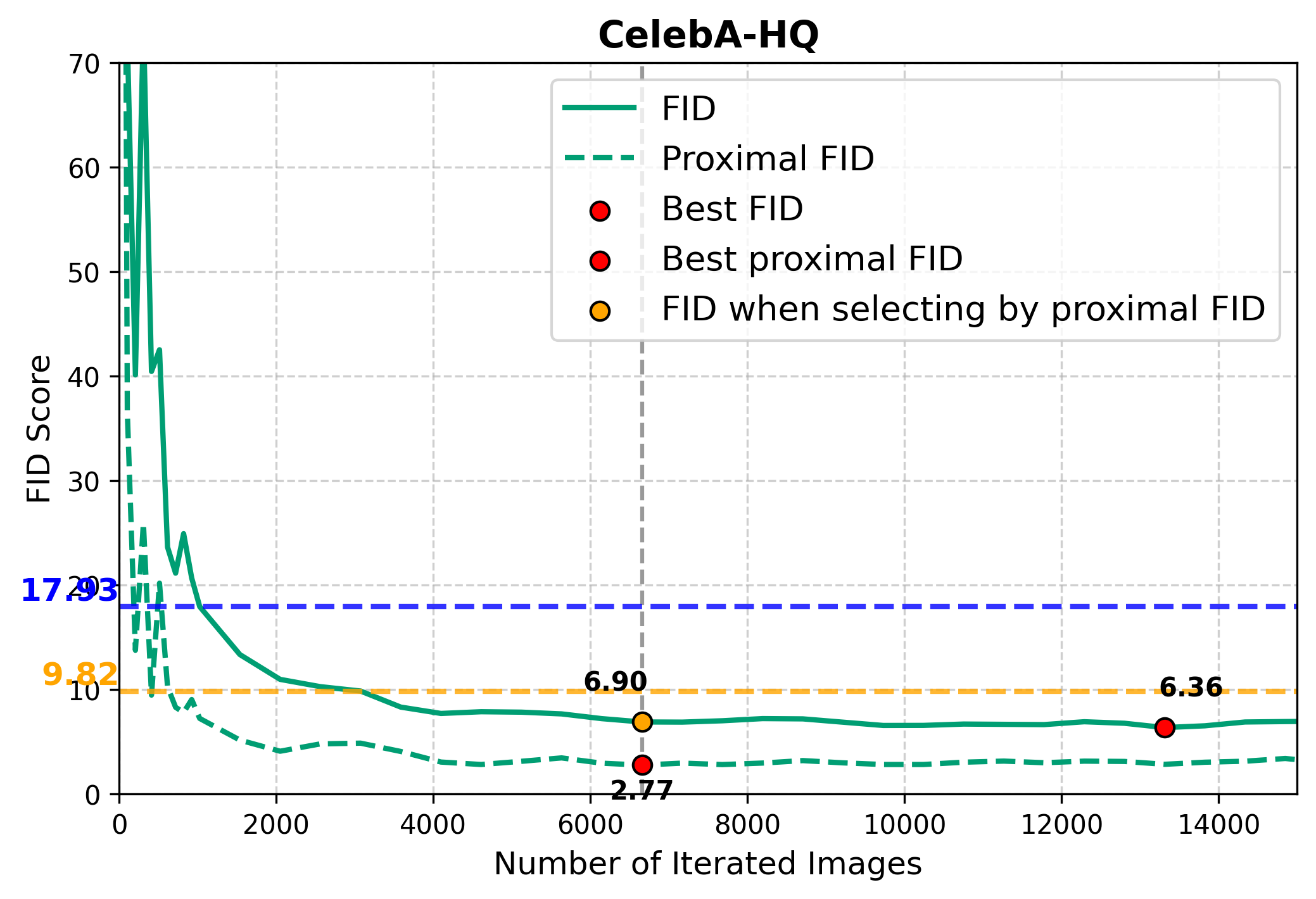}
        \caption{CelebA-HQ (\(\sigma=0.2\))}
    \end{subfigure}\hfill
    \begin{subfigure}{0.32\textwidth}
        \centering
        \includegraphics[width=\linewidth]{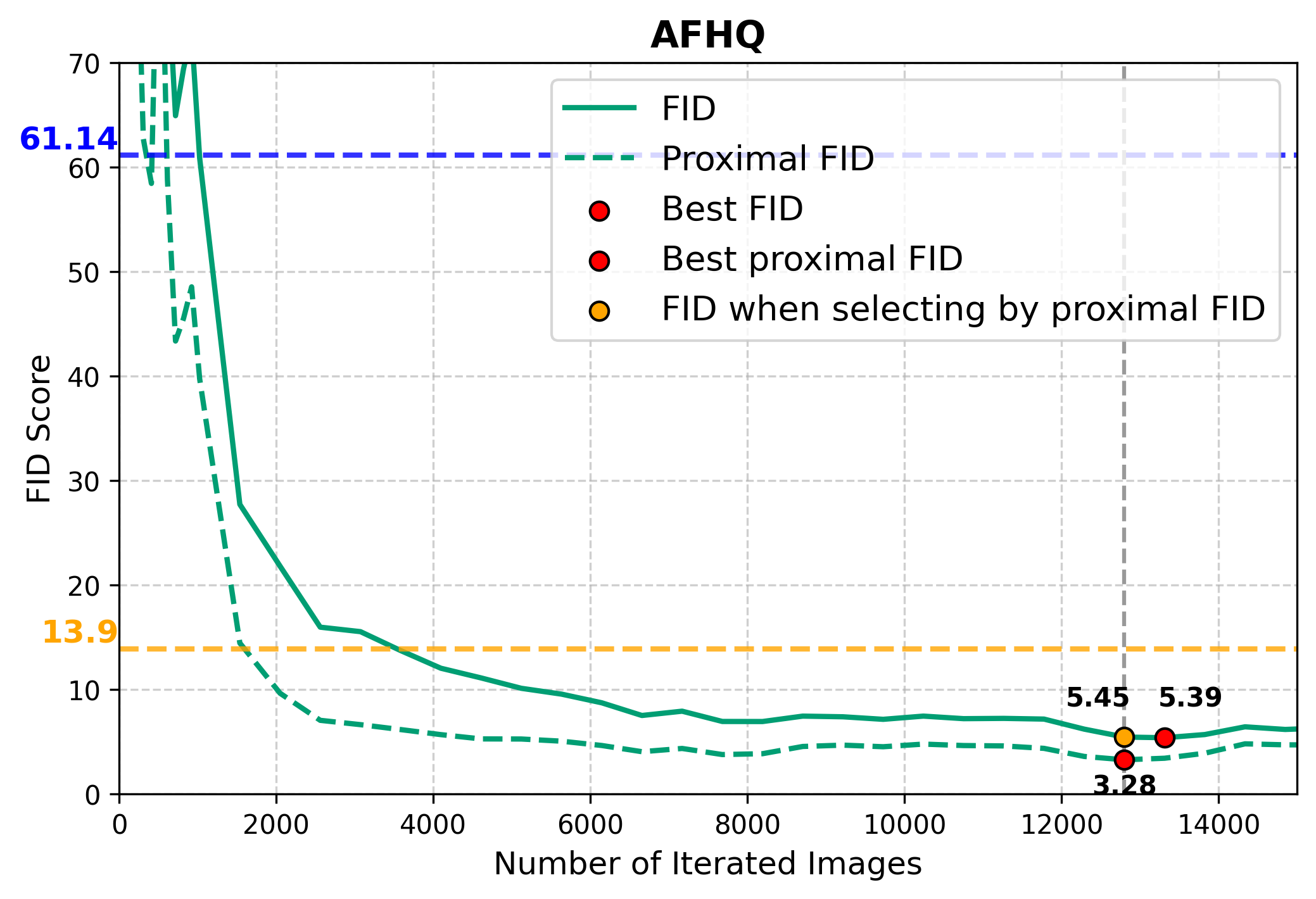}
        \caption{AFHQ-v2 (\(\sigma=0.2\))}
    \end{subfigure}
    \caption{\textbf{Across datasets:} True FID vs.\ \emph{Proximal FID} during distillation on FFHQ, CelebA-HQ, and AFHQ-v2 (denoising, \(\sigma=0.2\)). In all cases, Proximal FID reliably identifies checkpoints with near-optimal true FID.}
    \label{fig:proximal_three}
\end{figure}

\begin{figure}[htbp]
    \centering
    \begin{subfigure}[b]{0.32\linewidth}
        \centering
        \includegraphics[width=\linewidth]{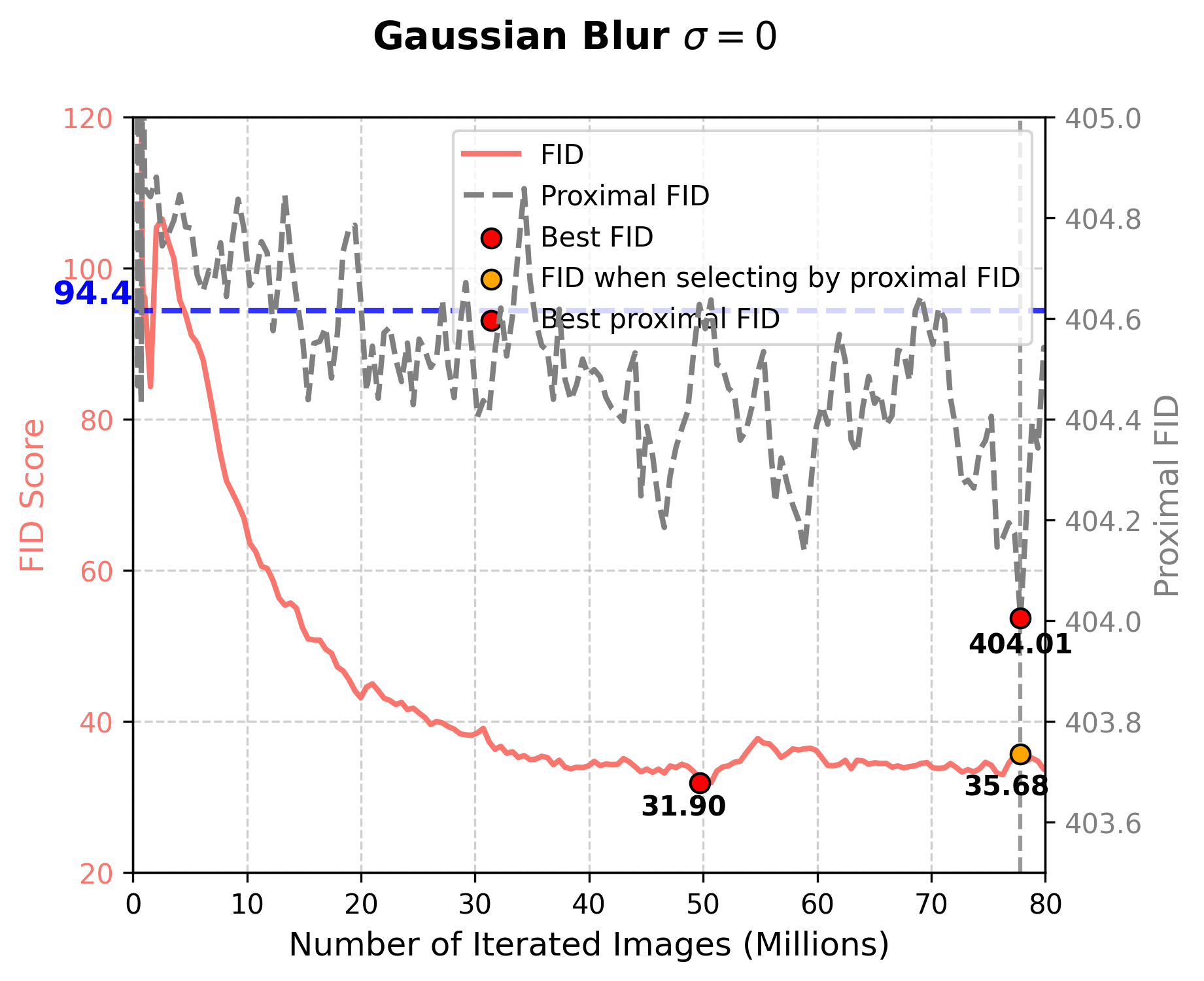}
        \caption{Gaussian deblurring (\(\sigma=0.0\))}
        \label{fig:proximal_gaussian}
    \end{subfigure}\hfill
    \begin{subfigure}[b]{0.32\linewidth}
        \centering
        \includegraphics[width=\linewidth]{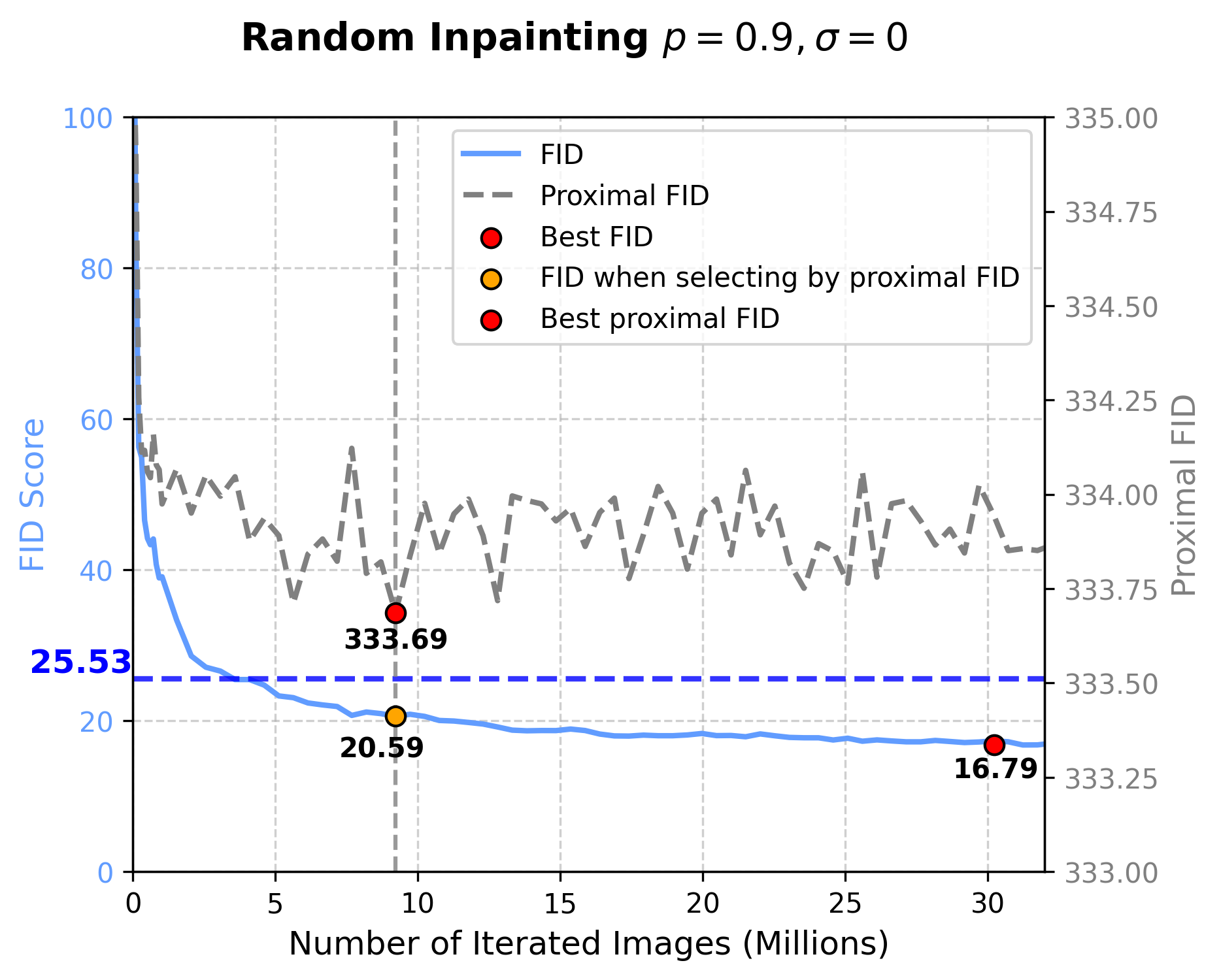}
        \caption{Random inpainting }
        \label{fig:proximal_inpainting}
    \end{subfigure}\hfill
    \begin{subfigure}[b]{0.32\linewidth}
        \centering
        \includegraphics[width=\linewidth]{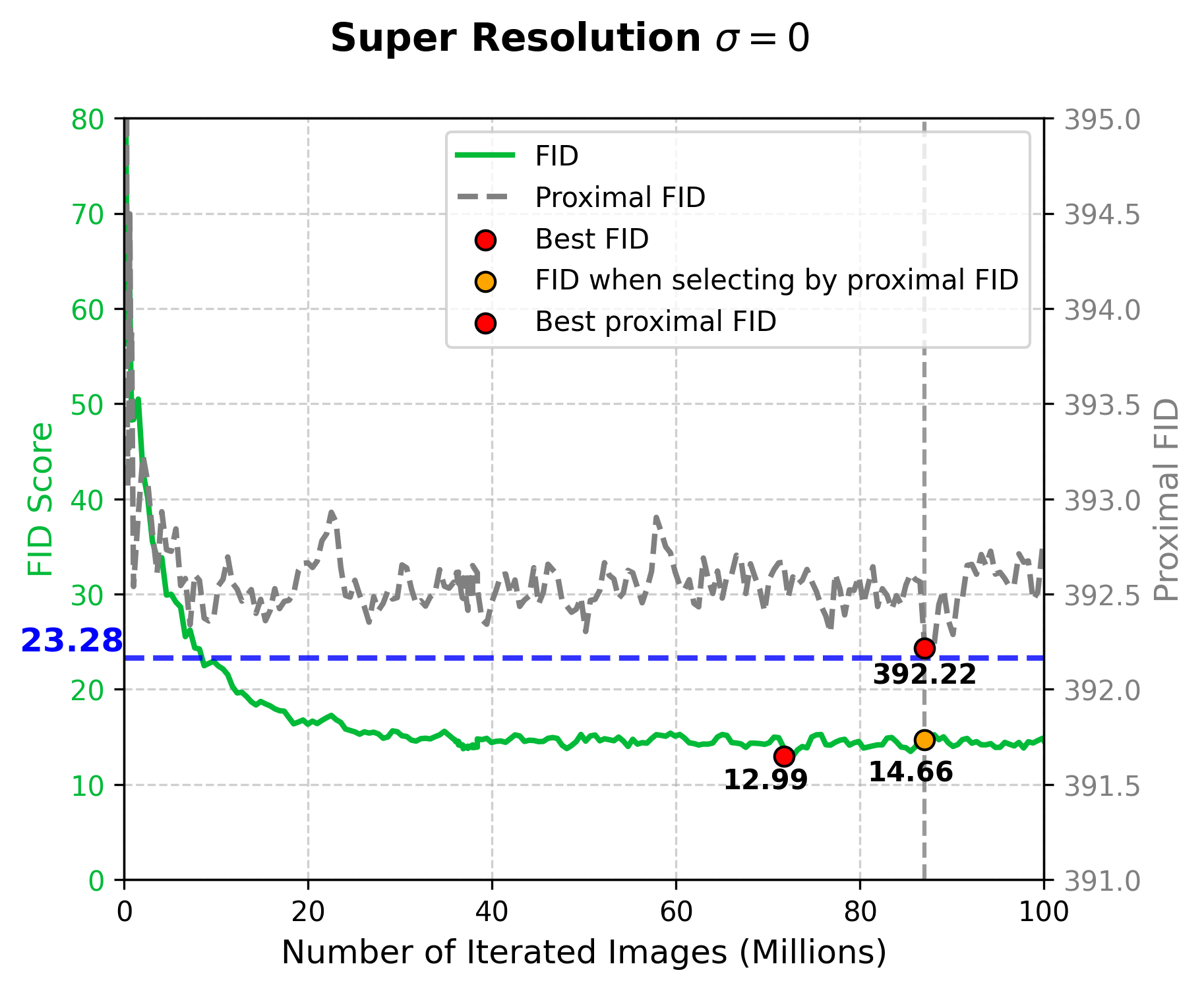}
        \caption{Super-resolution (\(\times 2,\ \sigma=0.0\))}
        \label{fig:proximal_sr}
    \end{subfigure}
    \caption{\textbf{Across corruption operators:} True FID vs.\ \emph{Proximal FID} during training for Gaussian deblurring, random inpainting, and super-resolution. Proximal FID consistently tracks the true FID and supports reliable model selection across operators.}
    \label{fig:proximal_full}
\end{figure}

\begin{table}[!t]
    \centering
    \caption{Comparison between the best true FID and the FID of the model selected by proximal FID for the general corruption task.}
    \label{tab:proximal_fid_full}
    {
    \begin{tabular}{c|l|cc}
    \toprule
    \textbf{Noise ($\sigma$)} & \textbf{Task} & Best FID & FID Selected by Proximal FID \\ \midrule
    \multirow{3}{*}{$\sigma = 0.0$} 
        & Gaussian Deblurring & 31.90 & 35.68 \\
        & Random Inpainting ($p = 0.9$) & 16.79 & 20.59 \\
        & Super Resolution ($\times 2$) & 12.99 & 14.66 \\
    \midrule
    \multirow{3}{*}{$\sigma = 0.2$} 
        & Gaussian Deblurring & 76.98 & 88.29 \\
        & Random Inpainting ($p = 0.9$) & 79.48 & 83.98 \\
        & Super Resolution ($\times 2$) & 22.00 & 27.42 \\
    \bottomrule
    \end{tabular}
    }
\end{table}

\section{Training and Inference Efficiency}
\label{app:efficiency}

We use the denoising setting with \(\sigma=0.2\) as a running example to quantify efficiency. Our approach improves not only accuracy but also end-to-end efficiency in both training and inference. All experiments were conducted on a Linux system with 8\(\times\)NVIDIA RTX A6000 GPUs unless otherwise stated.

\paragraph{Training.}
The additional distillation phase introduces only a minor overhead: the FID of the one-step generator rapidly decreases and \emph{surpasses} the Teacher (Teacher-Truncated) within \textbf{4 hours}. Representative wall-clock times across datasets are summarized in Table~\ref{tab:efficiency}. During distillation, we employ early stopping when the validation FID begins to diverge.

\paragraph{Inference.}
Our one-step generator produces \(50\text{k}\) images in \(\sim\)20\,s on 4\(\times\)NVIDIA RTX A6000 GPUs with batch size \(1024\), compared to \(\sim\)10\,min for the diffusion teacher—yielding a \textbf{\(30\times\)} speedup. Inference wall-clock measurements are reported in the rightmost columns of Table~\ref{tab:efficiency}.

\begin{table}[!t]
    \centering
   \caption{\textbf{Training and inference efficiency of our method.}  During training, the additional distillation phase introduces only a minor overhead, as FID decreases rapidly and surpasses the teacher diffusion model, {Teacher-Truncated}, within just {4 hours}.  
For inference, our one-step generator enables the generation of 50k images in only 20 seconds, achieving a  {30$\times$ speedup}.}
    \resizebox{\textwidth}{!}{%
    \begin{tabular}{l|c|ccc|cc}
    \toprule
        \multirow{2}{*}{\textbf{Datasets}} & \multirow{2}{*}{\textbf{Pretraining Time}} & \multicolumn{3}{c}{\textbf{Distillation Time to Achieve the Same FID as}} & \multicolumn{2}{c}{\textbf{Time to Generate 50k Images}} \\ \cmidrule(lr){3-7}
        & & \textbf{Teacher-Full} & \textbf{Teacher-Truncated} & \textbf{Best} & \textbf{Diffusion} & \textbf{RSD} \\ \midrule
        CIFAR-10 & \multirow{4}{*}{\textasciitilde{}2 days} & \ \ \ \ 7 minutes & \ \ \ \textasciitilde{}3 hours & \ \textasciitilde{}3 days & 10 minutes & 20 seconds \\\cmidrule(lr){6-7}
        FFHQ &  & \ \  56 minutes &\ \ \  \textasciitilde{}3 hours & \ \ \textasciitilde{}9 hours & \multirow{3}{*}{15 minutes} & \multirow{3}{*}{30 seconds} \\
        CelebA-HQ & & \ \  34 minutes & \ \ \ \textasciitilde{}2 hours & \textasciitilde{}13 hours & & \\
        AFHQ-v2 &  &\ \ 80 minutes & \ \ \ \textasciitilde{}3 hours & \textasciitilde{}13 hours & & \\
    \bottomrule
    \end{tabular}}
    \label{tab:efficiency}
\end{table}

\section{Additional Experiments}\label{app:exp}

\subsection{More Metrics}

\begin{table}[h!]
\centering
\caption{ {Comparison of RSD with teacher diffusion and baselines on CIFAR-10. 
*Metrics for DDPM and Rectified Flow (ODE) are copied from the Rectified Flow~\cite{liu2022rflow} paper's Table~1.}
}
\resizebox{\textwidth}{!}{
\begin{tabular}{l|ccccccc}
\toprule
\textbf{Method} 
& \textbf{$\sigma$  } 
& \textbf{FID ($\downarrow$)} 
& \textbf{IS ($\uparrow$)} 
& \textbf{Precision ($\uparrow$)} 
& \textbf{Recall ($\uparrow$)} 
& \textbf{KID ($\downarrow$)} \\
\midrule
\textbf{Teacher} 
& 0.2 
& 12.21 
& 8.312 
& 0.595 
& 0.416 
& 0.00593 \\
\textbf{RSD } 
& 0.4  
& 21.63 
& 7.934 
& 0.536 
& 0.384 
& 0.01270 \\
\textbf{RSD} 
& 0.2 
& 4.77 
& 9.165 
& \textbf{0.650 }
& 0.564 
& 0.00252 \\
\textbf{RSD} 
& 0.1  
& 3.98 
& 9.346 
& 0.643 
& \textbf{0.578} 
& \textbf{0.00157} \\
\textbf{DDPM} 
&0.0  
& 3.21 
& 9.46 
& N/A 
& 0.57 
& N/A \\
\textbf{Rectified Flow (ODE)} 
& 0.0 
& 2.58 
& 9.60 
& N/A 
& 0.57 
& N/A \\
\bottomrule
\end{tabular}
}
\label{tab:recall-detailed}
\end{table}

The detailed results show that our distilled RSD generator achieves both strong density modeling (low FID) and robust mode coverage (high Recall).

\begin{figure}[h] 
\centering 
 
\includegraphics[width=0.35\linewidth]{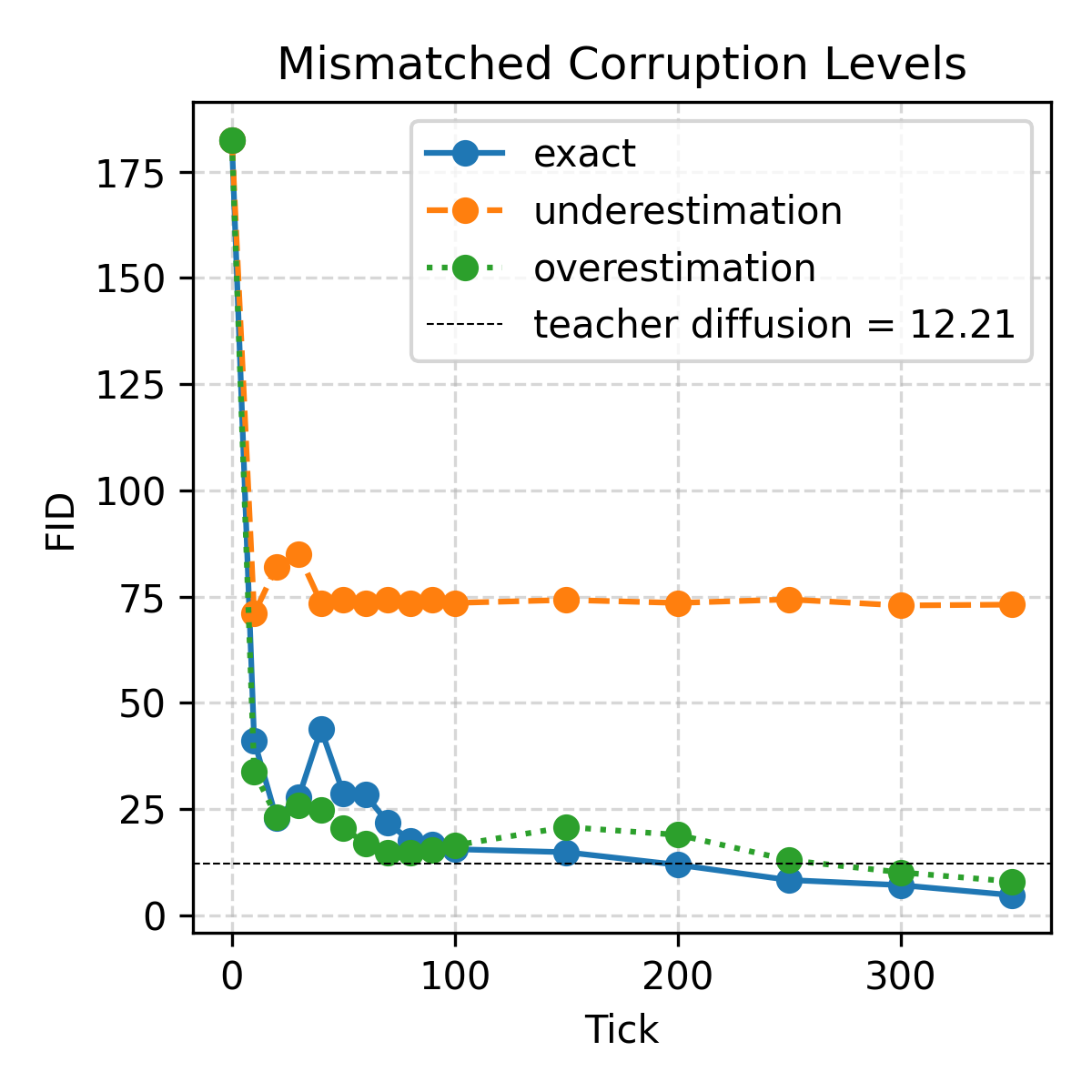} 
\caption{ {\textbf{Mismatched Corruption Levels during Distillation.} 
Our method remains robust even when the assumed corruption level is mismatched during distillation.}}
 
\label{fig:mismatch}
\end{figure}

\subsection{Unknown $\sigma$.}
When $\sigma$ is unavailable at training time, we adopt a simple strategy: (1) estimate per-image noise level using the off-the-shelf estimator implemented via \texttt{skimage.restoration.estimate\_sigma} \cite{donoho1994ideal}; (2) for reliability, select the \emph{maximum} estimated $\hat\sigma$ over a small calibration set. Concretely, we sample $100$ noisy CIFAR-10 images, compute $\hat\sigma$ per image across $100$ independent trials, and use the trial-wise maximum as the working $\hat\sigma$. As shown in Table~\ref{tab:unkown_sigma}, this deliberately \emph{slightly overestimates} the noise, a regime where RSD remains stable (see experiments in Table~\ref{tab:cifar_unkown_sigma}) and, in line with blind denoising results~\cite{zhang2018ffdnet,zhang2017beyond,dabov2007image}, helps avoid under-regularization.

\begin{table}[h!]
\centering
 
\begin{minipage}{0.3\textwidth}
\centering
\caption{\textbf{Unknown noise: true $\sigma$ vs.\ $\hat\sigma$ on CIFAR-10.}}
\label{tab:unkown_sigma}
\resizebox{0.85\textwidth}{!}{
\begin{tabular}{c|c|c}
\toprule
\textbf{$\sigma$} & $\hat\sigma$ & \textbf{95\% CI} \\
\midrule
0.05 & 0.07 & [0.073, 0.074] \\
0.10 & 0.12 & [0.119, 0.121] \\
0.20 & 0.22 & [0.215, 0.216] \\
0.30 & 0.31 & [0.311, 0.313] \\
\bottomrule
\end{tabular}}
\end{minipage}
\hfill
\begin{minipage}{0.65\textwidth}
\centering
\caption{\textbf{Misspecification study on CIFAR-10.}
RSD is robust to slight overestimation when the true $\sigma=0.2$.}
\label{tab:cifar_unkown_sigma}
\resizebox{0.95\textwidth}{!}{
\begin{tabular}{l|c|c|c}
\toprule
 & \textbf{Teacher-Full} & \textbf{Teacher-Trunc.} & \textbf{RSD} \\
\midrule
$\hat\sigma=0.15$ (under) & 80.60 & 49.78 & 103.55 \\
$\hat\sigma=0.20$ (true)  & 60.73 & 12.21 & \textbf{4.77} \\
$\hat\sigma=0.25$ (over)  & 42.99 & 88.11 & \textbf{16.07} \\
\bottomrule
\end{tabular}}
\end{minipage}
 
\end{table}

\paragraph{Effect of misspecified \(\sigma\).}
To isolate the impact of noise misspecification, we fix the ground-truth level at \(\sigma=0.2\) on CIFAR-10 and evaluate three settings: (i) \emph{underestimation}, \(\hat{\sigma}=0.15\); (ii) \emph{correct}, \(\hat{\sigma}=0.20\); and (iii) \emph{overestimation}, \(\hat{\sigma}=0.25\). The same \(\hat{\sigma}\) is used consistently in both Phase~I pretraining (noisy corruption objective) and Phase~II distillation. As shown in Table~\ref{tab:cifar_unkown_sigma} and Fig. \ref{fig:mismatch}, RSD attains its best accuracy under correct specification and remains competitive under mild overestimation, whereas underestimation is substantially more harmful—echoing observations in blind denoising~\cite{dabov2007image,zhang2018ffdnet, zhang2017beyond}. Thus, when \(\sigma\) is unknown, slight overestimation is a robust practical choice, consistent with the strategy shown in Table~\ref{tab:unkown_sigma}. A complementary 2D toy example in Appendix~\ref{app:toy_sigma} corroborates this conclusion. We further include an ablation where the noise level is drawn from a distribution, i.e., $\sigma \sim p(\sigma)$. Our method remains robust under this setting as well; see Appendix~\ref{app:random_sigma} for details.

\subsection{More Conditional Inverse Problem Results}
\label{app:more_inverse}

Our primary goal is to learn a strong generative \emph{prior} solely from corrupted data. Once such a prior is obtained via RSD, it is agnostic to the forward operator, enabling conditional generation under arbitrary measurement models. A natural approach with our one-step generator is to solve
\begin{equation}
\label{eq:cond_opt}
\min_{z}\ \big\| \mathcal{A}\!\left(G_\theta(z)\right) - y \big\|_2^2,
\end{equation}
which enforces data consistency through the measurement process.

\paragraph{Setup.}
Beyond the denoising task discussed in Section~\ref{sec:ablations}, we consider an additional conditional inverse problem. Specifically, we use the generator trained on the $\times 2$ super-resolution task and evaluate it in a conditional inverse setting where $\cA$ corresponds to a $\times 2$ down-sampling operator. We conduct experiments on 100 CelebA-HQ images and compare RSD against EM-Diffusion (few-shot) and the Teacher Diffusion prior. For a fair comparison, RSD optimizes Eq.~\ref{eq:cond_opt} for 1000 steps using Adam with a learning rate of 0.1, while both EM-Diffusion and Teacher Diffusion adopt DPS~\cite{chung2022diffusion}, a classical solver for diffusion priors, with 1000 diffusion sampling steps. LPIPS is computed using AlexNet.

\paragraph{Results.}
Table~\ref{tab:restoration_results} reports PSNR, SSIM, LPIPS, and the prior’s FID. RSD attains the best performance across all metrics (e.g., PSNR \(=27.803\), LPIPS \(=0.047\)), and its learned prior achieves a strong FID of \(12.99\), highlighting the benefit of high-quality priors for conditional generation.

\begin{table}[t]
\centering
\caption{\textbf{\(\times 2\) super-resolution on CelebA-HQ (100 images).} 
We compare a pseudo-inverse baseline \((\mathcal{A}^\dagger y)\), EM-Diffusion (few-shot) with DPS, Teacher Diffusion with DPS, and our RSD prior with latent optimization (Eq.~\ref{eq:cond_opt}). Higher PSNR/SSIM and lower LPIPS/FID are better.}
\label{tab:restoration_results}
\begin{tabular}{l|cccc}
\toprule
\textbf{Method} & \textbf{PSNR $\uparrow$} & \textbf{SSIM $\uparrow$} & \textbf{LPIPS $\downarrow$} & \textbf{Prior FID $\downarrow$} \\
\midrule
\(\mathcal{A}^\dagger y\)              & 24.065 & 0.816 & 0.125 & 23.94 \\
EM-Diffusion (few-shot, DPS)           & 25.678 & 0.839 & 0.056 & 58.99 \\
Teacher Diffusion (DPS)                & 23.810 & 0.778 & 0.056 & 23.28 \\
\textbf{RSD (ours, latent opt.)}       & \textbf{27.803} & \textbf{0.909} & \textbf{0.047} & \textbf{12.99} \\
\bottomrule
\end{tabular}
\end{table}

\subsection{Random Noise Ablation}
\label{app:random_sigma}
 We conducted an additional experiment in a more challenging setting where each sample is corrupted with a different noise level. Specifically, we adopt the corruption model:

$$ y = x + \sigma \epsilon, \quad \sigma \sim p(\sigma) $$

where $p(\sigma) = \text{Uniform}[0.15, 0.25]$. This naturally introduces variability across samples both in terms of noise level and corruption behavior.

To accommodate this setup, we made a minimal modification to Algorithm \ref{alg:main_dcd}—replacing Lines 6 from the fixed-noise setting ($\tilde{y} = x_g + \sigma \epsilon$) to the sample-dependent corruption ($\tilde{y} = x_g + \sigma \epsilon, , \sigma \sim p(\sigma)$). With this simple change, RSD can be applied directly to heterogeneous corruption settings.

We tested this on the CIFAR-10 dataset, and the results are reported below in Table~\ref{tab:changed_sigma}.

\begin{table}[t]
\centering
\caption{\textbf{Comparison of FID scores.} Lower is better.}
\label{tab:changed_sigma}
\begin{tabular}{l|c}
\toprule
\textbf{Method} & \textbf{FID} \\
\midrule
Teacher-Full & 50.29 \\
Teacher-Truncated ($\sigma = 0.25$) & 16.21 \\
Teacher-Truncated ($\sigma = 0.15$) & 11.80 \\
\textbf{RSD (ours)} & \textbf{5.67} \\
\bottomrule
\end{tabular}
\end{table}

\subsection{Choice of Distillation Loss}
\label{app:distill_abalate}

In our experiments (Sec.~\ref{sec:experiment}), the distillation phase adopts the SiD loss (Eq.~\ref{eq:distillation}) by default. Other distillation objectives are also applicable—e.g., KL-based variants such as SDS~\cite{poole2022dreamfusion}, DMD~\cite{yin2024one} (also referred to as Diff-Instruct~\cite{luo2023diff} or VSD~\cite{wang2024prolificdreamer}), and SiD~\cite{zhou2024score}. For completeness, we report their generator-level results below and defer implementation details (e.g., time-scheduling and hyperparameters) to the original papers.

We ablate this design choice on the CIFAR-10 denoising task; results are summarized in Table~\ref{tab:loss_ablation}.

\begin{table}[t]
    \centering
    \caption{\textbf{Ablation of distillation losses on CIFAR-10 denoising} (\(\sigma \in \{0.1, 0.2, 0.4\}\); FID)}
    \label{tab:loss_ablation}
    \resizebox{0.5\textwidth}{!}{
    \begin{tabular}{l|ccc}
        \toprule
        \textbf{Method} & \(\boldsymbol{\sigma=0.1}\) & \(\boldsymbol{\sigma=0.2}\) & \(\boldsymbol{\sigma=0.4}\) \\
        \midrule
        SDS  & $> 200$ & $> 200$ & $> 200$ \\
        DMD  & $12.52$ {\small$\pm 0.04$} & $7.48$ {\small$\pm 0.06$} & $30.09$ {\small$\pm 0.23$} \\
        SiD  & \textbf{3.98} {\small$\pm 0.04$} & \textbf{4.77} {\small$\pm 0.03$} & \textbf{21.63} {\small$\pm 0.03$} \\
        \bottomrule
    \end{tabular}}
\end{table}

\paragraph{Notes.}
All distillation variants use the \emph{default} hyperparameters from their official repositories; we did not tune hyperparameters. The relatively weaker performance of D-SDS and D-DMD in Table~\ref{tab:loss_ablation} may therefore reflect suboptimal default settings for the corrupted-data regime rather than intrinsic limitations of the losses. Empirically, SiD is robust under defaults and already yields strong results, hence our choice to use SiD for all main experiments (Sec.~\ref{sec:experiment}). Importantly, the distillation stage in our framework is \emph{modular} and can be replaced by more advanced objectives; future work may close the gap—or even surpass SiD—via principled hyperparameter tuning and improved losses.

\section{Discussions and Limitations}
\label{app:discuss}

\paragraph{Solving inverse problems.}
We demonstrate how to use the distilled generator as a learned prior for inverse problems in Section~\ref{sec:inverse} and Appendix~\ref{app:more_inverse}. Concretely, given measurements \(y = \mathcal{A}(x) + \text{noise}\), we recover a plausible \(x\) by optimizing over the latent \(z\) so that the synthesized sample matches the observations, e.g.,
\[
\min_{z} \;\; \big\|\mathcal{A}\!\left(G_\theta(z)\right) - y\big\|_2^2 \quad \text{(optionally with regularization or priors on \(z\)).}
\]
A broader treatment—including principled conditioning, data-consistency guidance, and plug-and-play/score-based solvers—is a promising direction \cite{chung2022diffusion, zhang2024flow, zhu2024think, zhang2025learning, chen2024enhancing}, with applications in scientific and engineering domains where reconstructing clean signals from measurements is critical.

\paragraph{Applications in scientific discovery.}  
Our approach is particularly well-suited for scientific discovery, where clean observational data are often scarce or fundamentally unobtainable. Extending our method to datasets across diverse scientific domains is a promising avenue for future research. For instance, ground-truth black hole images are inherently unobservable, yet large collections of corrupted telescope measurements are available, as demonstrated by the Event Horizon Telescope (EHT) observations~\cite{akiyama2019first}. EHT relies on Very Long Baseline Interferometry (VLBI), where the measurement process is modeled as a 2D Fourier transform of the sky brightness distribution. Specifically, the forward model can be expressed as
\[
V(u, v) = \iint I(x, y)e^{-2\pi i (ux + vy)}dxdy,
\]
as in Eq.~2 of~\cite{akiyama2019first}, or in practice as
\[
V^t_{(a,b)} = g^t_{a} g^t_{b} e^{-i(\phi^t_{a} - \phi^t_{b})} \mathcal{I}^{t}_{(a,b)}(z) + \eta^t_{(a,b)},
\]
as described in Eq.~16 of~\cite{zheng2025inversebench}.

\paragraph{Unknown corruption operator.}  We first note that in many scientific applications such as backhole imaging and multi-coil MRI, the corruption operator is \textbf{known}.  
In settings where  the true corruption operator is  unknown,  off-the-shelf estimators can provide usable approximations of the corruption process, as illustrated in Section~\ref{sec:ablations}. Crucially, our method does not require an explicit closed-form specification of the operator: it only assumes access to its \emph{forward propagation} (i.e., the ability to apply the corruption). This black-box requirement confers a key advantage—our approach can recover salient properties of the underlying data distribution without explicit knowledge of how the corruption is parameterized.

\section{Theoretical Results and Proofs}

\label{app:theory}

We provide theoretical insights to support our empirical results. In Section \ref{appx:linear-results}, we first focus on a simpler, linear Gaussian setting where we explicitly analyze the optimization landscape of score distillation given noisy samples, yielding quantitative error bounds and characterizations of global minimizers. Then, we offer several extensions of this theory in Section \ref{appx:linear-results-extensions}, including handling linear corruption and multiple noise levels. Finally, in Section \ref{appx:general-results}, we provide a more general analysis asking when can the distilled student achieve a strictly smaller Fisher divergence to the clean distribution than the teacher. This regime will operate under distributional assumptions on the data and corruption along with capacity and optimization assumptions on the teacher and generator.


\subsection{Quantitative analysis in a linear setting} \label{appx:linear-results}

To begin, we first analyze the performance of score distillation in a stylized setting where the underlying data distribution is Gaussian. In particular, we will assume that our data follows a low-rank linear model for our analysis.

\begin{assumption}[Linear Low-Rank Data Distribution]
\label{assump:linear}
    Suppose our underlying data distribution is given by a low-rank linear model
$
x = Ez \sim p_X$ and $z \sim \mathcal{N}(0,I_r),
$
where \(E \in \mathbb{R}^{d \times r}\) with \(r < d\) and with orthonormal columns (i.e., \(E^TE=I_r\)). 
\end{assumption}

 Assumption \ref{assump:linear} is equivalent to $p_X := \mathcal{N}(0,EE^T)$.
For a fixed corruption noise level \(\sigma > 0\), consider the setting we only have access to the noisy distribution
$
y = x + \sigma \epsilon$, where $x \sim p_X$ and $\epsilon \sim \mathcal{N}(0,I_d)$. In other words, $p_{Y,\sigma} := \mathcal{N}(0,EE^T+\sigma^2 I_d).
$
In our setting we assume that we have perfectly learned the noisy score:

\begin{assumption}[Perfect Score Estimation]
\label{aasump:perfect_score_noisy}
    Suppose we can estimate the score function of corrupted data $y$ perfectly:
    $$
\nabla \log p_{Y,\sigma}(x) = -\bigl(EE^T+\sigma^2I_d\bigr)^{-1}x.
$$
\end{assumption}

Our goal is to distill this distribution into a distribution 
$
p_{G_\theta} := (G_\theta)_\sharp (\mathcal{N}(0,I_d))
$
given by the push-forward of \(\mathcal{N}(0,I_d)\) by a generative network \(G_\theta : \mathbb{R}^d \to \mathbb{R}^d\). To model a U-Net \cite{ronneberger2015u} style architecture with bottleneck structure, we assume $G_\theta$ satisfies the following low-rank linear structure detailed in Assumption \ref{assump:low_rank_generator}.

\begin{assumption}[Low-Rank Linear Generator]
    \label{assump:low_rank_generator}
    Assume the generator is a low-rank linear mapping, where \( G_\theta \) is parameterized by \( \theta = (U, V) \) where $U,V \in \mathbb{R}^{d \times r}$ with $r < d$ and has the form:
    $$
    G_\theta(z) := U V^T z.
    $$
\end{assumption}

 Note that $G_{\theta}$ induces a degenerate low-rank Gaussian distribution $p_{G_{\theta}} := \mathcal{N}(0,UV^TVU^T)$. Consider a bounded noise schedule $(\sigma_t) \subseteq [\sigma_{\min},\sigma_{\max}]$ for some $0 < \sigma_{\min} < \sigma_{\max} < \infty$ and perturbed data points $x_t = x + \sigma_t \epsilon$ where $\epsilon\sim\mathcal{N}(0,I_d)$ and $x\sim p_{G_{\theta}}$. Then $x_t \sim p^{\sigma_t}_{G_{\theta}} := \mathcal{N}(0,UV^TVU^T+\sigma_t^2I_d)$. To distill the noisy distribution, we minimize the score-based loss (or Fisher divergence) as in \cite{zhou2024score}:
\begin{equation}\label{eq:ideal-score-loss-r-noisy}
\mathcal{L}(\theta) := \mathbb{E}_{t \sim \mathrm{Unif}(0,1)}\mathbb{E}_{x_t \sim p^{\sigma_t}_{G_{\theta}}}\left[\left\|s_{\sigma,\sigma_t}(x_t) - \nabla \log p^{\sigma_t}_{G_{\theta}}(x_t) \right\|_2^2\right].
\end{equation} 

Here, $s_{\sigma,\sigma_t}(x):= -(EE^T + (\sigma^2 + \sigma_t^2)I_d)^{-1}x$. Note this objective is similar to Eq. \ref{eq:distillation}, but with the real score in place of the fake score. This is also considered the idealized distillation loss (see Eq. (8) in \cite{zhou2024score}). In Theorem \ref{thm:time-dependent-theorem}, we show that minimizing Eq.~\ref{eq:ideal-score-loss-r-noisy} over a certain family of non-degenerate parameters finds a distilled distribution with \textbf{smaller} Wasserstein-2 distance to the underlying clean distribution.
\begin{theorem} \label{thm:time-dependent-theorem}
    Fix $\sigma > 0$. Under Assumptions \ref{assump:linear}, \ref{aasump:perfect_score_noisy}, and \ref{assump:low_rank_generator}, consider the family of parameters $\theta = (U, V)$ such that $$\theta \in \Theta := \{(U,V) : U^TU = I_r, V^TV \succ 0\}.$$ For any bounded noise schedule $(\sigma_t) \subseteq[\sigma_{\min},\sigma_{\max}]$, the global minimizers of $\mathcal{L}$ (\ref{eq:ideal-score-loss-r-noisy}) over $\Theta$, denoted by $\theta^*_{\sigma} := (U^*,V^*_{\sigma})$, satisfy the following: \begin{align} 
    U^* = EQ\ \text{for some orthogonal matrix}\ Q\ \text{and}\ (V^*_{\sigma})^TV^*_{\sigma} = (1+\sigma^2)I_r. \label{eq:opt-representation}
    \end{align} For any such $\theta^*_{\sigma}$, the induced generator distribution $p_{G_{\theta^*_{\sigma}}} = \mathcal{N}(0,(1+\sigma^2)EE^T)$ satisfies 
    \begin{align}\label{eq:w2less}
        W_2^2(p_{G_{\theta^*_{\sigma}}},p_X)
         = W_2^2(p_{Y,\sigma},p_X) - (d-r)\sigma^2 <  W_2^2(p_{Y,\sigma},p_X).
    \end{align}
\end{theorem}

\paragraph{Discussion.} This result shows that global minimizers of the distillation loss over a family of ``non-degenerate'' parameters induces a distribution close to the ground truth. Moreover, we can precisely quantify the distance to the underlying distribution due to the fact that all distributions are now Gaussians and the Wasserstein-2 distance has a closed form. We further explore what the unconstrained minimizers are in Theorem \ref{thm:strict-saddle} for the rank-one case. 

Regarding our assumptions, we note that making either Gaussian or potentially more complex Gaussian mixture model assumptions is common in the generative modeling literature \cite{chen2023score, cui2023analysis, wang2024unreasonable}. We further note that this result focuses on the setting where the underlying generator has low-rank structure. While it is common to make simplifying assumptions on the network architecture to understand score-based models \cite{chen2023score, chen2024learning}, there is also recent work \cite{wang2024diffusion} that has shown when trained on data of low intrinsic dimensionality, score-based models can exhibit low-rank structures. Empirically, we find that neural-network-based distilled models can find such low-dimensional structures through noisy data. An interesting future direction of this work is to understand the influence of neural-network-based parameterizations of the score function along with analyzing the fake score setting. 

Before we dive into the proof, we provide the following lemmas.
\begin{lemma}\label{lemma1}
    [Generalized Woodbury Matrix Identity \cite{higham2002accuracy}]

Given an invertible square matrix \( A \in \mathbb{R}^{n \times n} \), along with matrices \( U \in \mathbb{R}^{n \times k} \) and \( V \in \mathbb{R}^{k \times n} \), define the perturbed matrix: $B = A + U V$.
If \( (I_k + V A^{-1} U) \) is invertible, then the inverse of \( B \) is given by:
\[
B^{-1} = A^{-1} - A^{-1} U (I_k + V A^{-1} U)^{-1} V A^{-1}.
\]
\end{lemma}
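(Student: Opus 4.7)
The plan is to prove this identity by direct verification: show that the proposed expression for $B^{-1}$ satisfies $B \cdot B^{-1} = I_n$ (or equivalently $B^{-1} \cdot B = I_n$), using only elementary matrix algebra. Since the hypothesis already guarantees that $A$ and $I_k + VA^{-1}U$ are invertible, every quantity appearing in the proposed formula is well-defined, so no additional existence argument is needed.

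First I would write $C := A^{-1} - A^{-1}U(I_k + VA^{-1}U)^{-1}VA^{-1}$ and expand the product $BC = (A+UV)C$ by distributing. This produces four terms: $AA^{-1} = I_n$, a $-U(I_k + VA^{-1}U)^{-1}VA^{-1}$ term, a $+UVA^{-1}$ term, and a $-UVA^{-1}U(I_k + VA^{-1}U)^{-1}VA^{-1}$ term. The key algebraic step is to collect the first and fourth of these on the right-hand factor $(I_k + VA^{-1}U)^{-1}VA^{-1}$, pulling $U$ out on the left to obtain $U\bigl(I_k + VA^{-1}U\bigr)(I_k + VA^{-1}U)^{-1}VA^{-1} = UVA^{-1}$, which then cancels the remaining $+UVA^{-1}$ and leaves $I_n$.

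For the right-sided check, I would analogously expand $CB = C(A+UV)$ and perform the symmetric factoring, this time extracting $VA^{-1}U$ on the right of the bracketed quantities so that $(I_k + VA^{-1}U)^{-1}(I_k + VA^{-1}U)$ again collapses to the identity. Together these two computations show that $C$ is both a right and left inverse of $B$, and by uniqueness of inverses $B^{-1} = C$, as claimed.

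The main obstacle, though minor, is purely bookkeeping: one must recognize the ``insert the identity'' trick $I_k + VA^{-1}U - VA^{-1}U = I_k$ at the correct moment so that the inverse factor cancels, and one must keep track of dimensions carefully, since $U$ is $n\times k$, $V$ is $k \times n$, and the inner inverse acts on a $k\times k$ block while the outer inverse acts on the $n\times n$ block. I expect no deeper difficulty, and the whole argument should fit into a short displayed calculation of at most half a page.
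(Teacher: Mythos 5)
Your proposal is correct: the direct verification $BC = I_n$ with $C := A^{-1} - A^{-1}U(I_k + VA^{-1}U)^{-1}VA^{-1}$ goes through exactly as you describe, with the second and fourth terms of the expansion combining into $-U\bigl(I_k + VA^{-1}U\bigr)(I_k + VA^{-1}U)^{-1}VA^{-1} = -UVA^{-1}$, which cancels the third term. The paper itself gives no proof of this lemma, citing Higham's book instead, so there is nothing to compare against; your computation is the standard one. A minor simplification: since $B$ is square, exhibiting a right inverse already forces it to be the two-sided inverse, so the separate check of $CB = I_n$ is optional.
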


\begin{lemma}\label{lemma2}
 The Wasserstein-2 distance between two mean-zero Gaussians $\cN(0,\Sigma_1)$ and $\cN(0,\Sigma_2)$ whose covariance matrices commute, i.e., $\Sigma_1\Sigma_2 = \Sigma_2\Sigma_1$, is given by \begin{align*}  W_2^2(\cN(0,\Sigma_1),\cN(0,\Sigma_2)) 
        = \sum_{i=1}^d \lambda_i(\Sigma_1) + \lambda_i(\Sigma_2) - 2\sqrt{\lambda_i(\Sigma_1)\lambda_i(\Sigma_2)}.
    \end{align*}
\end{lemma}

\begin{lemma}[\cite{mirsky1975trace}]\label{vonNeumann}
    Suppose $A$ and $B$ are $d \times d$ complex matrices with singular values $\sigma_1(A)\geqslant \sigma_2(A)\geqslant\dots\geqslant \sigma_d(A) \geqslant 0$ and $\sigma_1(B)\geqslant \sigma_2(B)\geqslant\dots\geqslant \sigma_d(B) \geqslant 0$, respectively. Then $$|\mathrm{tr}(AB)| \leqslant \sum_{i=1}^d \sigma_i(A)\sigma_i(B).$$
\end{lemma}

\begin{lemma}\label{lem:pca-like-result}
    Let $E \in \mathbb{R}^{d \times r}$ with $r < d$ have orthonormal columns and $\Sigma \in \mathbb{R}^{r \times r}$ be symmetric positive definite. Then \begin{align*}
        \argmax_{U^TU = I_r} \mathrm{tr}(EE^TU\Sigma U^T) = \{EQ : Q\ \text{orthogonal}\}.
    \end{align*}
\end{lemma}
\begin{proof}[Proof of Lemma \ref{lem:pca-like-result}]
    Observe that by the von Neumann trace inequality (Lemma \ref{vonNeumann}), we have that for any feasible $U$, $$\mathrm{tr}(EE^TU\Sigma U^T) =\mathrm{tr}(U^TEE^TU\Sigma)  \leqslant \sum_{i=1}^r \lambda_i(U^TEE^TU)\lambda_i(\Sigma) \leqslant \sum_{i=1}^r \lambda_i(\Sigma)$$ where the last line we used the fact that $\lambda_i(U^TEE^TU) \leqslant 1$ for each $i \in [r].$ Hence, to maximize $U \mapsto \mathrm{tr}(EE^TU\Sigma U^T)$ over $\{U : U^TU = I_r\}$, we want $U^*$ to satisfy $\mathrm{tr}(EE^TU^*\Sigma (U^*)^T) = \sum_{i=1}^r \lambda_i(\Sigma).$ 
    
    We claim that this occurs if and only if $U^* = EQ$ for some orthogonal $Q$. If $U^*= EQ$, then $(U^*)^TEE^TU^* = Q^TE^TEE^TEQ = I$ so $$\mathrm{tr}(EE^TU^*\Sigma (U^*)^T) = \mathrm{tr}((U^*)^TEE^TU^*\Sigma) = \mathrm{tr}(\Sigma) = \sum_{i=1}^r \lambda_i(\Sigma).$$ For the other direction, suppose $U^*$ maximizes the objective. Then $$\mathrm{tr}((U^*)^TEE^TU^*\Sigma) = \mathrm{tr}(\Sigma) \Longleftrightarrow \mathrm{tr}\left(((U^*)^TEE^TU^* - I_r)\Sigma\right) = 0.$$ Set $Q := E^TU^*.$ Note that the eigenvalues of $Q^TQ$ are bounded by $1$ so $Q^TQ - I_r$ is negative semi-definite while $\Sigma$ is positive definite. But if $\mathrm{tr}((Q^TQ-I_r)\Sigma) = 0$, by positive definiteness of $\Sigma$, we must have $Q^TQ - I_r = 0$, i.e., $Q^TQ = I_r$. This means $Q$ is orthogonal. Since $Q$ is orthogonal and $Q= E^TU^* \Longrightarrow U^* = EQ$, as desired. 
\end{proof}

    

\begin{lemma}\label{lem:eigenvalue-time-dependent-function}
    Fix $\sigma > 0$ and consider a noise schedule $\sigma_t > 0$ for $t \in (0,1)$ such that $(\sigma_t) \subseteq [\sigma_{\min}, \sigma_{\max}]$ for some $0 < \sigma_{\min} < \sigma_{\max} < \infty$. Define the function $f_{\sigma} : (0,\infty) \rightarrow \mathbb{R}$ by $$f_{\sigma}(u) := \mathbb{E}_{t \sim \mathrm{Unif}(0,1)}\left[\frac{u}{(\sigma^2+\sigma_t^2+1)^2} - \frac{u}{\sigma_t^2(u+\sigma_t^{2})}\right].$$ Then $f_{\sigma}$ is strictly convex and has a unique minimizer at $u^* = \sigma^2 + 1$ which is the unique solution to the equation $$\mathbb{E}_{t \sim \mathrm{Unif}(0,1)}\left[\frac{1}{(\sigma^2+\sigma_t^2+1)^2}\right] = \mathbb{E}_{t\sim\mathrm{Unif}(0,1)}\left[\frac{1}{(u^*+\sigma_t^2)^2}\right].$$
\end{lemma}
\begin{proof}[Proof of Lemma \ref{lem:eigenvalue-time-dependent-function}] First, note that the conditions on $\sigma_t$ ensure that all of the following expectations are finite. By direct calculation, we have the derivatives of $f_{\sigma}$ are \begin{align*}
    f'_{\sigma}(u) & = \mathbb{E}_t\left[\frac{1}{(\sigma^2 +\sigma_t^2+ 1)^2}\right] - \mathbb{E}_t\left[\frac{1}{(\sigma_t^2 + u)^2}\right] \ \text{and}\ 
    f''_{\sigma}(u)  = \mathbb{E}_t\left[\frac{2}{(\sigma_t^2+u)^3}\right]. 
\end{align*} Hence $f''_{\sigma}(u) > 0$ for all $u > 0$ so $f_{\sigma}$ is strictly convex. To find its minimizer $u^*$, setting the derivative equal to $0$ yields $u^*$ must satisfy $$\mathbb{E}_t\left[\frac{1}{(\sigma^2 +\sigma_t^2+ 1)^2}\right] = \mathbb{E}_t\left[\frac{1}{(\sigma_t^2 + u^*)^2}\right].$$ Note that the point $u^* = 1+\sigma^2$ clearly satisfies the critical point equation. Uniqueness follows due to strict convexity. 
    
\end{proof}

\subsubsection{Proof of Theorem \ref{thm:time-dependent-theorem}} \label{sec:appx-theorem-proof}

We break down the proof of Theorem \ref{thm:time-dependent-theorem} into three key steps. First, we show that minimizing the objective (Eq.~\ref{eq:ideal-score-loss-r-noisy}) is equivalent to minimizing a simpler objective. Then, we show that we can derive exact analytical expressions for the global minimizers of this simpler objective, which are then global minimizers of the original score-based loss. Finally, we will directly compute the Wasserstein distance between our learned distilled distribution to the clean distribution and compare this to the noisy distribution.

\paragraph{Reduction of objective function:} For $\sigma_t > 0$, define $p^{\sigma_t}_{G_{\theta}} := \mathcal{N}(0,UV^TVU^T + \sigma^2_t I_d)$ and $s_{\sigma,\sigma_t}(x):= -(EE^T + (\sigma^2 + \sigma_t^2) I_d)^{-1}x$. For the proof, we will assume our parameters $\theta = (U,V) \in \Theta$ so that $U^TU = I_r$ and $V^TV \succ 0$. We consider minimizing the loss $$\mathcal{L}(\theta) := \mathbb{E}_{t \sim \mathrm{Unif}(0,1)}\mathbb{E}_{x_t \sim p^{\sigma_t}_{G_{\theta}}}\left[\left\|s_{\sigma,\sigma_t}(x_t) - \nabla \log p^{\sigma_t}_{G_{\theta}}(x_t) \right\|_2^2\right].$$ For $t \in (0,1)$, consider the inner expectation of the loss $$\Tilde{\mathcal{L}}_{t}(\theta) : = \mathbb{E}_{x_t \sim p^{\sigma_t}_{G_{\theta}}}\left[\left\|s_{\sigma,\sigma_t}(x_t) - \nabla \log p^{\sigma_t}_{G_{\theta}}(x_t) \right\|_2^2\right].$$ For notational convenience, set $\Sigma_{\sigma,t} := EE^T + (\sigma^2 + \sigma_t^2) I_d$ and $\Sigma_{\theta,t} := UV^TVU^T + \sigma^2_t I_d.$ Then $s_{\sigma,\sigma_t}(x) := -\Sigma_{\sigma,t}^{-1}x$ and $\nabla \log p^{\sigma_t}_{G_{\theta}}(x) := -\Sigma_{\theta,t}^{-1}x.$ First, recall that for $x_t \sim p^{\sigma_t}_{G_{\theta}}$ and any matrix $\Sigma$, $\mathbb{E}_{x \sim p^{\sigma_t}_{G_{\theta}}}[\|\Sigma x_t\|_2^2] = \|\Sigma\Sigma_{\theta,t}^{1/2}\|_F^2.$ Using this, we can compute the loss as follows: \begin{align*}
    \tilde{\mathcal{L}}_{t}(\theta) & = \mathbb{E}_{x_t \sim p^{\sigma_t}_{G_{\theta}}}\left[\|(\Sigma_{\sigma,t}^{-1} - \Sigma_{\theta,t}^{-1})x_t\|_2^2\right] \\
    & = \|(\Sigma_{\sigma,t}^{-1} - \Sigma_{\theta,t}^{-1})\Sigma_{\theta,t}^{1/2}\|_F^2 \\
    & = \mathrm{tr}\left(\Sigma_{\theta,t}^{1/2}(\Sigma_{\sigma,t}^{-1} - \Sigma_{\theta,t}^{-1}) (\Sigma_{\sigma,t}^{-1} - \Sigma_{\theta,t}^{-1})\Sigma_{\theta,t}^{1/2}\right) \\
    & = \mathrm{tr}\left(\Sigma_{\theta,t}(\Sigma_{\sigma,t}^{-1} - \Sigma_{\theta,t}^{-1}) (\Sigma_{\sigma,t}^{-1} - \Sigma_{\theta,t}^{-1})\right)\\
    & = \mathrm{tr}\left((\Sigma_{\theta,t}\Sigma_{\sigma,t}^{-1} - I_d) (\Sigma_{\sigma,t}^{-1} - \Sigma_{\theta,t}^{-1})\right) \\
    & = \mathrm{tr}\left(\Sigma_{\theta,t}\Sigma_{\sigma,t}^{-2} -\Sigma_{\theta,t}\Sigma_{\sigma,t}^{-1}\Sigma_{\theta,t}^{-1} - \Sigma_{\sigma,t}^{-1} + \Sigma_{\theta,t}^{-1}\right) \\
    & = \mathrm{tr}\left(\Sigma_{\theta,t}\Sigma_{\sigma,t}^{-2}\right) -\mathrm{tr}\left(\Sigma_{\theta,t}\Sigma_{\sigma,t}^{-1}\Sigma_{\theta,t}^{-1}\right) - \mathrm{tr}\left(\Sigma_{\sigma,t}^{-1}\right) + \mathrm{tr}\left(\Sigma_{\theta,t}^{-1}\right) \\
    & = \mathrm{tr}\left(\Sigma_{\sigma,t}^{-2}\Sigma_{\theta,t}\right) - 2\mathrm{tr}\left(\Sigma_{\sigma,t}^{-1}\right) + \mathrm{tr}\left(\Sigma_{\theta,t}^{-1}\right) \\
    & =: C_{\sigma,t} + \mathrm{tr}\left(\Sigma_{\sigma,t}^{-2}\Sigma_{\theta,t}\right) +\mathrm{tr}\left(\Sigma_{\theta,t}^{-1}\right).
\end{align*} Using Lemma \ref{lemma2}, it is straightforward to see that \begin{align*}
    \Sigma_{\sigma,t}^{-1} & = \frac{1}{\sigma^2 + \sigma_t^2} I_d - \frac{1}{(\sigma^2 + \sigma_t^2)^2(\sigma^2 + \sigma_t^2+1)}EE^T\ \text{and}\ \\
    \Sigma_{\theta,t}^{-1} & = \sigma_t^{-2}I_d - \sigma_t^{-4}U\left((V^TV)^{-1} + \sigma_t^{-2}I_r\right)^{-1}U^T
\end{align*} Hence the third term in $\tilde{\mathcal{L}}_{t}$ is given by $$\mathrm{tr}(\Sigma_{\theta,t}^{-1}) = \mathrm{tr}\left(\sigma_t^{-2}I_d - \sigma_t^{-4}U\left((V^TV)^{-1} + \sigma_t^{-2}I_r\right)^{-1}U^T\right) =: C_{\sigma_t} - \sigma_t^{-4}\mathrm{tr}\left(\left((V^TV)^{-1} + \sigma_t^{-2}I_r\right)^{-1}\right)$$ where we used the cyclic property of the trace and $U^TU=I_r$ in the last equality. For the second term, let $\beta_t^2 := \sigma^2 + \sigma_t^2$ and $\gamma_{\sigma,t}:= \frac{1}{\beta_t^2(\beta_t^2+1)}$. Then we have by direct computation, \begin{align*}
    \mathrm{tr}\left(\Sigma_{\sigma,t}^{-2}\Sigma_{\theta,t}\right) & = \mathrm{tr}\left(\left(\beta_t^{-2}I_d - \gamma_{\sigma,t}EE^T\right)\left(\beta_t^{-2}I_d - \gamma_{\sigma,t}EE^T\right) (UV^TVU^T+\sigma_t^2I_d)\right) \\
    & = \mathrm{tr}\left(\left(\beta_t^{-4}I_d - 2\beta_t^{-2}\gamma_{\sigma,t}EE^T + \gamma_{\sigma,t}^2EE^T\right) (UV^TVU^T+\sigma_t^2I_d)\right) \\
    & = \mathrm{tr}\left(\beta_t^{-4}UV^TVU^T - \sigma_t^2\beta_t^{-4}I_d +\left(\gamma_{\sigma,t}^2 - 2\beta_t^{-2}\gamma_{\sigma,t}\right) EE^TUV^TVU^T\right) \\
    & \qquad - \mathrm{tr}\left(2\beta_t^{-2}\sigma_t^2EE^T + \gamma_{\sigma,t}^2\sigma_t^2I_d\right) \\
    & =:\Tilde{C}_{\sigma,t} + \beta_t^{-4}\mathrm{tr}(UV^TVU^T) +\left(\gamma_{\sigma,t}^2 - 2\beta_t^{-2}\gamma_{\sigma,t}\right) \cdot \mathrm{tr}(EE^TUV^TVU^T) \\
    & = \Tilde{C}_{\sigma,t} + \beta_t^{-4}\mathrm{tr}(V^TV) +\left(\gamma_{\sigma,t}^2 - 2\beta_t^{-2}\gamma_{\sigma,t}\right) \cdot \mathrm{tr}(EE^TUV^TVU^T)
    \end{align*} where we used the cyclic property of trace and orthogonality of $U$ in the final line. Combining the above displays, we get that there exists a constant $C_{\sigma,\sigma_t} := C_{\sigma,t} + C_{\sigma_t} + \Tilde{C}_{\sigma,t}$ such that \begin{align*}
        \tilde{\mathcal{L}}_{t}(\theta) & = C_{\sigma,\sigma_t} + \left(\frac{1}{\beta_t^4(\beta_t^2+1)^2} - \frac{2}{\beta_t^4(\beta_t^2+1)}\right)\cdot\mathrm{tr}(EE^TUV^TVU^T) \\
        & + \beta_t^{-4}\mathrm{tr}(V^TV) - \sigma_t^{-4}\mathrm{tr}\left(\left((V^TV)^{-1} + \sigma_t^{-2}I_r\right)^{-1}\right) \\
        & =: C_{\sigma,\sigma_t} + B_t(U,V) + R_t(V)
    \end{align*} where we have defined the quantities \begin{align*}
        B_t(U,V) & := \left(\frac{1}{\beta_t^4(\beta_t^2+1)^2} - \frac{2}{\beta_t^4(\beta_t^2+1)}\right)\cdot\mathrm{tr}(EE^TUV^TVU^T)\ \text{and} \\
        R_t(V) & := \beta_t^{-4}\mathrm{tr}(V^TV) -\sigma_t^{-4}\mathrm{tr}\left(\left((V^TV)^{-1} + \sigma_t^{-2}I_r\right)^{-1}\right).
    \end{align*} Recalling the definition of $\mathcal{L}(\cdot)$, we have that \begin{align*}
        \mathcal{L}(\theta) = \mathbb{E}_{t \sim \mathrm{Unif}(0,1)}\left[\tilde{\mathcal{L}}_t(\theta)\right] =  \mathbb{E}_{t \sim \mathrm{Unif}(0,1)}\left[C_{\sigma,\sigma_t} +B_t(U,V) + R_t(U,V)\right].
    \end{align*} Hence we have the equivalence $$\argmin_{\theta \in \Theta} \mathcal{L}(\theta) = \argmin_{\theta \in \Theta} \mathbb{E}_{t \sim \mathrm{Unif}(0,1)}\left[B_t(U,V)\right] + \mathbb{E}_{t \sim \mathrm{Unif}(0,1)}[R_t(V)].$$

    \paragraph{Form of minimizers:} We use the shorthand notation $\mathbb{E}_t[\cdot] := \mathbb{E}_{t \sim \mathrm{Unif}(0,1)}[\cdot]$. First, note that we can first minimize $\mathbb{E}_t[B_t(U,V)]$ over feasible $U$. But note that $$\mathbb{E}_{t}[B_t(U,V)] = \underbrace{\mathbb{E}_t \left[\frac{1}{\beta_t^4(\beta_t^2 + 1)^2} - \frac{2}{\beta_t^4(\beta_t^2 + 1)}\right]}_{< 0} \mathrm{tr}(EE^TUV^TVU^T)$$ since for any $t$, $\frac{1}{(\beta_t^2 + 1)^2} < \frac{2}{(\beta_t^2 + 1)}.$ Hence minimizing $\mathbb{E}_t[B_t(U,V)]$ is equivalent to maximizing $\mathrm{tr}(EE^TUV^TVU^T).$ Taking $\Sigma = V^TV$ in Lemma \ref{lem:pca-like-result}, we have that the minimizer of $\mathbb{E}_t[B_t(U,V)]$ is given by $$U^* = EQ\ \text{for some orthogonal}\ Q.$$ Moreover, the proof of Lemma \ref{lem:pca-like-result} shows that $\mathrm{tr}(EE^TU^*V^TV(U^*)^T) = \mathrm{tr}(V^TV)$. This gives
    \begin{align*}
        \mathbb{E}_t[B_t(U^*,V)] & = \mathbb{E}_t\left(\frac{1}{\beta_t^4(\beta_t^2+1)^2} - \frac{2}{\beta_t^4(\beta_t^2+1)}\right) \mathrm{tr}(V^TV).
    \end{align*} In summary, we now must minimize the following with respect to invertible $V$: \begin{align*}
        \mathbb{E}_t[B_t(U^*,V)] + \mathbb{E}_t[R_t(V)] & = \mathbb{E}_t\left(\frac{1}{\beta_t^4(\beta_t^2+1)^2} - \frac{2}{\beta_t^4(\beta_t^2+1)} + \frac{1}{\beta_t^4}\right) \mathrm{tr}(V^TV) \\
        & \qquad - \mathbb{E}_t\left[\sigma_t^{-4}\mathrm{tr}\left(\left((V^TV)^{-1} + \sigma_t^{-2}I_r\right)^{-1}\right)\right] \\
        & = \mathbb{E}_t\left(\frac{1}{\beta_t^4}\left(\frac{1}{\beta_t^2 + 1} - 1\right)^2\right) \mathrm{tr}(V^TV) - \mathbb{E}_t\left[\sigma_t^{-4}\mathrm{tr}\left(\left((V^TV)^{-1} + \sigma_t^{-2}I_r\right)^{-1}\right)\right] \\
        & = \mathbb{E}_t\left(\frac{1}{\beta_t^4}\left(\frac{\beta_t^2}{\beta_t^2 + 1}\right)^2\right) \mathrm{tr}(V^TV) - - \mathbb{E}_t\left[\sigma_t^{-4}\mathrm{tr}\left(\left((V^TV)^{-1} + \sigma_t^{-2}I_r\right)^{-1}\right)\right] \\
        & = \mathbb{E}_t\left(\frac{1}{(\beta_t^2 + 1)^2}\right) \mathrm{tr}(V^TV) - \mathbb{E}_t\left[\sigma_t^{-4}\mathrm{tr}\left(\left((V^TV)^{-1} + \sigma_t^{-2}I_r\right)^{-1}\right)\right]
    \end{align*} where in the second equality, we completed the square.

    We now claim that $\mathbb{E}_t[B_t(U^*,V)] + \mathbb{E}_t[R_t(V)]$ solely depends on the eigenvalues of $V^TV$. In particular, for invertible $V$, note that $V^TV \succ 0$ so it admits the decomposition $V^TV = Q\Lambda Q^T$ where $Q^TQ=QQ^T=I_r$ and $\Lambda$ is a diagonal matrix with positive entries $\Lambda_{ii}= \lambda_i(V^TV) > 0$. Hence $\mathrm{tr}(V^TV) = \mathrm{tr}(Q\Lambda Q^T) = \mathrm{tr}(Q^TQ\Lambda)= \mathrm{tr}(\Lambda) = \sum_{i=1}^r \lambda_i(V^TV).$ Likewise, we have using the orthogonality of $Q$ that for any $\varepsilon > 0$, \begin{align*}
        \mathrm{tr}\left(\left((V^TV)^{-1} + \varepsilon^{-2}I_r\right)^{-1}\right) & = \mathrm{tr}\left(\left((Q\Lambda Q^T)^{-1} + \varepsilon^{-2}I_r\right)^{-1}\right)\\
        & = \mathrm{tr}\left(\left(Q\Lambda ^{-1}Q^T + \varepsilon^{-2}QQ^T\right)^{-1}\right) \\
        & = \mathrm{tr}\left(\left(Q\left(\Lambda^{-1} + \varepsilon^{-2}I_r\right)Q^T\right)^{-1}\right) \\
        & = \mathrm{tr}\left(Q\left(\Lambda^{-1} + \varepsilon^{-2}I_r\right)^{-1}Q^T\right) \\
        & = \mathrm{tr}\left(\left(\Lambda^{-1} + \varepsilon^{-2}I_r\right)^{-1}\right) \\
        & = \sum_{i=1}^r \frac{1}{\lambda_i(V^TV)^{-1} + \varepsilon^{-2}} \\
        & = \sum_{i=1}^r \frac{\lambda_i(V^TV) \cdot \varepsilon^2}{\lambda_i(V^TV) + \varepsilon^2}.
    \end{align*} In sum, the final objective is a particular function of the eigenvalues of $V^TV$: \begin{align*}
        \mathbb{E}_t[B_t(U^*,V)] + \mathbb{E}_t[R_t(V)] & = \sum_{i=1}^r \mathbb{E}_t\left[ \frac{\lambda_i(V^TV)}{(\beta_t^2+1)^2} - \frac{\lambda_i(V^TV)}{\sigma_t^2(\lambda_i(V^TV) + \sigma_t^{2})} \right] \\
        & = \sum_{i=1}^r \mathbb{E}_t\left[ \frac{\lambda_i(V^TV)}{(\sigma^2 + \sigma_t^2+1)^2} - \frac{\lambda_i(V^TV)}{\sigma_t^2(\lambda_i(V^TV) + \sigma_t^{2})} \right] \\
        & =: \sum_{i=1}^r f_{\sigma}(\lambda_i(V^TV)).
    \end{align*} In Lemma \ref{lem:eigenvalue-time-dependent-function}, we show that the function $u \mapsto f_{\sigma}(u)$ is strictly convex on $(0,\infty)$ with a unique minimizer at $1+\sigma^2$. Thus $V \mapsto B(U^*,V) + R(V)$ for invertible $V$ is minimized when the gram matrix of $V^*_{\sigma}$ has equal eigenvalues $\lambda_i((V^*_{\sigma})^TV^*_{\sigma}) = 1+\sigma^2$ for all $i \in [r]$. Since all of its eigenvalues are the same, by the Spectral Theorem, we must have that  $(V^*_{\sigma})^TV^*_{\sigma} = (1+\sigma^2 )I_r.$

        \paragraph{Wasserstein bound:} We now show the Wasserstein error bound. Note that $\theta^*_{\sigma} = (U^*,V_{\sigma}^*)$ induces the distribution $p_{G_{\theta^*_{\sigma}}}$ defined by $$x = G_{\theta^*_{\sigma}}(z),\ z \sim \mathcal{N}(0,I_d) \Longleftrightarrow x \sim p_{G_{\theta^*_{\sigma}}} :=  \mathcal{N}(0,EQ(V_{\sigma}^*)^TV^*_{\sigma}Q^TE^T) = \mathcal{N}(0,(1+\sigma^2)EE^T).$$ Then by Lemma \ref{lemma2}, we have
    \[
\begin{aligned}
W_2^2(p_{Y,\sigma},p_X) &= r\Bigl(1+\sigma^2+1-2\sqrt{1+\sigma^2}\Bigr) + (d-r)\sigma^2,\\
W_2^2(p_{G_{\theta^*_{\sigma}}},p_X) &= r\Bigl(1+\sigma^2+1-2\sqrt{1+\sigma^2}\Bigr).
\end{aligned}
\]  This gives $$W_2^2(p_{G_{\theta^*_{\sigma}}},p_X) = W_2^2(p_{Y,\sigma},p_X) - (d-r)\sigma^2 < W_2^2(p_{Y,\sigma},p_X).$$

\subsection{Extensions of the theory in Section \ref{appx:linear-results}}\label{appx:linear-results-extensions}

We now discuss three extensions of Theorem \ref{thm:time-dependent-theorem}: 1) we allow for additional corruption in $y$, 2) characterize the full optimization landscape in the rank-one case, and 3) analyze the global minimizers when we may have varying noise levels in the training data.

\subsubsection{Additional measurement corruption}

We will now consider the case when the data is not simply noisy, but also exhibits more general corruption. For a fixed corruption noise level \(\sigma > 0\), consider the setting we only have access to the noisy distribution
$
y = \mathcal{A}(x) + \sigma \epsilon$, where $\epsilon \sim \mathcal{N}(0,I_m)$ and $\mathcal{A}(x)=Ax$ with $A \in \RR^{m \times d}$ is a linear corruption operator. By our assumption on $p_X$ (see Assumption \ref{assump:linear}) and the noise, $p_{Y,\sigma} := \mathcal{N}(0,AEE^TA^T+\sigma^2 I_m).
$
In order to get rid of error in estimating score, we assume that we have perfectly learned the noisy score:

\begin{assumption}[Perfect Score Estimation with $\mathcal{A}$]
\label{aasump:perfect_score}
    Suppose $A \in \RR^{m \times d}$ with $\mathrm{rank}(A)=m$ and we can estimate the score function of corrupted data $y$ perfectly:
    $$
s_{A,\sigma^2}(x) := \nabla \log p_{Y,\sigma^2}(x) = -\bigl(AEE^TA^T+\sigma^2I_m\bigr)^{-1}x.
$$
\end{assumption}

Our goal is to match the corrupted noise distribution's score with the score of $p_{G_{\theta}}$ under Assumption \ref{assump:low_rank_generator} corrupted by $A$ over a series of noise schedules $(\sigma_t)$, which is given by $$\tilde{p}^{\sigma_t}_{G_{\theta}}(y) = \mathcal{N}(0, AUV^TVU^TA^T+\sigma_t^2I_m).$$ To distill the noisy distribution, we minimize the score-based loss (or Fisher divergence) as in \cite{zhou2024score}:
\begin{equation}\label{eq:ideal-score-loss-r}
\mathcal{L}(\theta) := \mathbb{E}_{t \sim \mathrm{Unif}[0,1]}\mathbb{E}_{y_t \sim \tilde{p}^{\sigma_t}_{G_{\theta}}}\left[\|s_{A,\sigma^2+\sigma_t^2}(y_t)-\nabla\log \tilde{p}^{\sigma_t}_{G_{\theta}}(y_t)\|_2^2\right].
\end{equation}

We show that we can characterize the global minimizers of this loss, which correspond to a noise-dependent scaling of the \textit{true} eigenspace of $p_X$ plus perturbations in the kernel of $A$. If we penalize the norm of our solution, we can nearly recover the true data distribution in Wasserstein-2 distance up to the noise in our measurements. We consider the rank-$r=1$ case for simplicity.

\begin{theorem} \label{thm:global-min-thm}
    Fix $\sigma > 0$ and consider $e \in \RR^d$ with unit norm and $Ae\neq0$. Under Assumptions \ref{assump:linear}, \ref{aasump:perfect_score}, and \ref{assump:low_rank_generator} and any bounded noise schedule $(\sigma_t) \subseteq [\sigma_{\min},\sigma_{\max}]$, the set of global minimizers of the loss \ref{eq:ideal-score-loss-r} under the parameterization $u\mapsto \theta(u)=(u,u/\|u\|)$ is given by $$\Theta_* := \left\{\pm \sqrt{1 + \frac{\sigma^2}{\|Ae\|^2}}\cdot e\right\} + \mathrm{ker}(A).$$ If $e \in \mathrm{Im}(A^T)$, we have that $\theta_*=\theta_*(u_*)$ with the minimum norm solution $u_* \in \argmin_{u \in \Theta_*}\|u\|$ satisfies \begin{align*}
        W_2^2(p_X, p_{G_{\theta^*}}) = \left(\sqrt{1 + \frac{\sigma^2}{\|Ae\|^2}} - 1\right)^2.
    \end{align*} 
\end{theorem}

\paragraph{Discussion.} Theorem \ref{thm:global-min-thm} aims to provide a quantitative bound of the Wasserstein distance between the distribution learned via distillation and the target clean distribution. To do this, we characterize the global minimizers of the loss, which correspond to scalings of the true principal component $e$
 plus perturbations in the kernel of $A$. The scaling depends on an effective signal-to-noise ratio $SNR:=\|Ae\|^2/\sigma^2$
 Note that terms involving perturbations in the kernel of $A$ are expected since the corruption compresses the data, leaving many plausible images that could give rise to the same measurements (a fundamental part of the ill-posedness in inverse problems). Furthermore, we show that if we penalize the norm of our parameter, the Wasserstein distance simplifies into a more interpretable quantity. A more general bound for all elements in the set of global minimizers is shown at the end of the proof. The intuition for the Theorem is that if we encourage finding a ``simple'' model (i.e., one with low-norm), the Wasserstein distance decreases and is effectively inversely proportional to the $SNR$. The distance between our learned distilled distribution and the true distribution goes to zero whenever 1) the noise goes to zero or 2) the signal strength increases. This result recovers the rank-1 version of Theorem \ref{thm:global-min-thm} when $A=I$, showing that the learned distilled distribution’s Wasserstein distance to the true distribution is less than the noisy distribution’s Wasserstein distance to the true distribution subtracted by a factor of 
$(d-1)\sigma^2$, showing a clear improvement in distribution learning. Finally, we note that the condition $e \in \mathrm{Im}(A^T)$ we use is akin to assumptions in the compressed sensing literature on stable recovery via the construction of dual certificates \cite{foucart2013invitation}.

To prove the Theorem, we first show that the objective under the parameterization $\theta = (u,u/\|u\|)$ simplifies into a form that we directly analyze.

\begin{lemma} \label{lem:objective-reduction}
    Consider the setting of Theorem \ref{thm:global-min-thm}. For $t \in [0,1]$, let $a_t := \sigma_t^2$, $c_t := a_t + \sigma^2$, and $\eta_t:=\frac{2c_t + \|Ae\|^2}{c_t^2(c_t+\|Ae\|^2)^2}$. Then the objective $\mathcal{L}(\theta)$ with $\theta = (u,u/\|u\|)$ satisfies the following: there exists a constant $C$ independent of $u \in \RR^d$ such that $$\mathcal{L}(\theta) = C + L(u)$$ where \begin{align*}
        L(u) := \mathbb{E}_{t\sim \mathrm{Unif}[0,1]}\left[\frac{1}{c_t^2}\|Au\|^2 - \eta_t(e^TA^TAu)^2 - \frac{\|Au\|^2}{a_t(a_t + \|Au\|^2)}\right].
    \end{align*}
\end{lemma}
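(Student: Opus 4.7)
The plan is to unpack the Fisher-divergence objective \eqref{eq:ideal-score-loss-r} into a trace expression, use cyclic invariance of the trace to separate $u$-dependent terms from constants, and then apply rank-one matrix-inverse formulas (Sherman--Morrison, a special case of Lemma \ref{lemma1}) to both covariance matrices involved. First, under the parameterization $\theta = (u, u/\|u\|)$ with $r=1$, the generator covariance reduces to $UV^TVU^T = uu^T$, so the two Gaussians being compared have covariances
\[
N_t := Auu^TA^T + a_t I_m, \qquad M_t := Aee^TA^T + c_t I_m,
\]
with scores $-N_t^{-1}y_t$ and $-M_t^{-1}y_t$ respectively. Since $y_t \sim \mathcal{N}(0, N_t)$, the standard Gaussian quadratic-form identity gives
\[
\mathcal{L}(\theta) = \mathbb{E}_t\bigl[\mathrm{tr}\bigl((N_t^{-1}-M_t^{-1})^2 N_t\bigr)\bigr].
\]

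Next I would expand the square and apply the cyclic property of the trace to rewrite the integrand as $\mathrm{tr}(N_t^{-1}) - 2\,\mathrm{tr}(M_t^{-1}) + \mathrm{tr}(M_t^{-2} N_t)$. The middle summand depends only on the fixed quantities $A,e,\sigma,\sigma_t$, so it absorbs into the constant $C$. For $\mathrm{tr}(N_t^{-1})$, I would apply Sherman--Morrison to $N_t = a_tI_m + (Au)(Au)^T$, obtaining
\[
N_t^{-1} = \tfrac{1}{a_t}I_m - \tfrac{(Au)(Au)^T}{a_t(a_t+\|Au\|^2)},
\]
so that $\mathrm{tr}(N_t^{-1}) = m/a_t - \|Au\|^2/\bigl(a_t(a_t+\|Au\|^2)\bigr)$, contributing the third term of $L(u)$ (the first piece is constant).

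The remaining work concentrates on $\mathrm{tr}(M_t^{-2} N_t)$. Writing $N_t = a_tI_m + (Au)(Au)^T$ again gives $\mathrm{tr}(M_t^{-2} N_t) = a_t\,\mathrm{tr}(M_t^{-2}) + u^TA^TM_t^{-2}Au$, and the first summand is constant. For the $u$-dependent piece, I would apply Sherman--Morrison to $M_t = c_tI_m + (Ae)(Ae)^T$ to get $M_t^{-1} = \tfrac{1}{c_t}I_m - \tfrac{(Ae)(Ae)^T}{c_t(c_t+\|Ae\|^2)}$, square this expression, and collect coefficients of $I_m$ and $(Ae)(Ae)^T$; this yields
\[
M_t^{-2} = \tfrac{1}{c_t^2}I_m + \kappa_t\,(Ae)(Ae)^T,
\]
where $\kappa_t = -\tfrac{2}{c_t^2(c_t+\|Ae\|^2)} + \tfrac{\|Ae\|^2}{c_t^2(c_t+\|Ae\|^2)^2}$. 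Combining the two fractions over the common denominator $c_t^2(c_t+\|Ae\|^2)^2$ and simplifying the numerator gives $\kappa_t = -\eta_t$ exactly as defined. Substituting back yields $u^TA^TM_t^{-2}Au = \tfrac{1}{c_t^2}\|Au\|^2 - \eta_t(e^TA^TAu)^2$, which supplies the first two terms of $L(u)$.

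Finally, taking the expectation over $t \sim \mathrm{Unif}[0,1]$ of the sum of the three $u$-dependent contributions produces $L(u)$, while all constants collect into $C$. The main obstacle is purely algebraic: carrying out the Sherman--Morrison expansion of $M_t^{-2}$ and verifying that the combined coefficient of $(Ae)(Ae)^T$ collapses cleanly to $-(2c_t+\|Ae\|^2)/(c_t^2(c_t+\|Ae\|^2)^2) = -\eta_t$. This is the only step where one must be careful with bookkeeping; everything else follows from trace identities and taking the expectation over $t$, and integrability over $t$ is automatic since $(\sigma_t)$ is bounded away from $0$ and infinity.
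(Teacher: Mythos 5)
Your proposal is correct and follows essentially the same route as the paper: reduce the Fisher divergence to $\mathrm{tr}\bigl((N_t^{-1}-M_t^{-1})^2N_t\bigr)$, discard the constant $\mathrm{tr}(M_t^{-1})$ and $a_t\,\mathrm{tr}(M_t^{-2})$ pieces, and apply the rank-one Woodbury/Sherman--Morrison identity to $N_t^{-1}$ and $M_t^{-2}$, with your coefficient $\kappa_t=-\eta_t$ matching the paper's expansion $\Sigma_{t,e,A}^{-2}=c_t^{-2}I_m-\eta_t Aee^TA^T$ exactly. The only difference is that you spell out the Gaussian quadratic-form identity explicitly where the paper defers to "a similar reduction" in the DSD reference.
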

\begin{proof}[Proof of Lemma \ref{lem:objective-reduction}]
    Consider the loss function $\mathcal{L}(\theta)$ under the parameterization $\theta = (u,u/\|u\|)$ for $u \neq 0$:

$$\mathcal{L}(\theta) := \mathbb{E}_{t \sim \text{Unif}[0,1]}\mathbb{E}_{y_t \sim\Tilde{p}^{\sigma_t}_{G_{\theta}}}\left[\left\|\left(\Sigma_{t,e,A}^{-1}-\Sigma_{t,\theta,A}\right)^{-1}y_t\right\|^2_2\right]]$$ where $$\Sigma_{t,e,A} := Aee^TA^T + c_tI_m\ \text{and}\ \Sigma_{t,\theta,A} := Auu^TA^T + a_tI_m.$$ Using a similar reduction in the proof of Theorem \ref{thm:time-dependent-theorem}, we get that the above loss equals \begin{align*}
    \mathcal{L}(\theta) = \mathbb{E}_{t\sim \mathrm{Unif}[0,1]}\left[C_{\sigma,t}\right] + \mathbb{E}_{t\sim \mathrm{Unif}[0,1]}\left[\mathrm{tr}\left(\Sigma_{t,e,A}^{-2}\Sigma_{t,\theta,A}\right)\right] +\mathbb{E}_{t\sim \mathrm{Unif}[0,1]}\left[\mathrm{tr}\left(\Sigma_{t,\theta,A}^{-1}\right)\right]
\end{align*} where $C_{\sigma,t}$ is a constant that depends only on time and $\sigma$ and not on $\theta$. Note that using the Woodbury matrix identity in Lemma \ref{lemma1}, we get \begin{align*}
    \Sigma_{t,e,A}^{-2}=(Aee^TA^T + c_tI_m)^{-2} & = c_t^{-2}I_m - \frac{2c_t + \|Ae\|^2}{c_t^2(c_t + \|Ae\|^2)^2}Aee^TA^T
\end{align*} which implies \begin{align*}
    \Sigma_{t,e,A}^{-2}\Sigma_{t,\theta,A} & = \left(c_t^{-2}I_m - \frac{2c_t + \|Ae\|^2}{c_t^2(c_t + \|Ae\|^2)^2}Aee^TA^T\right)Auu^TA^T \\
    & + c_t^{-2}a_tI_m - a_t\frac{2c_t + \|Ae\|^2}{c_t^2(c_t + \|Ae\|^2)^2}Aee^TA^T \\
    & = c_t^{-2}Auu^TA^T - \eta_t Aee^TA^TAuu^TA^T + c_t^{-2}a_tI_m - a_t\eta_tAee^TA^T.
\end{align*} Also, we have that \begin{align*}
    \Sigma_{t,\theta,A}^{-1} = (Auu^TA^T + a_tI_m)^{-1} = a_t^{-1}I_m - \frac{Auu^TA^T}{a_t(a_t + \|Au\|^2)}.
\end{align*} Thus computing the trace, taking an expectation, and collecting terms that only involve $u$, we see that there exists a constant $C$ depending on $(\sigma_t)$, $\sigma$, $A$, and $e$ (independent of $u$) such that \begin{align*}
    \mathcal{L}(\theta) & = C + \mathbb{E}_t[c_t^{-2}]\mathrm{tr}(Auu^TA^T) -\mathbb{E}_t[\eta_t]\mathrm{tr}(Aee^TA^TAuu^TA^T) - \mathbb{E}_t\left[\frac{\mathrm{tr}(Auu^TA^T)}{a_t(a_t+\|Au\|^2)}\right] \\
    & = C + \mathbb{E}_t[c_t^{-2}]\|Au\|^2 - \mathbb{E}_t[\eta_t](e^TA^TAu)^2 - \mathbb{E}_t\left[\frac{\|Au\|^2}{a_t(a_t+\|Au\|^2)}\right] \\
    & =: C + L(u).
\end{align*}
\end{proof}

\begin{proof}[Proof of Theorem \ref{thm:global-min-thm}]
Note that Lemma \ref{lem:objective-reduction} shows that we have the simpler formula \begin{align*}
        \mathcal{L}(\theta) & = C + L(u)
    \end{align*} where \begin{align*}
        L(u) := \mathbb{E}_{t}\left[\frac{1}{c_t^2}\|Au\|^2 - \eta_t(e^TA^TAu)^2 - \frac{\|Au\|^2}{a_t(a_t + \|Au\|^2)}\right]
    \end{align*}
    For notational simplicity, let $G = A^TA$, $g(u) = u^TGu$, $h(u) = e^TGu$, $\phi_t(g) = g/(a_t(a_t+g))$, and $\eta_t :=(2c_t+\|Ae\|^2)/(c_t^2(c_t+\|Ae\|^2)^2)$. Then our objective results in \begin{align*}
    L(u) := \mathbb{E}_t[c_t^{-2}]g(u) - \mathbb{E}_t[\eta_t]h(u)^2 - \mathbb{E}_t[\phi_t(g(u))] =: \tilde{c}g(u) - \tilde{\eta}h(u)^2 - \mathbb{E}_t[\phi_t(g(u))].
\end{align*} We claim that any global minimizer of $L$ is of the form $u_* = \pm \lambda_* e + q$  for some constant $\lambda_*$ to be defined and $q \in \mathrm{ker}(A)$. First, note that we can lower bound $L(u)$ by another function $\Psi(g(u))$ as follows: since $G$ is PSD, note that we have the generalized Cauchy Schwarz inequality: $$h(u)^2 = (e^TGu)^2 \leqslant (e^TGe)(u^TGu)=g(e)g(u).$$ Moreover, this holds with equality if and only if $Au$ and $Ae$ are collinear, i.e., $Au = \lambda Ae$ or equivalently $u = \lambda e + v$ for $v \in \mathrm{ker}(A)$. Then we get the lower bound $$L(u)=\tilde{c}g(u) - \tilde{\eta}h(u)^2 - \mathbb{E}_t[\phi_t(g(u))] \geqslant (\tilde{c} - \tilde{\eta}\|Ae\|^2)g(u) - \mathbb{E}_t[\phi_t(g(u))] =: \Psi(g(u))$$ where we have defined the new function $\Psi(g):=(\tilde{c} - \tilde{\eta}\|Ae\|^2)g - \mathbb{E}_t[\phi_t(g)]$ for $g \geqslant 0$. Since $g(u) \geqslant 0$ for every $u$, we have that \begin{align*}
    L(u) \geqslant \min_{g \geqslant 0} \Psi(g).
\end{align*} We first analyze the minimizers of $\Psi$ and then construct $u_*$ such that $L(u_*) = \Psi(g_*)$. First let $\Delta = (\tilde{c} - \tilde{\eta}\|Ae\|^2)$ which is given by $$\Delta = \mathbb{E}_t\left[c_t^{-2} - \eta_t\|Ae\|^2\right] = \mathbb{E}_t\left[\frac{1}{c_t^2} - \frac{(2c_t+\|Ae\|^2)\|Ae\|^2}{c_t^2(c_t+\|Ae\|^2)^2}\right].$$ We claim that $\Delta > 0$. Indeed, note that for any $g > 0$, \begin{align*}
    \frac{g(2c_t + g)}{(c_t+g)^2} & =\frac{g(2c_t + g)g}{c_t^2 + (2c_t+g)g} =\frac{g(2c_t + g) + c_t^2 - c_t^2}{c_t^2 + (2c_t+g)g} = 1 - \frac{c_t^2}{(c_t+g)^2}.
\end{align*} Applying this to $\Delta$ with $g = \|Ae\|^2$ yields \begin{align*}
    \Delta & = \mathbb{E}_t\left[\frac{1}{c_t^2} - \frac{(2c_t+\|Ae\|^2)\|Ae\|^2}{c_t^2(c_t+\|Ae\|^2)^2}\right] \\
    & = \mathbb{E}_t\left[\frac{1}{c_t^2}\left(1  - \frac{(2c_t+\|Ae\|^2)\|Ae\|^2}{(c_t+\|Ae\|^2)^2}\right)\right] \\
    & = \mathbb{E}_t\left[\frac{1}{c_t^2}\left(1  - 1 + \frac{c_t^2}{(c_t+\|Ae\|^2)^2}\right)\right] \\
    & = \mathbb{E}_t\left[\frac{1}{(c_t+\|Ae\|^2)^2}\right] > 0
\end{align*} as desired. This gives $$\Psi(g) = \Delta g - \mathbb{E}_t[\phi_t(g)].$$ Observe that $$\Psi'(g) = \Delta - \frac{\partial}{\partial g}\int_0^1 \frac{g}{a_t^2 + a_t g}dt = \Delta - \int_0^1\frac{a_t^2}{(a_t^2+a_tg)^2}dt = \Delta - \mathbb{E}_t\left[\frac{1}{(a_t +g)^2}\right]$$ and $$\Psi''(g) = -\partial /\partial g\mathbb{E}_t[(a_t+g)^{-2}]=2\mathbb{E}_t[(a_t+g)^{-3}] > 0\ \forall g.$$ Hence the function $\Psi$ is strictly convex. Moreover, a sign analysis reveals that \begin{align*}
    \Psi'(0) = \Delta - \mathbb{E}_t[1/a_t^2] = \mathbb{E}_t\left[\frac{1}{(c_t + \|Ae\|^2)^2} - \frac{1}{a_t^2}\right] =\mathbb{E}_t\left[\frac{1}{(\sigma^2 + a_t + \|Ae\|^2)^2} - \frac{1}{a_t^2}\right]  < 0
\end{align*} while for $g > \sigma^2 + \|Ae\|^2$, we have $$\Psi'(g) = \mathbb{E}_t\left[\frac{1}{(\sigma^2 + a_t + \|Ae\|^2)^2} - \frac{1}{(a_t + g)^2}\right] > 0$$ so there must exist a unique root $g_*$ such that $\Psi'(g_*) = 0$. In fact, $g_* = \sigma^2 + \|Ae\|^2$ achieves $\Psi'(g_*) = 0$. Finally, we have that \begin{align*}
    L(u) \geqslant \min_{g \geqslant 0} \Psi(g) = \Psi(g_*) = \Delta(\sigma^2 + \|Ae\|^2) - \mathbb{E}_t[\phi_t(\sigma^2+\|Ae\|^2)].
\end{align*} We claim that $u_* = \pm \sqrt{g_*/\|Ae\|^2}e$ achieves $L(u_*) = \Psi(g_*)$. Indeed, note that $g(u_*) = (u_*)^TGu_* = g_*/\|Ae\|^2e^TGe = g_*/\|Ae\|^2\cdot\|Ae\|^2 = g_*$ and $$h(u_*)^2 = (e^TGu_*)^2 = \frac{g_*}{\|Ae\|^2}(e^TGe)=g_*\|Ae\|^2.$$ Hence \begin{align*}
    L(u_*) & = \tilde{c}g(u_*) - \tilde{\eta}h(u_*) - \mathbb{E}_t[\phi_t(g(u_*))] \\
    & = \tilde{c}g_* - \tilde{\eta}g_*\|Ae\|^2 - \mathbb{E}_t[\phi_t(g_*)] \\
    & = (\tilde{c}-\tilde{\eta}\|Ae\|^2)g_* - \mathbb{E}_t[\phi_t(g_*)] \\
    & = \Delta g_* - \mathbb{E}_t[\phi_t(g_*)] \\
    & = \Psi(g_*).
\end{align*} The only other cases include $h_* = u_* + v$ for $v \in \mathrm{ker}(A)$ since these are precisely the equality cases of the generalized Cauchy-Schwarz inequality. Note that for such $h_*$, $$g(h_*) = (h_*)^TGh_*= (u_*+v)^TG(u_*+v)=g(u_*) + 2v^TGu_* + v^TGv =g_*$$ and $h(u_*+v) = e^TG(u_*+v)=h(u_*)$. These cases are also global minimizers, which gives us the total set of globally optimal solutions: $$\Theta_* := \left\{\pm \sqrt{\frac{\sigma^2 + \|Ae\|^2}{\|Ae\|^2}}e\right\} + \mathrm{ker}(A).$$

Finally, we consider the Wasserstein distance bound. Note that for any $u \neq 0$, we have that \begin{align*}
    W_2^2(\mathcal{N}(0,ee^T), \mathcal{N}(0,uu^T)) & = \mathrm{tr}\left(ee^T + uu^T - 2\left((uu^T)^{1/2}ee^T(uu^T)^{1/2}\right)^{1/2}\right) \\
    & = 1 + \|u\|^2 -2 \mathrm{tr}\left[\left(\frac{uu^T}{\|u\|}ee^T\frac{uu^T}{\|u\|}\right)^{1/2}\right] \\
    & = 1 + \|u\|^2 -2 \mathrm{tr}\left[\left(\frac{(e^Tu)^2}{\|u\|^2}uu^T\right)^{1/2}\right] \\
    & = 1 + \|u\|^2 -2 \mathrm{tr}\left[\frac{|e^Tu|}{\|u\|^2}uu^T\right] \\
    &= 1 + \|u\|^2 - 2|e^Tu|.
\end{align*} In our case, note that $u_* = q \pm \lambda_*e$ where $q \in \mathrm{ker}(A)$ and $\lambda_* =\sqrt{1+\sigma^2/\|Ae\|^2}> 1$. Since $e \in \mathrm{Im}(A^T)$, $e^Tq =0$. Hence $|e^Tu_{*}|=\lambda_*$ and $\|u_{*}\|^2 = \|q\|^2 + \lambda_*^2$ where we used $\|e\|=1$ in both equalities. Using the previous result and these properties of $u_{*}$, we see that for any $u_* \in \Theta_*$, \begin{align*}
    W_2^2(\mathcal{N}(0,ee^T), \mathcal{N}(0,u_{*}u_{*}^T)) & = 1 + \|q\|^2 + \lambda_*^2 - 2\lambda_* = \|q\|^2 + (\lambda_*-1)^2.
\end{align*} The minimum norm element of $\Theta_*$ is precisely given by either $\lambda_* e$ or $-\lambda_* e$. This is because $e$ and $q$ are orthogonal for any $q \in \mathrm{ker}(A)$, so for any $u_* \in \Theta_*$, $\|u_*\|^2 = \lambda_*^2\|e\|^2 + \|q\|^2 \geqslant \lambda_*^2=\|\lambda_*e\|^2$. Taking $q = 0$ minimizes the norm of $u_* \in \Theta_*$.
\end{proof}

\subsubsection{Strict saddle property} 

A natural question is whether the objective landscape is well-behaved for first-order optimization algorithms. We now analyze the rank-one case and show that the objective landscape has a strict saddle property, namely that all critical points are either global minimizers or strict saddles, i.e., points for which the gradient is zero, but the Hessian exhibits a negative eigenvalue, hence a descent direction.

\begin{theorem}\label{thm:strict-saddle}
    Consider the setting of Theorem \ref{thm:time-dependent-theorem} with $r = 1$ and $E = e \in \RR^d$ with unit norm. Then the objective $\mathcal{L}$ under the parameterization $\theta(u) = (u,u/\|u\|)$ satisfies the following: the set of critical points for $u \neq 0$ is precisely given by \begin{align*}
        \Omega:= \{\sqrt{1+\sigma^2}e,-\sqrt{1+\sigma^2}e\} \cup \Omega^{\perp}\ \text{where}\ \Omega^{\perp}:=\left\{u \in \RR^d : \langle u,e\rangle=0,\ \|u\|=\sigma\right\}.
    \end{align*} Each point in $\{\sqrt{1+\sigma^2}e,-\sqrt{1+\sigma^2}e\}$ is a global minimizer. Moreover, each point in $\Omega^{\perp}$ is a strict saddle, meaning that for any $u \in \Omega^{\perp}$, the Hessian $\nabla^2\mathcal{L}(u)$ is a strictly negative eigenvalue. Hence all local minima are global minima.
\end{theorem}

\begin{proof}[Proof of Theorem \ref{thm:strict-saddle}]
By Lemma \ref{lem:objective-reduction}, we have the simpler formula \begin{align*}
        \mathcal{L}(\theta) & = C + L(u)
    \end{align*} where \begin{align*}
        L(u) := \mathbb{E}_{t}\left[\frac{1}{c_t^2}\|u\|^2 - \eta_t(e^Tu)^2 - \frac{\|u\|^2}{a_t(a_t + \|u\|^2)}\right]
    \end{align*} We will use the notation $a_l = \sigma_l^2$, $c_l = a_l + \sigma^2$, and $\eta_l = (2c_l+\|e\|^2)/(c_l^2(c_l+\|e\|^2)^2)$ for $l \sim \mathrm{Unif}[0,1]$. Furthermore, let $\tilde{c}:= \mathbb{E}_l[c_l^{-2}]$, $\tilde{\eta} := \mathbb{E}_l[\eta_l]$ and $\phi_l(g):= \frac{g}{a_l(a_l+g)^2}$ for $g > 0$. Further setting $g(u) = u^Tu$, and $h(u) = e^Tu$ our objective can be written as \begin{align*}
    L(u) := \tilde{c}u^Tu - \tilde{\eta}(e^Tu)^2 - \mathbb{E}_l[\phi_l(u^Tu)].
\end{align*} An elementary calculation shows that $\phi'_l(g) = \frac{1}{(a_t+g)^2}$ and $\phi''_l(g) = -2(a_t+g)^{-3}$ for $g > 0$.

Note that \begin{align*}
    \nabla L(u) = 2\left((\tilde{c} - \mathbb{E}_t[\phi'_t(\|u\|^2)])u - \tilde{\eta}e^Tue\right) = 0
\end{align*} if and only if \begin{align*}
    (\tilde{c} - \mathbb{E}_t[\phi'_t(\|u\|^2)])u = \tilde{\eta}e^Tue.
\end{align*} Note that $u \in \{ \pm \sqrt{1+\sigma^2}e\}$ satisfies this condition. On the other hand, consider a decomposition $u = \beta e + s$ where $s \perp e$. Set $F(r) = \tilde{c} - \mathbb{E}_t[\phi'_t(r)]$. Then $u = \beta e + s$ is a critical point if and only if $$\beta(F(\beta^2+\|s\|^2)-\tilde{\eta})e + F(\beta^2+\|s\|^2)s = 0.$$ Suppose $\beta = 0$. Then this above equation requires $F(\|s\|^2)s = 0$. If $s \neq 0$, then we need $\|s\|=\sigma$ since $F(\sigma^2) = 0$. If $\beta \neq 0$, then we require $F(\beta^2+\|s\|^2)=\tilde{\eta}>0$ so $s = 0$ which means $F(\beta^2)=\tilde{\eta}$. This indeed satisfied by $\beta^2 = 1+\sigma^2$, i.e., $\beta = \pm \sqrt{1+\sigma^2}$. Hence the critical points are $$\left\{\pm \sqrt{1+\sigma^2}e \right\} \bigcup \left\{u : u \perp e,\ \|u\|=\sigma\right\}.$$ Note that the Hessian is $$\nabla^2 L(u) = 4F'(\|u\|^2)uu^T + 2F(\|u\|^2)I_d - 2\tilde{\eta}ee^T.$$ Note that for critical points $u$ such that $u \perp e$ and $\|u\|=\sigma$, we have $$\nabla^2 L(u)e = 4F'(\|u\|^2)uu^Te + 2F(\|u\|^2)e - 2\tilde{\eta}e = 2(F(\sigma^2)-\tilde{\eta}) =-2\tilde{\eta}e$$ so $e$ is an eigenvector of $\nabla^2L(u)$ with eigenvalue $-2\tilde{\eta} < 0$, which is strictly negative. Hence $e$ is a descent direction of the objective $L(u)$ so $u$ is a strict saddle point. If $u = \pm \sqrt{1+\sigma^2}e$, then $$\nabla^2L(u) = 4(1+\sigma^2)F'(1+\sigma^2)ee^T + 2F(1+\sigma^2)I_d - 2\tilde{\eta}ee^T = \left(4(1+\sigma^2)F'(1+\sigma^2) - 2\tilde{\eta}\right)ee^T +2\tilde{\eta}I_d.$$ Note that for directions orthogonal to $e$, $v \perp e$, $$\nabla^2L(u)v = 2\tilde{\eta}v\ \text{with}\ 2\tilde{\eta}>0.$$ Along $e$, we have $$\nabla^2L(u)e = (2\tilde{\eta} + 4(1+\sigma^2)F'(1+\sigma^2)-2\tilde{\eta})e = 4(1+\sigma^2)F'(1+\sigma^2)e$$ where $4(1+\sigma^2)F'(1+\sigma^2) > 0$ so $\nabla^2L(u)$ is in fact positive definite at $u = \pm \sqrt{1+\sigma^2}e$. Hence such points are global minimizers while other points are strict saddles.
\end{proof}

\subsubsection{Multiple noise scales}

It is possible to extend Theorem \ref{thm:time-dependent-theorem} to the setting in which one has access to a dataset of varying noise levels $\sigma \sim p(\sigma)$ (as in the experiments discussed in Appendix \ref{app:random_sigma}). An illustrative example is the case when we have noisy images $y = x + \sigma z$ where $\sigma$ comes from a finite set of noise levels $\{\sigma_1,\dots,\sigma_K\}$. This can be modeled as $\sigma \sim p(\sigma) = \sum_{k=1}^K\pi_k \delta_{\sigma_k}$ where $\pi_k \geqslant 0$, $\sum_{k=1}^K \pi_k = 1$ and $\sigma_k > 0$ for each $k \in [K]$. We will minimize the following objective that also considers noise at different scales: $$\tilde{\mathcal{L}}(\theta) := \mathbb{E}_{\sigma \sim p(\sigma)} \mathcal{L}(\theta) := \mathbb{E}_{\sigma \sim p(\sigma),t \sim \mathrm{Unif}(0,1)}\mathbb{E}_{x_t \sim p^{\sigma_t}_{G_{\theta}}}\left[\left\|s_{\sigma,\sigma_t}(x_t) - \nabla \log p^{\sigma_t}_{G_{\theta}}(x_t) \right\|_2^2\right]$$ We prove that, when the noise levels follow a general distribution $\sigma \sim p(\sigma)$, we can also characterize minimizers to the above loss. In particular, the scaling of the covariance now depends on the $p(\sigma)$ and $(\sigma_t)$. We will show through an example for a finite set of noise levels how the distilled generator can outperform any of the noisy teachers.
\begin{theorem}\label{thm:}
    Consider the same setting as Theorem \ref{thm:time-dependent-theorem}. Then for any distribution $p(\sigma)$ supported on $(0,\infty)$, there is exists a unique $\lambda^*>0$ that depends on $(\sigma_t)$ and $p(\sigma)$ such that the global minimizers of $\tilde{\mathcal{L}}$ over $\Theta$, denoted by $\theta^*_{\sigma} := (U^*,V^*_{\sigma})$, satisfy the following: \begin{align} 
    U^* = EQ\ \text{for some orthogonal matrix}\ Q\ \text{and}\ (V^*_{\sigma})^TV^*_{\sigma} = \lambda^*I_r.
    \end{align} In particular, $\lambda^*$ is the unique solution to the equation $$\mathbb{E}_{p(\sigma),t}\left[(\sigma^2+ \sigma_t^2+1)^{-2}\right] = \mathbb{E}_t\left[(\lambda^* + \sigma_t^2)^{-2}\right].$$ Hence the learned distilled distribution is given by $p_{G_{\theta^*_{\sigma}}} = \mathcal{N}(0,\lambda^*EE^T).$
\end{theorem}
\begin{proof}
    The proof of Theorem turns out to be very similar to the proof of Theorem \ref{thm:time-dependent-theorem}, with a particular modification of Lemma \ref{lem:eigenvalue-time-dependent-function} needed to find the precise form of the minimizer. In particular, 
    one can show using similar arguments that the optimal $U$ is given by $U^*=EQ$ for orthogonal $Q$. To find the form of $V$, one needs to minimize the following function over feasible $V$, which only depends on the eigenvalues of $V^TV$: \begin{align*}
        \sum_{i=1}^r F(\lambda_i(V^TV))
        & := \sum_{i=1}^r \mathbb{E}_{p(\sigma),t}\left[ \frac{\lambda_i(V^TV)}{(\sigma^2 + \sigma_t^2+1)^2} - \frac{\lambda_i(V^TV)}{\sigma_t^2(\lambda_i(V^TV) + \sigma_t^{2})} \right]
    \end{align*} where $F(u) := \mathbb{E}_{p(\sigma)}[f_{\sigma}(u)]$ and $f_{\sigma}$ is the function defined in Lemma \ref{lem:eigenvalue-time-dependent-function}. We claim that $F$ is strictly convex and has a unique minimizer $\lambda^*$. Note that for any $u > 0$, its derivatives are \begin{align*}
F'(u) & = \mathbb{E}_{p(\sigma),t}\left[\frac{1}{(\sigma^2 +\sigma_t^2+ 1)^2}\right] -  \mathbb{E}_{t}\left[\frac{1}{(\sigma_t^2 + u)^2}\right] \ \text{and}\\ 
    F''(u) &  =  \mathbb{E}_{p(\sigma),t}\left[\frac{2}{(\sigma_t^2+u)^3}\right]>0.
    \end{align*} Moreover, note that $u \mapsto \psi(u):=\mathbb{E}_{p(\sigma),t}\left[(\sigma_t^2 + u)^{-2}\right]$ is strictly decreasing with $\lim_{u \rightarrow \infty}\psi(u)=0$ and $\lim_{u\rightarrow 0^+}\psi(u)=\mathbb{E}_t[\sigma_t^{-4}]>\mathbb{E}_{p(\sigma),t}\left[(\sigma^2 +\sigma_t^2+ 1)^{-2}\right]$. Note the last strict inequality follows by strict monotonicity of $\psi.$ Hence there must exist a unique $\lambda^* \in (0,\infty)$ that minimizes $F$ satisfying $F'(\lambda^*) = 0$, i.e., $\mathbb{E}_{p(\sigma),t}\left[(\sigma^2+ \sigma_t^2+1)^{-2}\right] = \mathbb{E}_t\left[(\lambda^* + \sigma_t^2)^{-2}\right].$ Thus the above function for invertible $V^TV$ is minimized when the gram matrix of $V^*_{\sigma}$ has equal eigenvalues $\lambda_i((V^*_{\sigma})^TV^*_{\sigma}) = \lambda^*$ for all $i \in [r]$. Since all of its eigenvalues are the same, by the Spectral Theorem, we must have that  $(V^*_{\sigma})^TV^*_{\sigma} = \lambda^*I_r.$
\end{proof}
\paragraph{Example with a finite number of noise levels:}
    Consider the case when $p(\sigma) = \sum_{k=1}^K \pi_k \delta_{\sigma_k}$ for $\sigma_k > 0$ with $\pi_k \geqslant 0$ and $\sum_{k=1}^K\pi_k=1$. Then $\lambda^*$ is the unique solution to $$\sum_{k=1}^K \pi_k\mathbb{E}_{t}\left[(\sigma_k^2+ \sigma_t^2+1)^{-2}\right] = \mathbb{E}_t\left[(\lambda^* + \sigma_t^2)^{-2}\right].$$ Since the right-hand side is strictly decreasing with respect to $\lambda^*$, one can show that we always have $$1 + \sigma_{\min}^2 \leqslant \lambda^* \leqslant 1 + \sigma_{\max}^2,\ \sigma_{\min} := \min_k \sigma_k,\ \sigma_{\max}:=\max_k \sigma_k.$$ The precise value of $\lambda^*$ interestingly depends now on the noise schedule $(\sigma_t)$. In particular, when the noise schedule $\sigma_t$ has much smaller values, the right-hand side increases, requiring $\lambda^*$ to decrease to satisfy the equation. Likewise, when $\sigma_t$ focuses on larger noise levels, the right-hand side goes down, requiring a larger $\lambda^*$ for the equation to be satisfied. 

    One can also give a mathematical condition on when the distilled distribution is closer in Wasserstein distance to each of the noisy distributions to the ground-truth. In particular, consider the low-rank regime $r \ll d$. Then if $\lambda^*$ satisfies $$\lambda^* < \min_{k \in [K]} \left[1 + \sqrt{\left(\sqrt{1+\sigma_k^2}-1\right)^2 + \frac{d-r}{r}\sigma_k^2}\right]^2\approx 1 + \frac{d}{r}\sigma_{\min}^2$$ then $$W_2^2(p_{G_{\theta^*_{\sigma}}},p_X) < \min_{k \in [K]} W_2^2(p_{Y,\sigma_k},p_X),$$ i.e., the distilled generator is closer to the ground-truth distribution than every noisy teacher. Note that if the difference between $\sigma_{\max}$ and $\sigma_{\min}$ is not too large and the data is low-rank enough $d \gg r$, then this can be satisfied. For example, if the largest noise level is less than a multiple of the smallest noise level $\sigma_{\max}< \sqrt{d/r}\cdot\sigma_{\min} $, then this would ensure $\lambda^* \leqslant 1 + \sigma_{\max}^2 <1 + \frac{d}{r}\sigma_{\min}^2.$ As an example, suppose $d/r = 10$. Then as long as $\sigma_{\max} < 3\sigma_{\min}$, the above condition would be satisfied.

\subsection{General guarantees for distillation}\label{appx:general-results}

Following up on our linear analysis, we aim to derive more general guarantees showing that, under capacity and optimization assumptions on the teacher and generator, we can give bounds on the distilled generator's performance. This will be for more general corruptions and then we will state a result more tailored to denoising after where we can give conditions on when the distilled model improves upon the noisy distribution. 

To state our results, we set the notation. Let $\mathcal{F}(p||q):=\mathbb{E}_p[\|\nabla \log p(x)-\nabla \log q(x)\|_2^2]$ denote the Fisher divergence between $p$ and $q$. For a density $p$, we let $dp = p(x)dx = pdx$. Let $\chi^2(p||q)=\int (p/q-1)^2dq$ denote the chi-square divergence between $p$ and $q$. For notational simplicity, we let $\lesssim$ denote an inequality up to absolute constants so that $a \lesssim b$ if there exists an absolute constant $C > 0$ such that $a \leqslant C b$. We let $y = Ax + \sigma \epsilon$ with $\sigma > 0$ and define $p_Y := \mathcal{T}[p_X] := A_{\sharp}p_X * \mathcal{N}(0,\sigma^2I_m)$. We will use the notation $$x_t = x + \sigma_t \epsilon, y_t = y + \sigma_t \epsilon'$$ with marginals $p_{X,t},p_{Y,t}$. For the generator $G_{\theta}$ with clean law $p_{\theta}$ and parameters $\theta \in \Theta$, its measurement law is $p_{\theta,Y}:=\mathcal{T}[p_{\theta}]$ with noise-convolved marginals $p_{\theta,Y,t}$. We consider distilling by minimizing $$\mathcal{L}_{\mathrm{distill}}(\theta) := \mathbb{E}_{t,\tilde{y}_t\sim p_{\theta,Y,t}}\|f_{\phi}(\tilde{y}_t,t) - f_{\psi}(\tilde{y}_t,t)\|^2_2$$ where $f_{\phi}$ is the teacher and $f_{\psi}$ is the fake diffusion model. For each $t$, let $s_{Y,t}$ and $s_{\theta,Y,t}$ denote the scores of $p_{Y,t}$ and $p_{\theta,Y,t}$, respectively. Throughout, we will assume sufficient regularity of the densities so that all scores $\nabla \log p$ and gradients $\nabla p$ are well-defined, including those induced by $p_{\theta}$ for $\theta \in \Theta$. Moreover, put \begin{align*}
    \delta_{\phi}(y,t) &:= f_{\phi}(y,t) - s_{Y,t}(y)\ \quad\ \delta_{\psi}(\tilde{y},t):=f_{\psi}(\tilde{y},t)-s_{\theta,Y,t}(\tilde{y}) \\
    \varepsilon^2_{\phi,2}&:= \sup_t \mathbb{E}_{p_{Y,t}}\|\delta_{\phi}(y_t,t)\|^2_2,\ \varepsilon^2_{\phi,4}:=\sup_t\left(\mathbb{E}_{p_{Y,t}}\|\delta_{\phi}(y_t,t)\|^4_2\right)^{1/2}.
\end{align*} We will assume we have optimized our parameter $\theta$ to a point $\hat{\theta}$. Define the local fake-diffusion error at $\hat{\theta}$: $$\varepsilon^2_{\psi}(\hat{\theta}) := \sup_t \mathbb{E}_{p_{\hat{\theta},Y,t}}\|\delta_{\psi}(\tilde{y}_t,t)\|^2_2.$$ Finally, let $\Delta(\hat{\theta})$ denote the local density ratio $$\Delta(\hat{\theta}) := \sup_t\sup_y w_t(y),\ w_t(y):= p_{\hat{\theta},Y,t}(y)/p_{Y,t}(y).$$

We will make the following assumptions on the data and learned parameters:
\begin{assumption}[Data distribution] \label{assump:poincare}
    Suppose that the corrupted data distribution $p_{Y,t}$ satisfies a uniform Poincar\'{e}-like inequality in the sense that there exists a $\lambda_0 > 0$ such that $$\chi^2(p||p_{Y,t}) \leqslant \lambda_0^{-1} \int\|\nabla (p/p_{Y,t})\|^2 dp_{Y,t},\ \forall t,p \in \{p_{\theta,Y,t}\}_{\theta\in\Theta}.$$ 
\end{assumption}
\begin{assumption}[Capacity and local near-optimality] \label{assump:capacity-opt} There exists a $\theta^*$ such that $p_{\theta^*,Y}=p_Y$ and $\hat{\theta}$ satisfies the following for some $\varepsilon_{\mathrm{opt}} \geqslant 0$: $$\mathcal{L}_{\mathrm{distill}}(\hat{\theta}) \leqslant \mathcal{L}_{\mathrm{distill}}(\theta^*) + \varepsilon_{\mathrm{opt}}.$$ Moreover, the fake diffusion network parameters $\psi$ satisfies $\mathbb{E}_{t,p_{Y,t}}\|f_{\phi}(y_t,t)-f_{\psi}(y_t,t)\|^2_2 \leqslant \mathbb{E}_{t,p_{Y,t}}\|f_{\phi}(y_t,t)-s_{Y,t}\|^2_2$. We note that this can be relaxed to an upper bound up to a constant.
\end{assumption}

We will aim to prove the following: \begin{theorem} \label{thm:general-distillation}
    Under Assumptions \ref{assump:poincare} and \ref{assump:capacity-opt}, we have that the following holds:
    \begin{enumerate}
        \item (\textbf{general bound}) the learned distilled distribution in measurement space satisfies \begin{align*}
            \mathbb{E}_t\mathcal{F}(p_{\hat{\theta},Y,t}||p_{Y,t}) \lesssim \left(\varepsilon^2_{\phi,4}\sqrt{\frac{\Delta(\hat{\theta})}{\lambda_0}} + \sqrt{ \varepsilon^4_{\phi,4}\frac{\Delta(\hat{\theta})}{\lambda_0} + \varepsilon^2_{\phi,2} + \varepsilon_{\psi}^2(\hat{\theta}) + \varepsilon_{\mathrm{opt}}}\right)^2;
        \end{align*}
        \item (\textbf{measurement injectivity}) if the measurement operator $\mathcal{T} : p \mapsto A_{\sharp} p * \mathcal{N}(0,\sigma^2 I_m)$ satisfies $\mathbb{E}_t\mathcal{F}(p_{\hat{\theta},t}||p_{X,t}) \leqslant \hat{\kappa}\cdot \mathbb{E}_t\mathcal{F}(p_{\hat{\theta},Y,t}||p_{Y,t})$ for some $\hat{\kappa} > 0$, then we have that the learned distilled distribution in image space satisfies \begin{align*}
            \mathbb{E}_t\mathcal{F}(p_{\hat{\theta},t}||p_{X,t}) \lesssim \hat{\kappa}\left(\varepsilon^2_{\phi,4}\sqrt{\frac{\Delta(\hat{\theta})}{\lambda_0}} + \sqrt{4 \varepsilon^4_{\phi,4}\frac{\Delta(\hat{\theta})}{\lambda_0} + \varepsilon^2_{\phi,2} + \varepsilon_{\psi}^2(\hat{\theta}) + \varepsilon_{\mathrm{opt}}} \right)^2;
        \end{align*}
    \end{enumerate}
\end{theorem}

\paragraph{Discussion.} This result guarantees a bound on the Fisher divergence between the distilled generator's measurement distribution and the true measurement distribution. The key idea is that minimizing the distillation loss encourages the generator to learn an image distribution whose \textit{induced} measurements are close the true measurements. Lemma \ref{lem:SiD-reverse-fisher} more explicitly connects the distillation loss to the reverse Fisher divergence between the measurement distributions. The second key component is the second bound, which shows that if the corruption operator satisfies an injectivity property over the data, then we can transfer this bound to the distilled distribution in image space. Hence distillation has the potential to succeed when 1) the distilled generator learns to create images such that, when corrupted further, look like the measurements and 2) the corruption operator is stable or injective over our distributions. We show in the following Corollary that in the instructive case of denoising, the corruption operator is stable and we can give a condition on when distillation can improve over the noisy distribution.

\begin{corollary}[Improvement in denoising] \label{cor:denoising}Under the setting of Theorem \ref{thm:general-distillation}, when $A = I$, we have that the measurement injectivity condition holds for some $\hat{\kappa} > 0$ that depends on the noise schedule and for $\varepsilon_{\phi,2},\varepsilon_{\phi,4},\varepsilon_{\psi}(\hat{\theta}),\varepsilon_{\mathrm{opt}}$ sufficiently small, we have that the distilled distribution improves upon the noisy distribution \begin{align*}
            \mathbb{E}_t\mathcal{F}(p_{\hat{\theta},t}||p_{X,t}) < \mathbb{E}_t\mathcal{F}(p_{Y,t}||p_{X,t}).
        \end{align*}
\end{corollary}

To prove these results, we require a number of technical lemmas. The first is a change of measure result that will be useful in transferring expectations.
\begin{lemma} \label{lem:teacher-error-transfer} For fixed $t$, let $e_{\phi}(y,t):=\|\delta_{\phi}(y,t)\|^2_2$. Then we have that \begin{align*}
    \mathbb{E}_{p_{\theta,Y,t}}e_{\phi}(y,t) \leqslant \varepsilon^2_{\phi,2} + \varepsilon^2_{\phi,4}\sqrt{\chi^2(p_{\theta,Y,t}||p_{Y,t})}.
\end{align*}
\end{lemma}
\begin{proof}[Proof of Lemma \ref{lem:teacher-error-transfer}]
    Suppose we set $q = p_{Y,t}$, $p = p_{\theta,Y,t}$, and $w = dp/dq$. Recall that for a density $q$, the induced $L^2(q)$ norm is given by $\|f\|^2_{L^2(q)}=\int f^2dq.$ Then we have via an application of Cauchy-Schwarz that \begin{align*}
        |\mathbb{E}_p e_{\phi}(y,t) - \mathbb{E}_qe_{\phi}(y,t)|& = \left|\int(w(y)-1)e_{\phi}(y,t)dq\right| \\
        & \leqslant \|w - 1\|_{L^2(q)}\|e_{\phi}\|_{L^2(q)} \\
        & = \sqrt{\chi^2(p||q)}\cdot \left(\mathbb{E}_qe_{\phi}^2(y,t)\right)^{1/2}.   
    \end{align*} The result follows by using the definitions of $\varepsilon_{\phi,2}$ and $\varepsilon_{\phi,4}.$
\end{proof} 

The next Lemma is crucial, in that it shows how the distillation loss from SiD \cite{zhou2024score} encourages the generator $G_{\theta}$ to produce images whose measurements match the distribution of the true measurements.

\begin{lemma} \label{lem:SiD-reverse-fisher}
    For $\hat{\theta}$, define $\Gamma(\hat{\theta}):= \mathbb{E}_t\left[\varepsilon^2_{\phi,4}\sqrt{\chi^2(p_{\hat{\theta},Y,t}||p_{Y,t})}\right].$ Then we have that the distillation loss satisfies \begin{align*}
        \frac{1}{2}\mathbb{E}_t\mathcal{F}(p_{\hat{\theta},Y,t}||p_{Y,t}) -2\varepsilon^2_{\psi}(\hat{\theta}) -2 \varepsilon^2_{\phi,2} - 2\Gamma(\hat{\theta}) \leqslant \mathcal{L}_{\mathrm{distill}}(\hat{\theta}) 
    \end{align*}
\end{lemma}
\begin{proof}[Proof of Lemma \ref{lem:SiD-reverse-fisher}]
    We first consider the decomposition \begin{align*}
        f_{\phi} - f_{\psi} & = (s_{Y,t} - s_{\hat{\theta},Y,t}) + \delta_{\phi} - \delta_{\psi}.
    \end{align*} Using $\|x + y - w\|^2_2 \leqslant 3(\|x\|^2_2 + \|y\|^2_2 + \|w\|^2_2)$ for $x = s_{Y,t} - s_{\hat{\theta},Y,t}$, $y = \delta_{\phi}$ and $w = \delta_{\psi}$ and taking expectations, we have that for fixed $t$, \begin{align*}\mathbb{E}_{p_{\hat{\theta},Y,t}}\|f_{\phi}(\tilde{y}_t,t)  - f_{\psi}(\tilde{y}_t,t)\|_2^2 & \leqslant 3\mathbb{E}_{p_{\hat{\theta},Y,t}}\|s_{Y,t}(\tilde{y}_t,t) - s_{\hat{\theta},Y,t}(\tilde{y}_t,t)\|^2_2 + 3\mathbb{E}_{p_{\hat{\theta},Y,t}}e_{\phi}(\tilde{y}_t,t) + 3\mathbb{E}_{p_{\hat{\theta},Y,t}}\|\delta_{\psi}(\tilde{y}_t,t)\|_2^2 \\
    & \leqslant 3\mathbb{E}_{p_{\hat{\theta},Y,t}}\|s_{Y,t}(\tilde{y}_t,t) - s_{\hat{\theta},Y,t}(\tilde{y}_t,t)\|^2_2 + 3\left(\varepsilon^2_{\phi,2} + \varepsilon^2_{\phi,4}\sqrt{\chi^2(p_{\hat{\theta},Y,t}||p_{Y,t})}\right) + 3\varepsilon^2_{\psi}(\hat{\theta})
    \end{align*} where the last line follows by Lemma \ref{lem:teacher-error-transfer} and the definition of $\varepsilon^2_{\psi}(\hat{\theta})$. Taking an expectation over $t$ yields \begin{align*}
        \mathcal{L}_{\mathrm{distill}}(\hat{\theta}) & \leqslant 3\mathbb{E}_t\mathcal{F}(p_{\hat{\theta},Y,t}||p_{Y,t}) + 3\varepsilon^2_{\phi,2} + 3\Gamma(\hat{\theta}) + 3\varepsilon^2_{\psi}(\hat{\theta}).
    \end{align*} The lower bound holds by using the bound $\|x+y-w\|^2_2 \geqslant \frac{1}{2}\|x\|^2_2 - 2(\|y\|^2_2 + \|w\|^2_2)$, applying the same bounds, and taking expectations. Note that this bound holds because for any $z$, $$\|x+z\|_2^2 = \|x\|_2^2 + 2\langle x,z\rangle + \|z\|_2^2.$$ Recall Young's inequality: $|\langle x,z\rangle| \leqslant \frac{\alpha^2}{2}\|x\|_2^2 + \frac{1}{2\alpha^2}\|z\|_2^2$ for $\alpha >0$. Hence we have the lower bound $$\|x+z\|_2^2 \geqslant \|x\|_2^2 - \alpha^2\|x\|_2^2 - \alpha^{-2}\|z\|_2^2 + \|z\|_2^2.$$ Choosing $\alpha^2=1/2$, setting $z=y-w$, and using $\|y-w\|_2^2 \leqslant 2(\|y\|_2^2+\|w\|_2^2)$ yields the desired inequality.
\end{proof} 

An additional ingredient we need is control over the $\chi^2$ distance and relating it to the Fisher divergence. For that, we need the following Lemma.

\begin{lemma}\label{lem:chi-squared-to-fisher}
    Under Assumption \ref{assump:poincare}, we have that \begin{align*}
        \Gamma(\hat{\theta}) \leqslant \varepsilon^2_{\phi,4}\sqrt{\frac{\Delta(\hat{\theta})}{\lambda_0}} \cdot \left(\mathbb{E}_t\mathcal{F}(p_{\hat{\theta},Y,t}||p_{Y,t})\right)^{1/2}.
        \end{align*}
\end{lemma}
\begin{proof}[Proof of Lemma \ref{lem:chi-squared-to-fisher}]
    We first fix $t$ and set $q = p_{Y,t}$, $p=p_{\hat{\theta},Y,t}$ and $w = p/q$. Then Assumption \ref{assump:poincare} yields \begin{align*}
        \chi^2(p||q) & \leqslant \lambda_0^{-1} \int\|\nabla w\|_2^2 dq = \lambda_0^{-1} \int w\frac{\|\nabla w\|_2^2}{w} dq  \leqslant \frac{\Delta(\hat{\theta})}{\lambda_0} \int \frac{\|\nabla w\|_2^2}{w} dq = \frac{\Delta(\hat{\theta})}{\lambda_0} \mathcal{F}(p||q)
    \end{align*} where the first line follows by assumption, the second inequality follows by definition of $\Delta(\hat{\theta})$ and the last equality follows by definition of the Fisher divergence. Ineed, for the last equality, note that if $w = p/q$, then $\nabla w = \nabla (p/q) = \frac{p}{q}(\nabla \log p - \nabla \log q) = w(s_p- s_q)$ where $s_p$ and $s_q$ are the scores of $p$ and $q$, respectively. This ensures that \begin{align*}
        \int \frac{\|\nabla w\|_2^2}{w} dq = \int w\|s_p - s_q\|^2dq = \int \|s_p-s_q\|^2p dx = \mathcal{F}(p||q).
    \end{align*} Taking square roots and applying an expectation over $t$ along with Jensen's inequality for the concave map $v \mapsto \sqrt{v}$ yields the desired bound.
\end{proof}

Armed with these technical results, we now prove Theorem \ref{thm:general-distillation}.
\begin{proof}[Proof of Theorem \ref{thm:general-distillation}] First, recall that by Lemma \ref{lem:SiD-reverse-fisher}, we have the lower bound \begin{align}
    \mathcal{L}_{\mathrm{distill}}(\hat{\theta}) \geqslant \frac{1}{2}\mathbb{E}_t\mathcal{F}(p_{\hat{\theta},Y,t}||p_{Y,t}) -2\varepsilon^2_{\psi}(\hat{\theta}) -2 \varepsilon^2_{\phi,2} -2 \Gamma(\hat{\theta}). \label{eq:fisher-lower}
\end{align} Moreover, by definition of $\theta^*$, we have that $$\mathcal{L}_{\mathrm{distill}}(\theta^*) = \mathbb{E}_{t,p_{\theta^*,Y,t}}\|f_{\phi} - f_{\psi}\|^2_2 = \mathbb{E}_{t,p_{Y,t}}\|f_{\phi} - f_{\psi}\|^2_2 \leqslant\mathbb{E}_{t,p_{Y,t}}\|f_{\phi} - s_{Y,t}\|^2_2 \leqslant \varepsilon^2_{\phi,2}$$ where we used $p_{\theta^*,Y}=p_Y$ in the second equality and the assumption on $\psi$ in the second-to-last inequality. Then note that by Assumption \ref{assump:capacity-opt}, we have that \begin{align}
    \mathcal{L}_{\mathrm{distill}}(\hat{\theta}) \leqslant \mathcal{L}_{\mathrm{distill}}(\theta^*) + \varepsilon_{\mathrm{opt}} \leqslant \varepsilon^2_{\phi,2} + \varepsilon_{\mathrm{opt}}. \label{eq:opt-upper}
\end{align} Combining \ref{eq:fisher-lower} and \ref{eq:opt-upper} along with Lemma \ref{lem:chi-squared-to-fisher} yields the quadratic inequality $\frac{1}{2}X \leqslant A + B\sqrt{X}$ where we have set $X := \mathbb{E}_t\mathcal{F}(p_{\hat{\theta},Y,t}||p_{Y,t})$, $A := 3\varepsilon^2_{\phi,2} + \varepsilon_{\mathrm{opt}} +2\varepsilon^2_{\psi}(\hat{\theta})$ and $B:=2\varepsilon^2_{\phi,4}\sqrt{\frac{\Delta(\hat{\theta})}{\lambda_0}}$. Solving this inequality, we have that $X \leqslant \left(B+\sqrt{B^2+2A}\right)^2.$ Substituting the values of $X$, $A$, and $B$ yields the desired result.
    
\end{proof}

Finally, we show the denoising Corollary. 

\begin{proof}[Proof of Corollary \ref{cor:denoising}]
    The Corollary is a consequence of the fact that when $A = I$, we have that $p_{Y,t} = \mathcal{T}[p_X] * \mathcal{N}(0,\sigma_t^2I) = p_X * \mathcal{N}(0,\sigma^2I) * \mathcal{N}(0,\sigma_t^2I) = p_X * \mathcal{N}(0,(\sigma^2 +\sigma_t^2)I).$ Hence if we define the time-shift map $\tau(t)$ by $\sigma_{\tau(t)}^2 = \sigma_t^2 + \sigma^2$, we have that for any $\theta$ and $t$, $\mathcal{F}(p_{\theta,Y,t}||p_{Y,t}) =\mathcal{F}(p_{\theta,\tau(t)}||p_{X,\tau(t)})$. Denote the distribution of $t \sim \rho$ where $\rho$ has support in $\tau([t_{\min}, t_{\max}])$ and set $$\hat{\kappa} := \sup_{t \in [t_{\min},t_{\max}]} \frac{\rho(\tau(t))\tau'(t)}{\rho(t)} \in (0,\infty).$$ Then we have that \begin{align*}
        \mathbb{E}_t\mathcal{F}(p_{\hat{\theta},t}||p_{X,t}) \leqslant \hat{\kappa} \cdot \mathbb{E}_t \mathcal{F}(p_{\hat{\theta},\tau(t)}||p_{X,\tau(t)}) = \hat{\kappa} \cdot \mathbb{E}_t \mathcal{F}(p_{\hat{\theta},Y,t}||p_{Y,t}).
    \end{align*}
    
    Define $\Delta_{\sigma} := \mathbb{E}_t\mathcal{F}(p_{Y,t}||p_{X,t}).$ Then using the measurement injectivity bound in Theorem \ref{thm:general-distillation}, there exists a universal constant $C$ such that if $$\hat{\kappa} \cdot\left(\varepsilon^2_{\phi,4}\sqrt{\frac{\Delta(\hat{\theta})}{\lambda_0}} + \sqrt{ \varepsilon^4_{\phi,4}\frac{\Delta(\hat{\theta})}{\lambda_0} + \varepsilon^2_{\phi,2} + \varepsilon_{\psi}^2(\hat{\theta}) + \varepsilon_{\mathrm{opt}}}\right)^2 \leqslant \Delta_{\sigma} / C$$ then we have \begin{align*}
        \mathbb{E}_t\mathcal{F}(p_{\hat{\theta},t}||p_{X,t}) < \mathbb{E}_t \mathcal{F}(p_{Y,t}||p_{X,t}).
    \end{align*}
\end{proof}

\paragraph{Connection to mode-seeking.} The previous theory connects to mode-seeking behavior in the following way. In particular, under the setting of our theory, obtaining a small Fisher divergence encourages small reverse KL-divergence. To see this, recall that the KL-divergence and $\chi^2$-divergence satisfy $D_{\mathrm{KL}}(p||q) \leqslant \chi^2(p||q)$ when $p \ll q$ (which follows by the inequality $\log t \leqslant t - 1$ for $t > 0$). Following the proof of Lemma \ref{lem:chi-squared-to-fisher}, this shows that we have the inequality \begin{align*}
    D_{\mathrm{KL}}(p_{\theta,Y,t}||p_{Y,t}) \leqslant \chi^2(p_{\theta,Y,t}||p_{Y,t}) \leqslant \frac{\Delta}{\lambda_0} \mathcal{F}(p_{\theta,Y,t}||p_{Y,t})
\end{align*} where $\Delta = \Delta(\theta)$ is the density ratio of $p_{\theta,Y,t}$ and $p_{Y,t}$ and $\lambda_0$ is the constant in Assumption \ref{assump:poincare}. Hence if one can minimize the Fisher divergence and the ratio $\Delta$, this will also encourage smaller reverse KL-divergence (mode-seeking behavior). A further theoretical investigation of this connection is an important direction for future work.

\section{Visualization for Overestimated Data Noise Level}
\label{app:toy_sigma}

In Section~\ref{sec:ablations}, we discussed handling an \emph{unknown} data-noise level \(\sigma\) and showed that \emph{slight overestimation} preserves strong performance, consistent with blind inverse-problem solvers~\cite{zhang2017beyond, zhang2018ffdnet}.  
Here, we provide an intuitive 2D toy example demonstrating that modest overestimation yields clean generations, whereas \emph{underestimation} produces noticeably noisier samples.

We construct a noisy training set with ground-truth \(\sigma=0.05\). During both pretraining and distillation, we vary the assumed noise \(\hat\sigma\) to represent \emph{underestimation} (\(\hat\sigma < \sigma\)), \emph{accurate} estimation (\(\hat\sigma = \sigma\)), and \emph{overestimation} (\(\hat\sigma > \sigma\)). As shown in Fig.~\ref{fig:sigma_estimate}, a slight overestimation increases effective regularization, helping generated samples better adhere to the data manifold.

\begin{figure}[htbp]
    \centering
    \begin{subfigure}{0.32\textwidth}
        \centering
        \includegraphics[width=\linewidth]{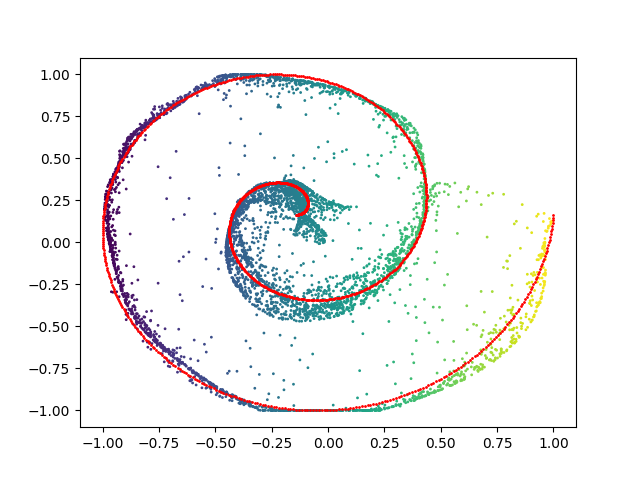}
        \caption{\(\hat\sigma = 0\) (underestimation)}
    \end{subfigure}\hfill
    \begin{subfigure}{0.32\textwidth}
        \centering
        \includegraphics[width=\linewidth]{fig/toy/4_epoch_292000_sigma_0.05_train_sigma_0.05.png}
        \caption{\(\hat\sigma = 0.05\) (accurate)}
    \end{subfigure}\hfill
    \begin{subfigure}{0.32\textwidth}
        \centering
        \includegraphics[width=\linewidth]{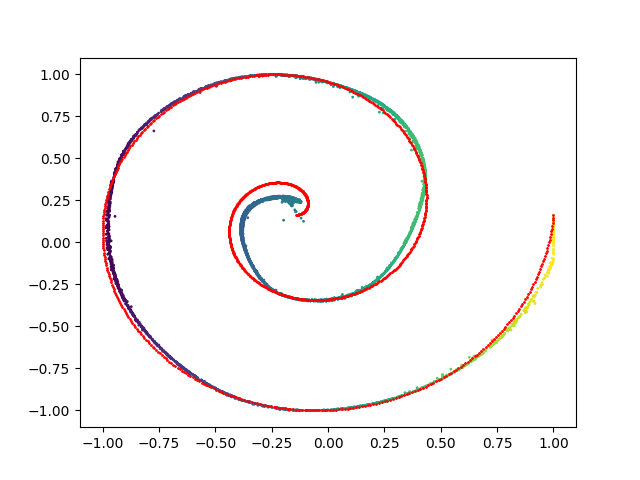}
        \caption{\(\hat\sigma = 0.10\) (overestimation)}
    \end{subfigure}
    \caption{\textbf{Effect of noise-level misspecification in a 2D toy example.} 
    We train with a noisy dataset at \(\sigma=0.05\) and vary the assumed noise \(\hat\sigma\) during pretraining and distillation. Underestimation (\(\hat\sigma<\sigma\)) yields noisy generations; accurate estimation recovers the target structure; and slight overestimation (\(\hat\sigma>\sigma\)) acts as additional regularization, improving adherence to the data manifold. See also Section~\ref{sec:ablations} for ablations.}
    \label{fig:sigma_estimate}
\end{figure}

\section{Phase I: Variants of Pretraining Details}
\label{app:phase1}
In Section~\ref{sec:phase1}, we introduced four variants of diffusion model pretraining methods. Here, we provide the training details for each of these variants.

\subsection{Standard Diffusion}
\label{app:stand_diffusion}
Standard Diffusion aims to learn the distribution of $y^{(i)} = \mathcal{A}(x^{(i)})+\sigma\varepsilon$ directly. To do this, we train the diffusion model on the corrupted dataset $\{y^{(i)}\}_{i=1}^N$. The training objective is given by:

\begin{align}
\label{eq:loss_sd}
    \mathcal{L}_{\text{SD}} = \mathbb{E}_{\sigma_t, y, \varepsilon} \left[ \lambda(t) \left\| f_\phi(y + \sigma_t \varepsilon, t) - y \right\|_2^2 \right],
\end{align}

where $\lambda(t)$ is a time-dependent weighting function and $\sigma_t$ is sampled from a predefined noise schedule. We are using the EDM schedule for $\lambda(t)$ and $\sigma_t$ same as in \cite{karras2022elucidating}.

The detailed training procedure is summarized in Algorithm~\ref{alg:standard_diffusion}.

\begin{algorithm}[H]
\caption{Standard Diffusion Training}
\label{alg:standard_diffusion}
\begin{algorithmic}[1]
\Procedure{Standard-Diffusion}{$\{y^{(i)}\}_{i=1}^N$, $\sigma$, $p(\sigma_t)$, $K$} 
    \For{$k = 1$ to $K$} 
        \State Sample a batch $y \sim \{y^{(i)}\}_{i=1}^N$
        \State Sample noise level $\sigma_t \sim p(\sigma_t)$
        \State Sample noise $\varepsilon \sim \mathcal{N}(0, I_d)$
        \State Construct noisy input: $y_t = y + \sigma_t \cdot \varepsilon$
        \State Update parameters $\phi$ via gradient descent on $\mathcal{L}_{\text{SD}}$ (Eq.~\ref{eq:loss_sd})
    \EndFor
    \State \Return Trained diffusion model $f_\phi$
\EndProcedure
\end{algorithmic}
\end{algorithm}

\subsection{Diffusion for Denoising}
\label{app:ambient_tweedie}

Diffusion for denoising aims to learn the distribution of \(\mathcal{A}(x^{(i)})\) from noisy observations \(y^{(i)} = \mathcal{A}(x^{(i)}) + \sigma \varepsilon\), where \(\sigma\) is a known (constant) noise level. The objective is to mitigate the impact of additive noise during training. Related denoising strategies have been explored in~\cite{daras2025ambientscaling}.

The Tweedie adjustment is compatible with more advanced diffusion training techniques (Appendix~\ref{app:ambient_inpainting} and Appendix~\ref{app:ambient_mri}), tailored for random inpainting and Fourier-space inpainting, respectively.

\paragraph{Training loss.}
We use the following objective:
\begin{align}
\label{eq:loss_ambient_tweedie}
\mathcal{L}_{\text{D}}
= \mathbb{E}_{\,\sigma_t,\,y,\,\varepsilon}\!\left[
\left\|
\frac{\sigma_t^2 - \sigma^2}{\sigma_t^2}\, f_\phi(y_t, t)
+ \frac{\sigma^2}{\sigma_t^2}\, y_t
- y
\right\|_2^2
\right],
\end{align}
where \( \sigma_t \) is sampled from the diffusion noise schedule and clipped so that \( \sigma_t \ge \sigma \), and \( y_t := y + \sqrt{\sigma_t^2 - \sigma^2}\,\varepsilon \).

This formulation induces differences in both training and inference. Full details are given in Algorithm~\ref{alg:ambient_tweedie} and Algorithm~\ref{alg:ambient_sampling}. For sampling, we denote by \textbf{Teacher-Full} the standard EDM sampling from \(\sigma_{\max}\) down to \(0\), and by \textbf{Teacher-Truncated} the EDM sampling truncated at the data noise level \(\sigma\). See Algorithm~\ref{alg:ambient_sampling} for details.

\begin{algorithm}[H]
\caption{Diffusion for Denoising (Training)}
\label{alg:ambient_tweedie}
\begin{algorithmic}[1]
\Procedure{Diffusion-for-Denoising-Training}{$\{y^{(i)}\}_{i=1}^N, \sigma, p(\sigma_t), K$}
    \For{\(k = 1 \ \text{to}\ K\)}
        \State Sample a minibatch \(y \sim \{y^{(i)}\}_{i=1}^N\), \(\sigma_t \sim p(\sigma_t)\), \(\varepsilon \sim \mathcal{N}(0, I_d)\)
        \State \(\sigma_t \leftarrow \max\{\sigma, \sigma_t\}\) \Comment{Clip noise level}
        \State \(y_t \leftarrow y + \sqrt{\sigma_t^2 - \sigma^2}\,\varepsilon\)
        \State Update \(f_\phi\) by descending \(\nabla_{\phi}\, \mathcal{L}_{\text{D}}\) in Eq.~\ref{eq:loss_ambient_tweedie}
    \EndFor
    \State \Return trained diffusion model \(f_\phi\)
\EndProcedure
\end{algorithmic}
\end{algorithm}

\vspace{0.75em}

\begin{algorithm}[H]
\caption{Diffusion for Denoising (Sampling)}
\label{alg:ambient_sampling}
\begin{algorithmic}[1]
\Procedure{Diffusion-for-Denoising-Sampling}{$f_\phi, \sigma, \{\sigma_t\}_{t=0}^{T}$}
    \State Sample \(x_T \sim \mathcal{N}(0, \sigma_T^2 I_d)\)
    \For{\(t = T, T-1, \dots, 1\)}
        \State \(\hat{x}_0 \leftarrow f_\phi(x_t, t)\)
        \If{truncation enabled \(\wedge\ \sigma_{t-1} < \sigma\)}
            \State \Return \(\hat{x}_0\) \Comment{Teacher-Truncated}
        \EndIf
        \State \(x_{t-1} \leftarrow x_t - \frac{\sigma_t - \sigma_{t-1}}{\sigma_t}\,\bigl(x_t - \hat{x}_0\bigr)\) \Comment{EDM-style update}
    \EndFor
    \State \Return \(\hat{x}_0\) \Comment{Teacher-Full}
\EndProcedure
\end{algorithmic}
\end{algorithm}

\subsection{Diffusion for Random Inpainting Task}
\label{app:ambient_inpainting}
Daras et al.~\cite{daras2023ambient} introduced a diffusion training objective specifically designed for the random inpainting task. The goal is to learn the underlying clean data distribution $p_X$ from partial observations of the form $y^{(i)} = M x^{(i)}$, where $M$ is a binary inpainting mask applied to the image. To introduce further stochasticity, a secondary corruption mask $\tilde{M}$ is applied during training.

The training loss for random inpainting is given by:
\begin{align}
\label{eq:loss_inpainting}
    \mathcal{L}_{\text{RI}} = \mathbb{E}_{\sigma_t, y, \varepsilon} \left[ \left\| M \left(f_\phi(\tilde{M}, \tilde{M} y_t, t) - y\right) \right\|_2^2\right],
\end{align}
where $y_t = y + \sigma_t \varepsilon$, and $f_\phi$ is conditioned on both the corrupted observation and the masking pattern. All training schedules and hyperparameters follow the same configuration as in the original paper~\cite{daras2023ambientdiffusion}.

The full training procedure is outlined in Algorithm~\ref{alg:ambient_inpainting}.

\begin{algorithm}[H]
\caption{Diffusion Training for Random Inpainting}
\label{alg:ambient_inpainting}
\begin{algorithmic}[1]
\Procedure{Diffusion-Inpainting-Training}{$\{y^{(i)}\}_{i=1}^N$, $M$, $p(\sigma_t)$, $K$}
    \For{$k = 1$ to $K$}
        \State Sample batch $y \sim \{y^{(i)}\}_{i=1}^N$
        \State Sample noise level $\sigma_t \sim p(\sigma_t)$
        \State Sample noise $\varepsilon \sim \mathcal{N}(0, I_d)$
        \State Sample a further corruption mask $\tilde{M}$ conditioned on $M$
        \State Compute $y_t \leftarrow y + \sigma_t \cdot \varepsilon$
        \State Update $f_\phi$ using gradient descent on $\mathcal{L}_{\text{RI}}$ in Eq.~\ref{eq:loss_inpainting}
    \EndFor
    \State \Return Trained diffusion model $f_\phi$
\EndProcedure
\end{algorithmic}
\end{algorithm}

\subsection{Diffusion Training for Fourier Space Inpainting}
\label{app:ambient_mri}
\cite{aali2025ambient-mri} introduced a diffusion training objective specifically designed for the multi-coil MRI on Fourier Space. The goal is to learn the underlying clean data distribution $p_X$ from $$y =  ( \underbrace{\sum_{i=1}^{N_c} S_i^H \mathcal{F}^{-1} M \mathcal{F} S_i )}_{\cA}  x,
$$ where \( S_i \) denotes the coil sensitivity profile of the \( i \)-th coil, \( \mathcal{F} \) is the Fourier transform, and \( M \) is the  masking operator in Fourier space. The further corrupted observation would be
$$\tilde y =  ( \underbrace{\sum_{i=1}^{N_c} S_i^H \mathcal{F}^{-1}\tilde M \mathcal{F} S_i )}_{\tilde \cA}  x.
$$

The training loss for multi-coil MRI is given by:
\begin{align}
\label{eq:loss_mri}
    \mathcal{L}_{\text{FS}} = \mathbb{E}_{\sigma_t, y, \varepsilon}  \left[ \left\| \cA(f_\phi(\tilde y_t, \tilde M, t ) - y)  \right\|_2^2 \right],
\end{align}
where $y_t = y + \sigma_t \varepsilon$, and $f_\phi$ is conditioned on both the corrupted observation and the masking pattern.

The full training procedure is outlined in Algorithm~\ref{alg:ambient_mri}. All training schedules and hyperparameters follow the same configuration as in the original paper~\cite{aali2025ambient-mri}.

\begin{algorithm}[H]
\caption{Diffusion Training for Fourier Space Inpainting}
\label{alg:ambient_mri}
\begin{algorithmic}[1]
\Procedure{Diffusion-FSInpainting-Training}{$\{y^{(i)}\}_{i=1}^N$, $M$, $p(\sigma_t)$, $K$, $S$}
    \For{$k = 1$ to $K$}
        \State Sample batch $y \sim \{y^{(i)}\}_{i=1}^N$
        \State Sample noise level $\sigma_t \sim p(\sigma_t)$
        \State Sample noise $\varepsilon \sim \mathcal{N}(0, I_d)$
        \State Sample a further corruption mask $\tilde{M}$ conditioned on $M$
        \State Compute $y_t \leftarrow y + \sigma_t \cdot \varepsilon$
        \State Update $f_\phi$ using gradient descent on $\mathcal{L}_{\text{RI}}$ in Eq.~\ref{eq:loss_mri}
    \EndFor
    \State \Return Trained diffusion model $f_\phi$
\EndProcedure
\end{algorithmic}
\end{algorithm}

\section{Phase II: Distillation}
\label{app:distillation}

In Section~\ref{sec:distill}, we introduced the SiD generator loss (Eq.~\ref{eq:distillation}). The SiD objective admits additional design choices, as discussed in the original paper~\cite{zhou2024score}. For completeness, we present the exact generator-loss formulation used in our implementation and defer details such as time-step scheduling and hyperparameter settings to the cited work.

\paragraph{SiD generator loss.}
Let \(x_g = G_\theta(z)\) and \(x_t = x_g + \sigma_t \varepsilon\) with \(z,\varepsilon \sim \mathcal{N}(0, I_d)\). The loss is
\begin{align}
\label{eq:sid_generator}
\mathcal{L}_{\text{SiD}}(w_g)
&=
\mathbb{E}_{\,z,\,t,\,\varepsilon}\!\Big[
(1-\alpha)\,\lambda(t)\,\big\| f_{\psi}(x_t,t) - f_{\phi}(x_t,t) \big\|_2^2 \nonumber\\
&\qquad\qquad\quad
+\, \lambda(t)\,\big(f_{\phi}(x_t,t) - f_{\psi}(x_t,t)\big)^{\!\top}\!\big(f_{\psi}(x_t,t) - x_g\big)
\Big],
\end{align}
where \(f_{\phi}\) is the teacher (pretrained diffusion model), \(f_{\psi}\) is the fake diffusion model, \(\lambda(t)\) is a time-dependent weight, and \(\alpha\) balances the two terms. Unless otherwise noted, we use \(\alpha = 1.2\), following~\cite{zhou2024score}, which reports strong performance across tasks.

Note that, following common practice, we initialize both the auxiliary diffusion model $f_\psi$ and the generator $G_\theta$ from the teacher diffusion model $f_\phi$~\cite{zhou2024score,yin2024improved,yin2024one}, which is crucial for facilitating and stabilizing training. Importantly, all three networks share the same architecture and capacity. Nonetheless, the generator has been shown to outperform the teacher~\cite{zhou2024score}, as it avoids the accumulation error inherent in multi-step sampling.

\section{Implementation Details}
\label{app:implementation_details}

\paragraph{Hardware and measurement.}
Unless otherwise noted, all pretraining and distillation runs use 8$\times$NVIDIA A6000 GPUs. Inference wall-clock time is measured on 4$\times$ NVIDIA A6000 GPUs with a batch size of 1024. Images are normalized to \([-1,1]\) prior to adding Gaussian noise.

\paragraph{Teacher pretraining: denoising task.}
For the denoising task, we follow the EDM setup~\cite{karras2022elucidating}. On CIFAR-10, we train for \(\mathbf{200\,M}\) image iterations to match the EDM computational budget; on FFHQ, CelebA-HQ, and AFHQ-v2 we train for \(\mathbf{100\,M}\) iterations (half of EDM’s budget). We adopt EDM hyperparameters verbatim~\cite{karras2022elucidating}. All images are corrupted with additive Gaussian noise at the prescribed level.

\paragraph{Distillation: denoising task.}
For the distillation phase, we train the one-step generator on CIFAR-10 for \(\mathbf{100\,M}\) image iterations and on FFHQ, CelebA-HQ, and AFHQ-v2 for \(\mathbf{15\,M}\) iterations; this budget suffices to reach competitive FID. Unless stated otherwise, hyperparameters mirror those of SiD~\cite{zhou2024score}. For CelebA-HQ, we use the same configuration as FFHQ/AFHQ-v2 except for a dropout rate of \(0.15\).

\paragraph{Teacher-consistency.}
For experiments in Tab \ref{tab:main_table} involving Teacher-Consistency~\cite{daras2024ambienttweedie}, we use 8 reverse steps and 32 Monte Carlo samples to approximate expectations. The consistency-loss weight is selected from \(\{0.1, 1.0, 10.0\}\) as a fixed coefficient to maximize performance.

\paragraph{General corruption tasks: pretraining.}
We again follow the EDM training protocol~\cite{karras2022elucidating} and pretrain for \(\mathbf{100\,M}\) image iterations:
(i) For Gaussian deblurring and super-resolution with \(\sigma=0\), we use the Standard Diffusion objective (Eq.~\ref{eq:loss_sd}).
(ii) For the same tasks with \(\sigma=0.2\), we adopt the Diffusion for denoising loss (Eq.~\ref{eq:loss_ambient_tweedie}).
(iii) For random inpainting with \(\sigma=0\), we use the publicly available Diffusion for random inpainting checkpoint at \url{https://github.com/giannisdaras/ambient-diffusion/tree/main}.
(iv) For random inpainting with \(\sigma=0.2\), we train with the Diffusion for denoising loss (Eq.~\ref{eq:loss_ambient_tweedie}); we avoid the dedicated inpainting loss (Eq.~\ref{eq:loss_inpainting}) in this noisy regime due to instability, consistent with~\cite{daras2023ambientdiffusion}.
(v) For Fourier-space random inpainting on MRI, we initialize from the checkpoint at \url{https://github.com/utcsilab/ambient-diffusion-mri.git}.
Unless noted, pretrained diffusion models use EDM hyperparameters~\cite{karras2022elucidating}.

\paragraph{General corruption tasks: distillation.}
During distillation we train the one-step generator for \(\mathbf{50\text{–}100\,M}\) image iterations, with early stopping if the validation FID begins to diverge.

\paragraph{Reproducibility.}
Upon acceptance, we will release code and checkpoints to facilitate reproduction and further research.

\clearpage
\newpage
\section{Additional Qualitative Results}
\label{app:quality}
In this section, we present additional qualitative results. A quick view is in Appendix \ref{app:snapshot}.

\begin{figure}
    \centering
    \includegraphics[width=\linewidth]{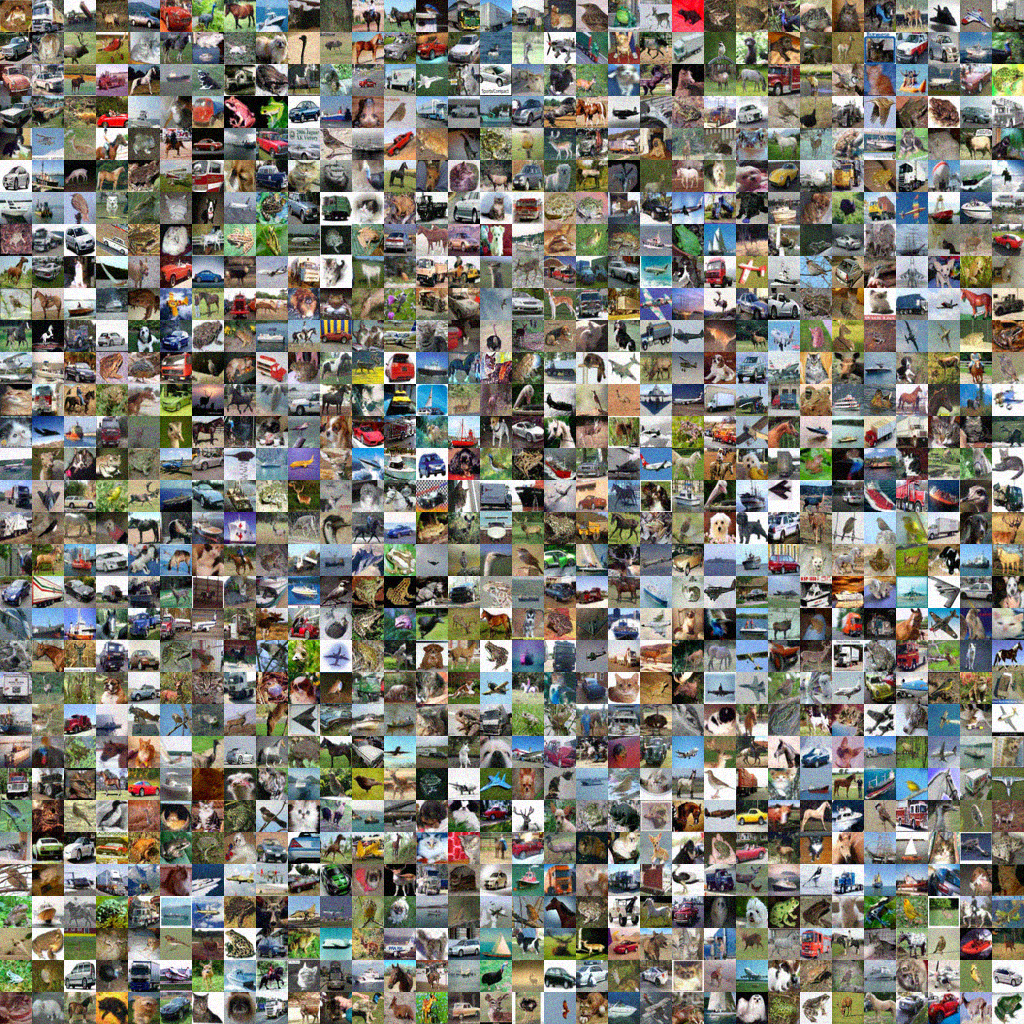}
    \caption{CIFAR-10 32x32 noisy dataset with $\sigma=0.1$ (FID: 73.74).}
    \label{fig:5}
\end{figure}

\begin{figure}
    \centering
    \includegraphics[width=\linewidth]{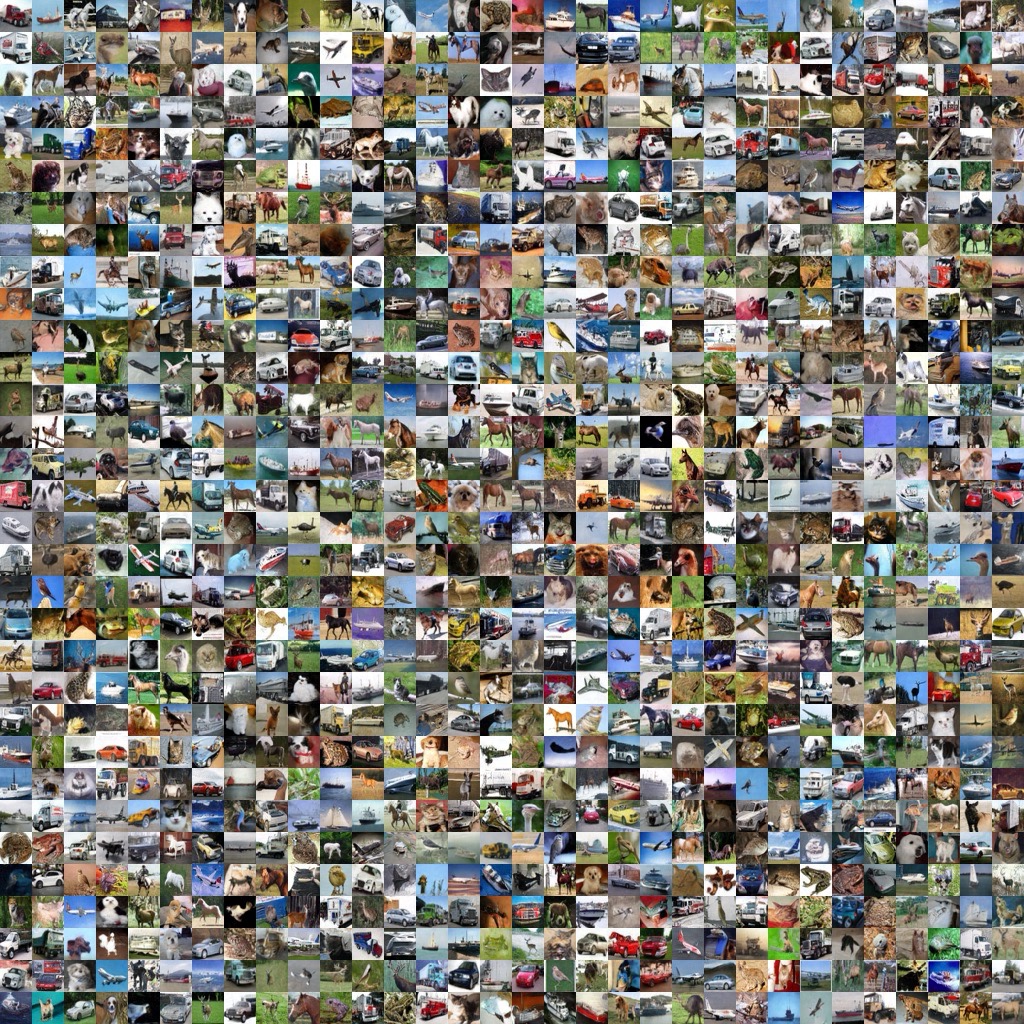}
    \caption{Unconditional CIFAR-10 32x32 random images generated with RSD training with noisy dataset with $\sigma=0.1$ (FID: 3.98).}
    \label{fig:4}
\end{figure}

\begin{figure}
    \centering
    \includegraphics[width=\linewidth]{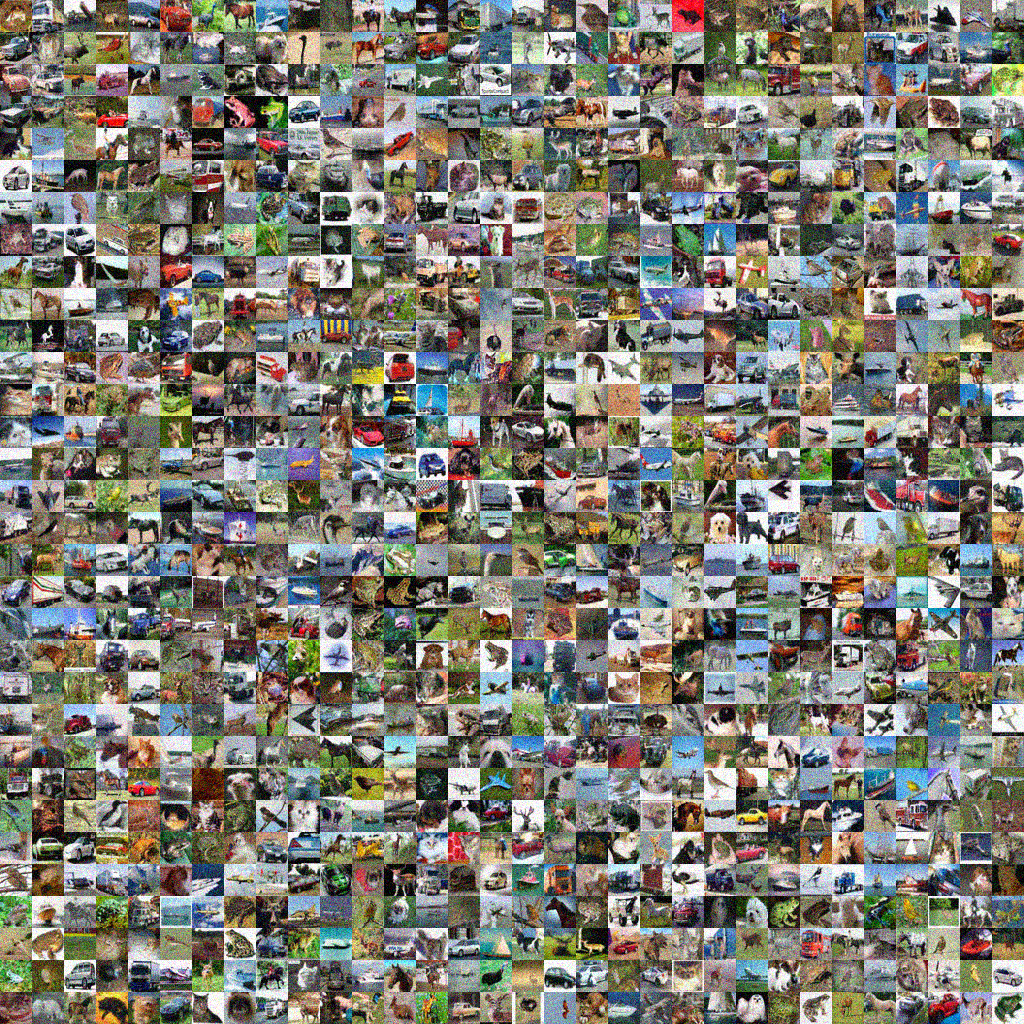}
    \caption{CIFAR-10 32x32 noisy dataset with $\sigma=0.2$ (FID: 127.22).}
    \label{fig:3}
\end{figure}

\begin{figure}
    \centering
    \includegraphics[width=\linewidth]{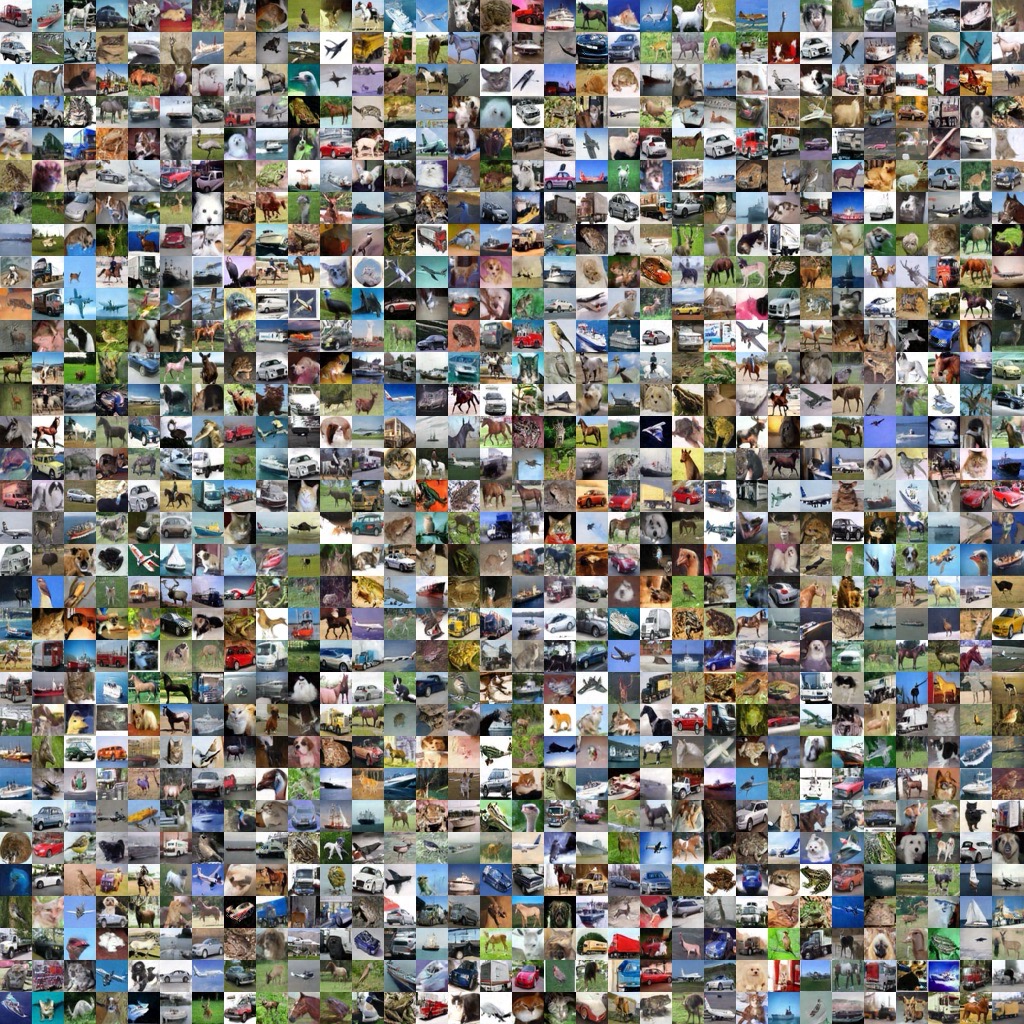}
    \caption{Unconditional CIFAR-10 32x32 random images generated with RSD training with noisy dataset with $\sigma=0.2$ (FID: 4.77).}
    \label{fig:2}
\end{figure}

\begin{figure}
    \centering
    \includegraphics[width=\linewidth]{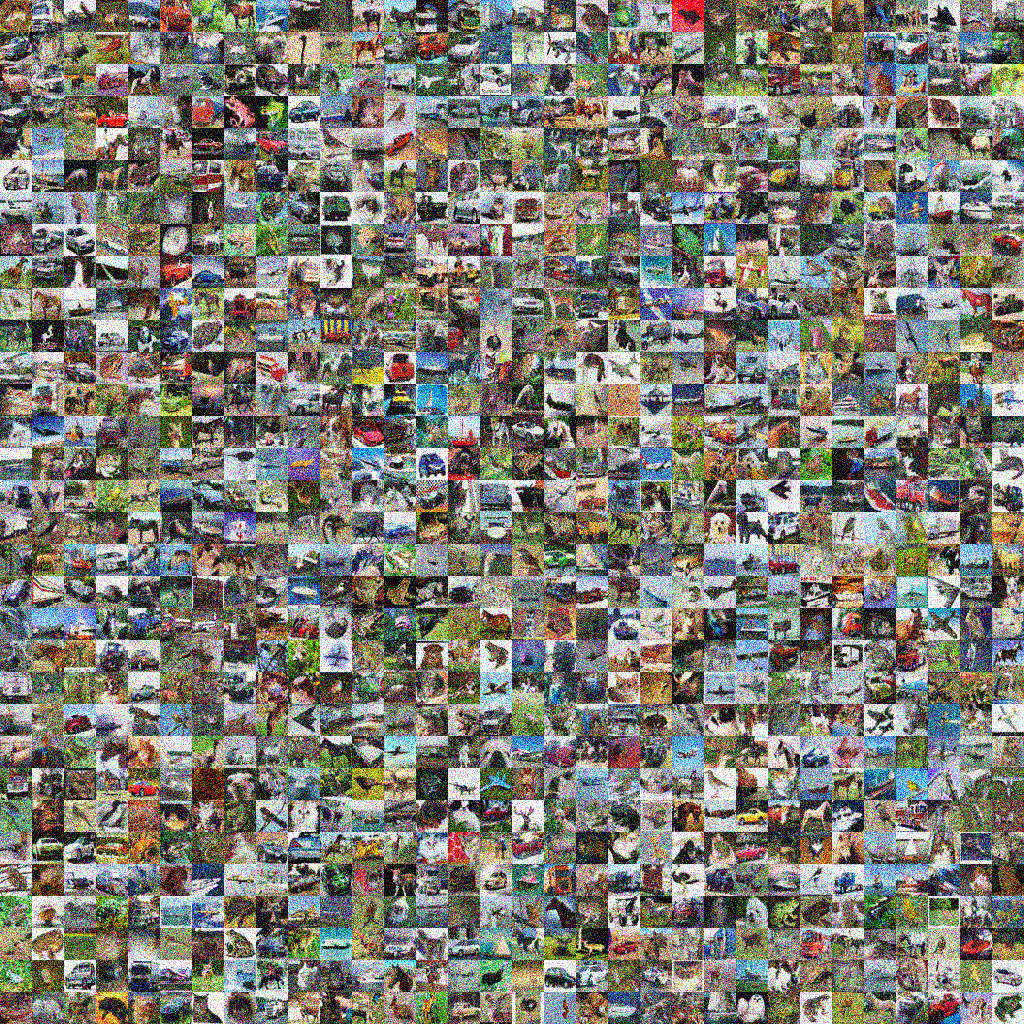}
    \caption{CIFAR-10 32x32 noisy dataset with $\sigma=0.4$ (FID: 205.52).}
    \label{fig:1}
\end{figure}

\begin{figure}
    \centering
    \includegraphics[width=\linewidth]{fig/dsd_quality/cifar10_sigma_01_fakes_1.200000_038400.jpeg}
    \caption{Unconditional CIFAR-10 32x32 random images generated with RSD training with noisy dataset with $\sigma=0.4$ (FID: 21.63).}
    \label{fig:cifar-0.4}
\end{figure}


\begin{figure}
    \centering
    \includegraphics[width=\linewidth]{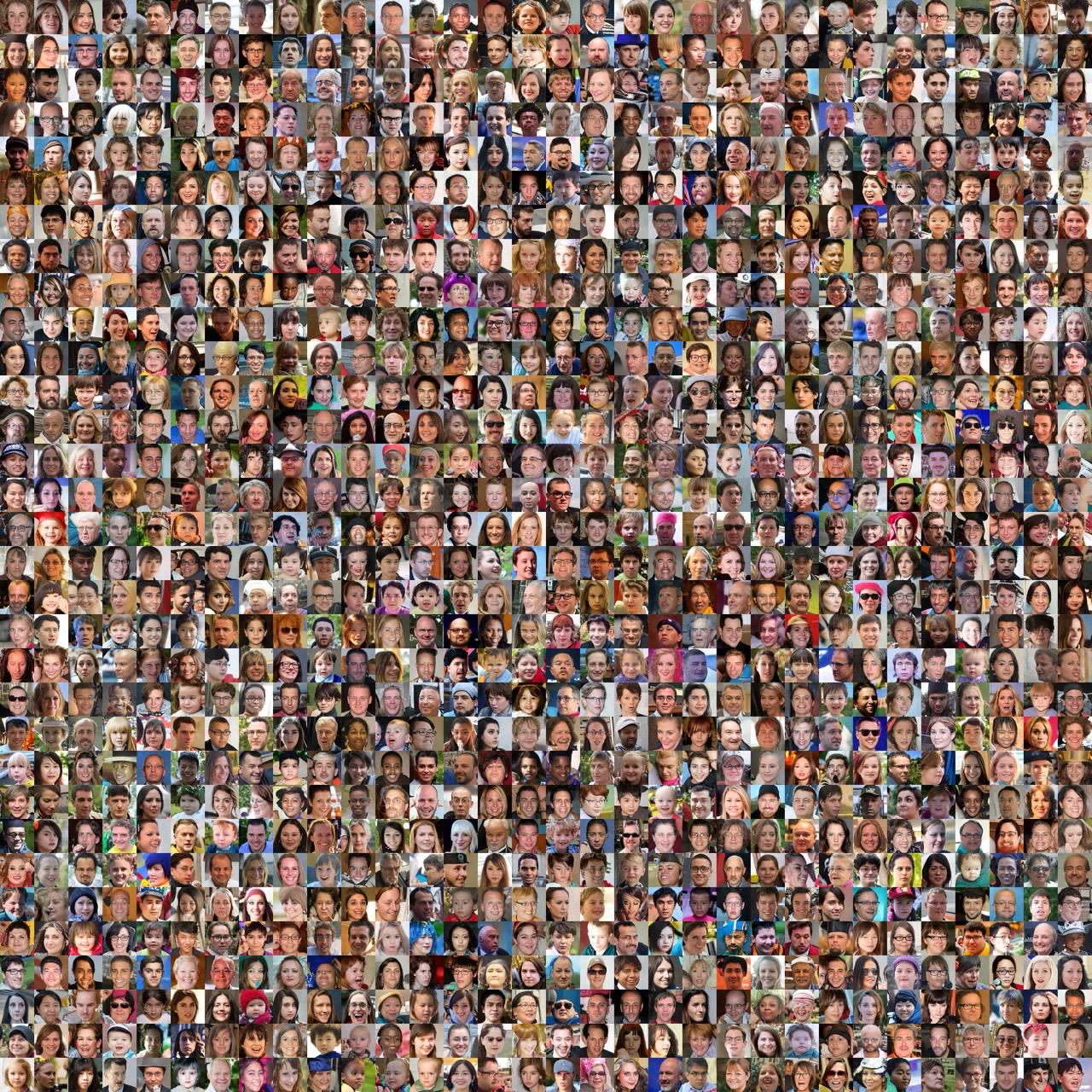}
    \caption{Unconditional FFHQ 64x64 random images generated with RSD training on noisy dataset with $\sigma=0.2$ (FID: 6.29).}
    \label{fig:result_ffhq}
\end{figure}


\begin{figure}
    \centering
    \includegraphics[width=\linewidth]{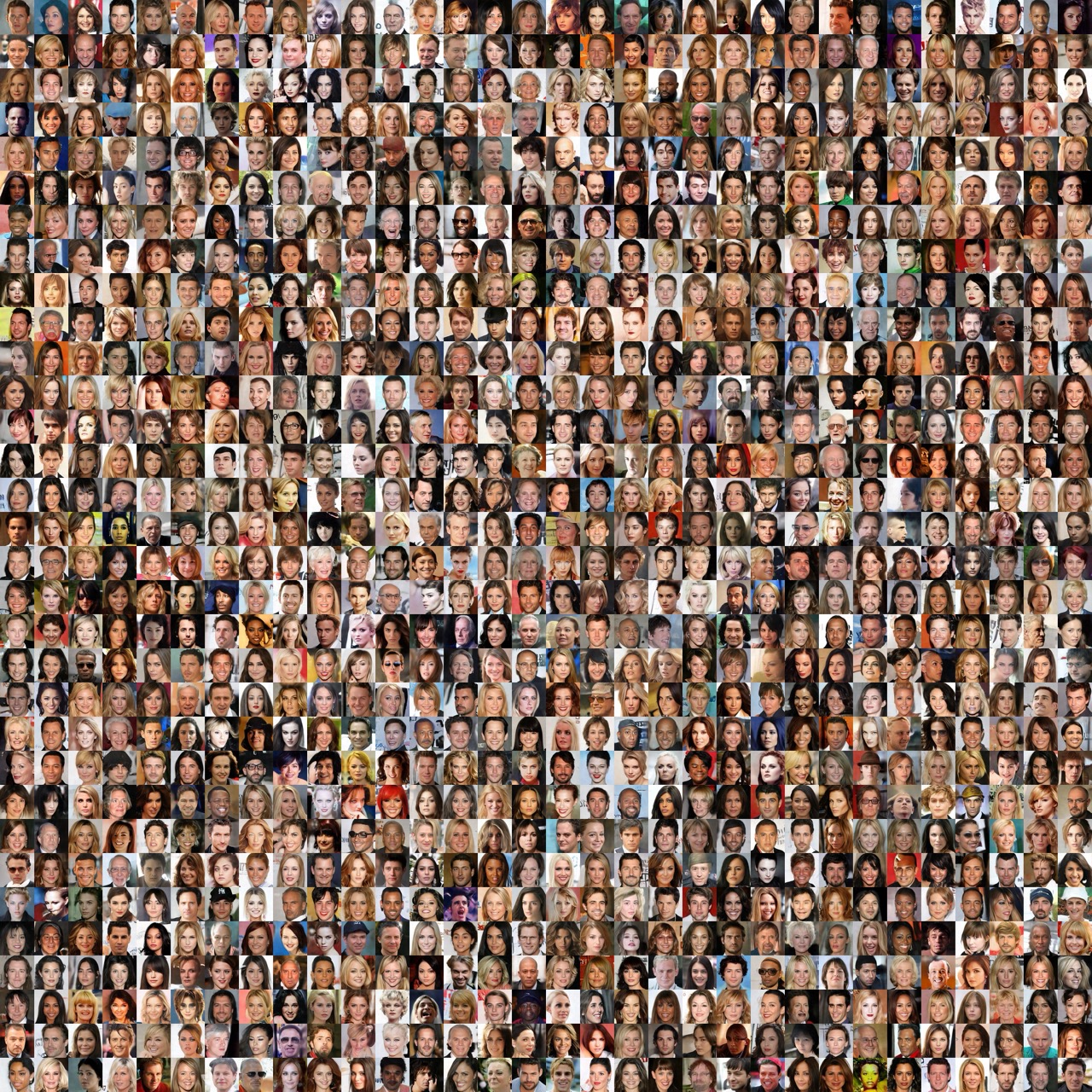}
    \caption{Unconditional CelebA-HQ 64x64 random images generated with RSD training on noisy dataset with $\sigma=0.2$ (FID: 6.48).}
    \label{fig:result_celeba}
\end{figure}


\begin{figure}
    \centering
    \includegraphics[width=\linewidth]{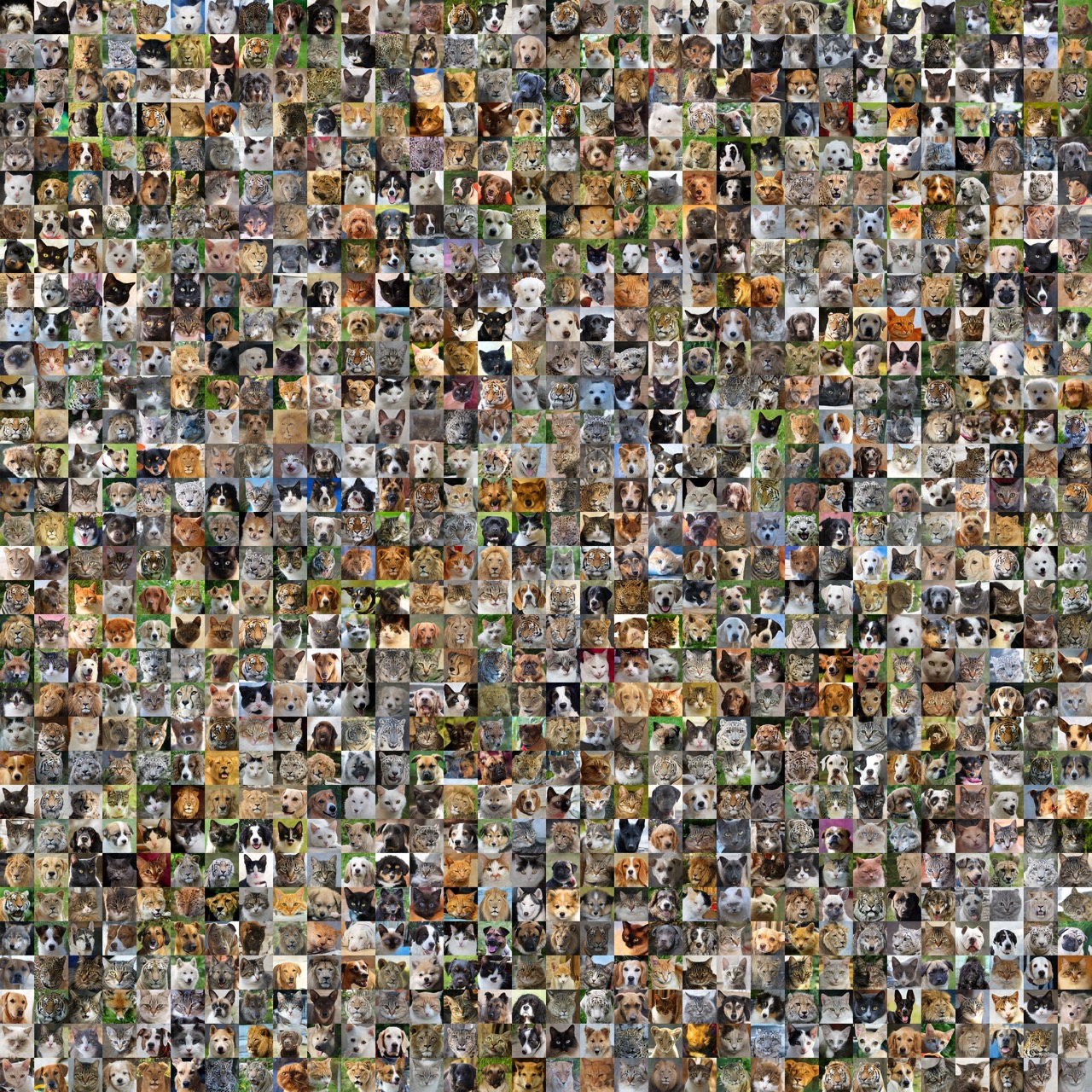}
    \caption{Unconditional AFHQ-v2 64x64 random images generated with RSD training on noisy dataset with $\sigma=0.2$ (FID: 5.42).}
    \label{fig:result_afhq}
\end{figure}

\begin{figure}
    \centering
    \includegraphics[width=\linewidth]{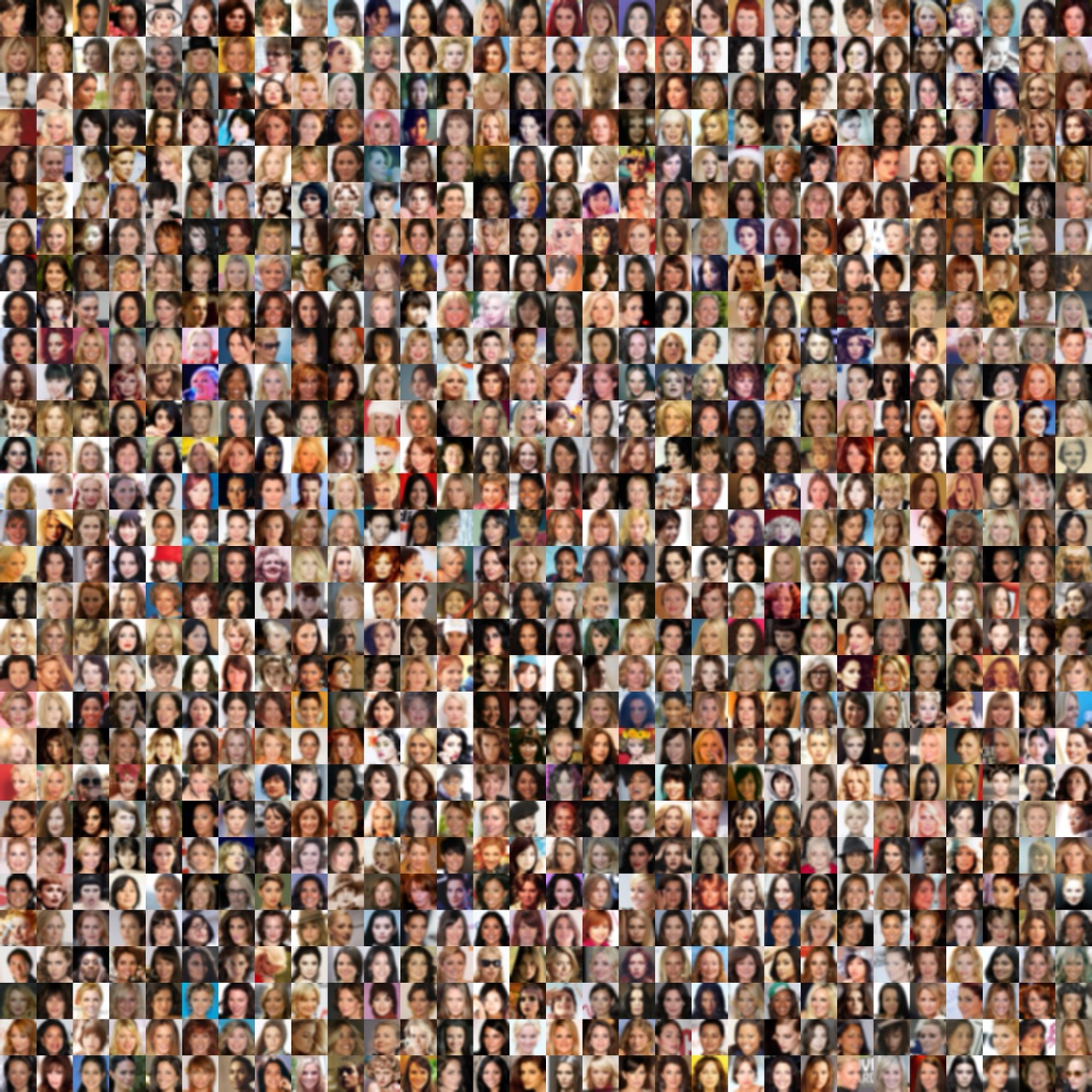}
    \caption{Examples from the training dataset used for the Gaussian blur task with $\sigma = 0$.}
\end{figure}

\begin{figure}
    \centering
    \includegraphics[width=\linewidth]{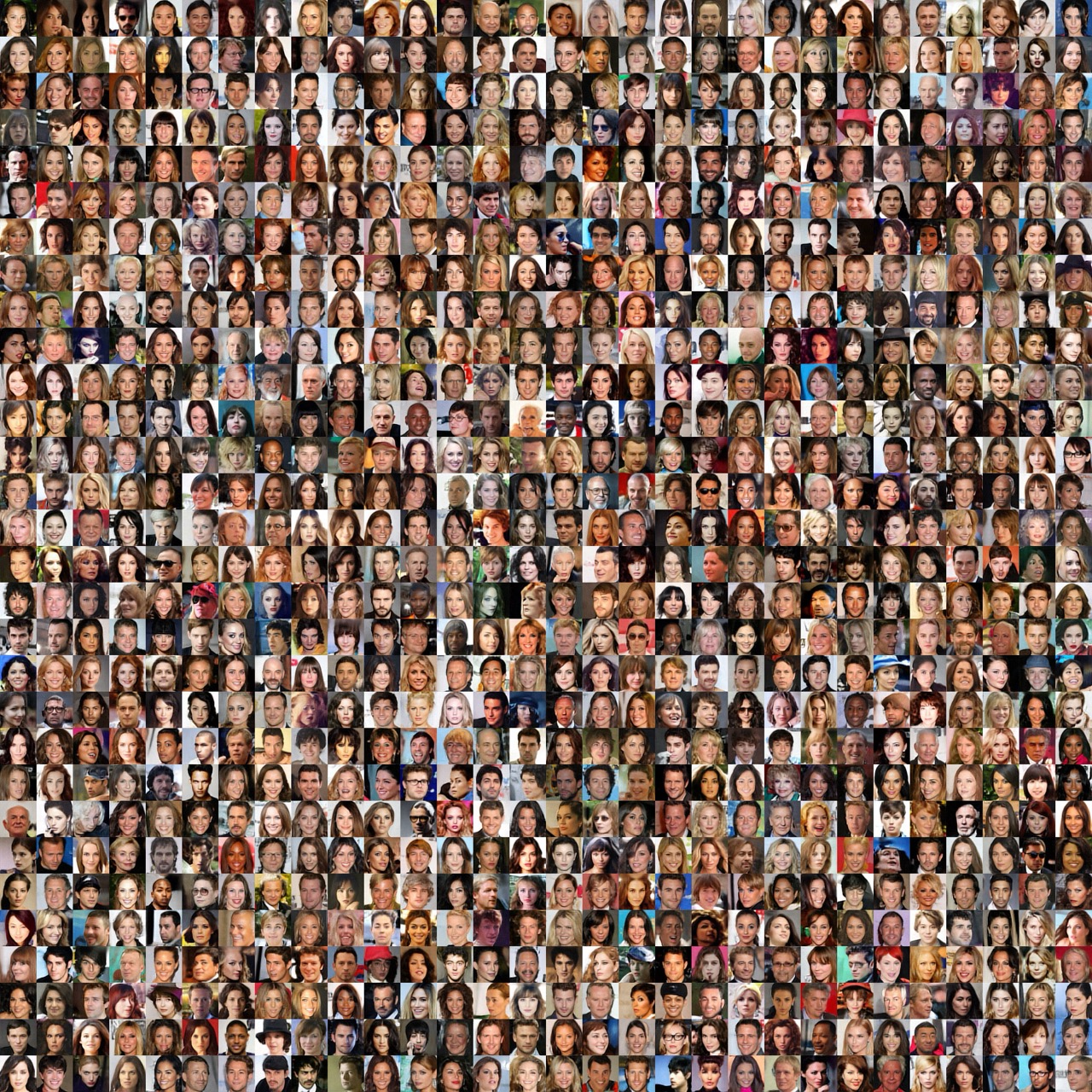}
    \caption{Qualitative results of RSD generation for the Gaussian blur task with $\sigma = 0$.}
\end{figure}

\begin{figure}
    \centering
    \includegraphics[width=\linewidth]{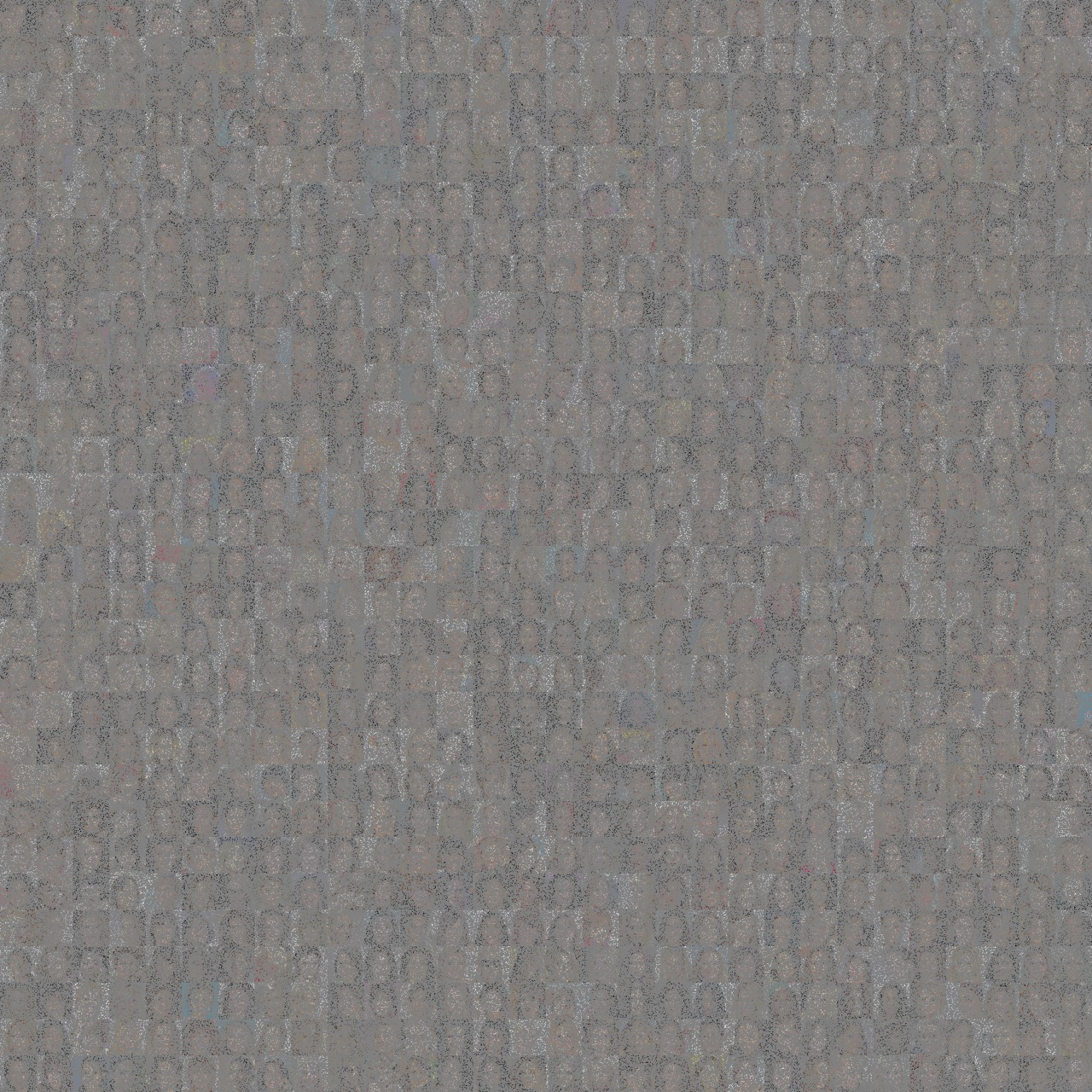}
\caption{Dataset samples for the random inpainting task with $p = 0.9$ and $\sigma = 0$.}
\end{figure}

\begin{figure}
    \centering
    \includegraphics[width=\linewidth]{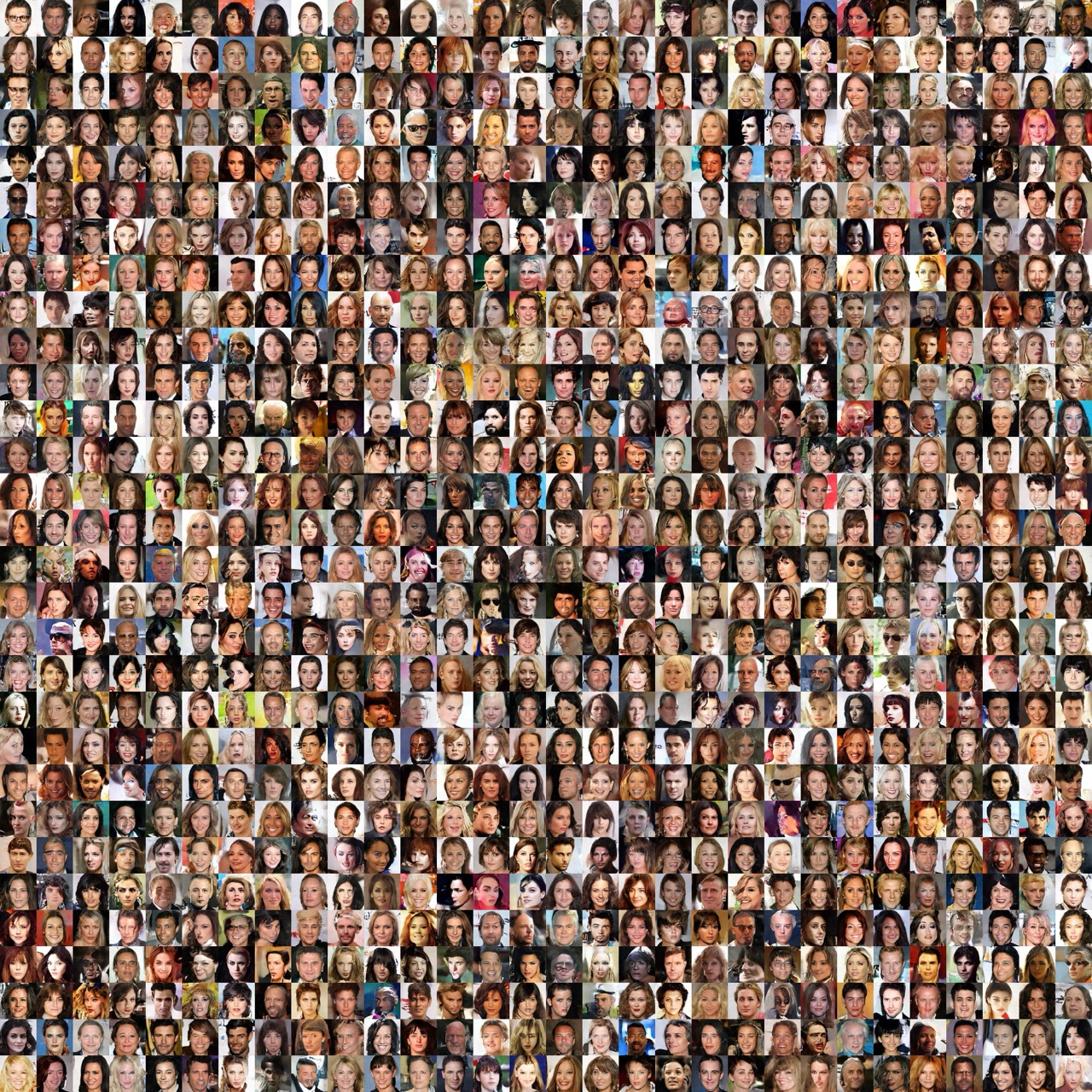}
    \caption{RSD generation results for the random inpainting task with $p = 0.9$ and $\sigma = 0$.}
\end{figure}

\begin{figure}
    \centering
    \includegraphics[width=\linewidth]{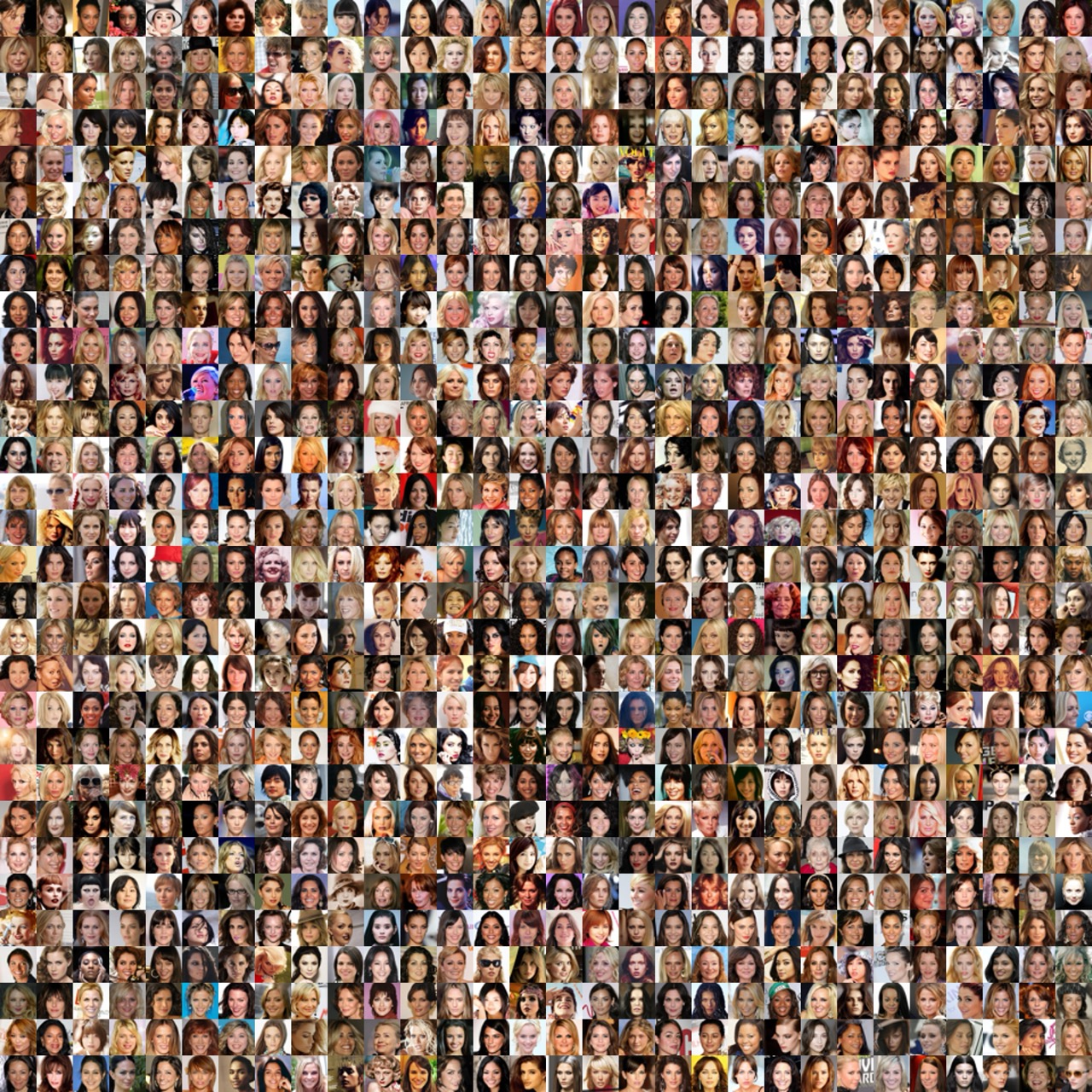}
    \caption{Dataset samples for Super Resolution $\times2$ Task with $\sigma=0$}
     
\end{figure}

\begin{figure}
    \centering
    \includegraphics[width=\linewidth]{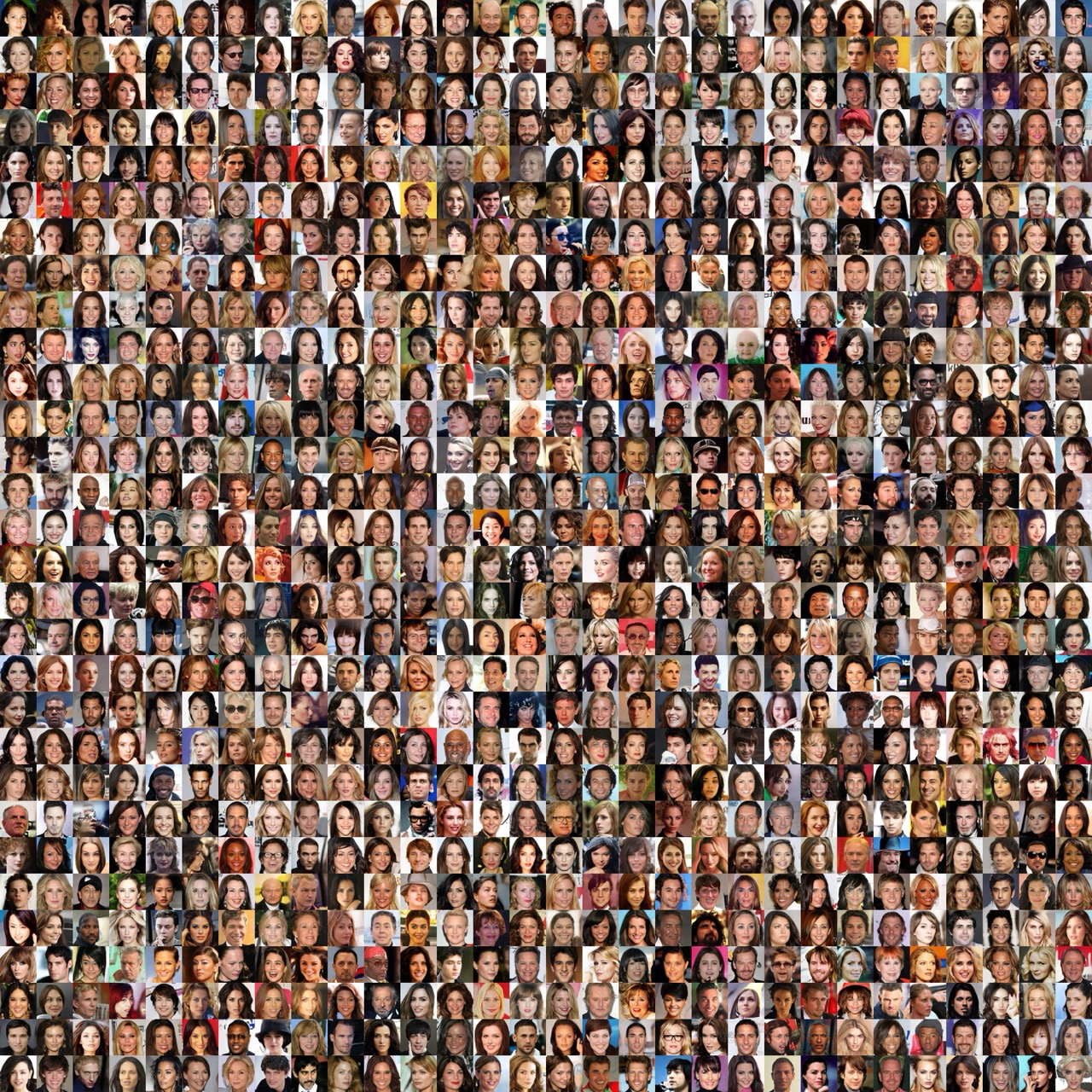}
    \caption{RSD generation results for Super Resolution $\times2$ Task with $\sigma=0$}
     
\end{figure}

\begin{figure}
    \centering
    \includegraphics[width=\linewidth]{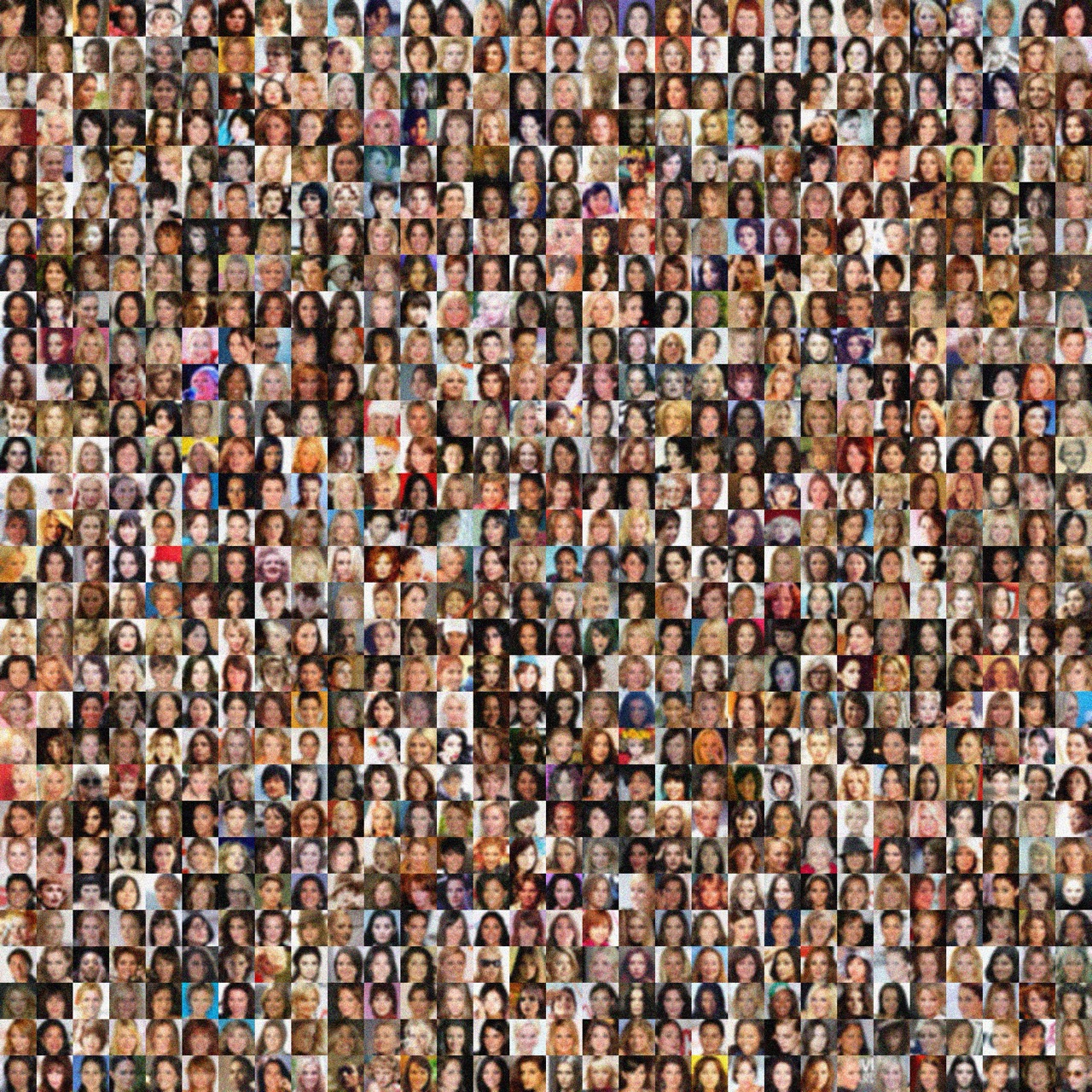}
    \caption{Dataset samples for Gaussian Blur with $\sigma=0.2$}
\end{figure}

\begin{figure}
    \centering
    \includegraphics[width=\linewidth]{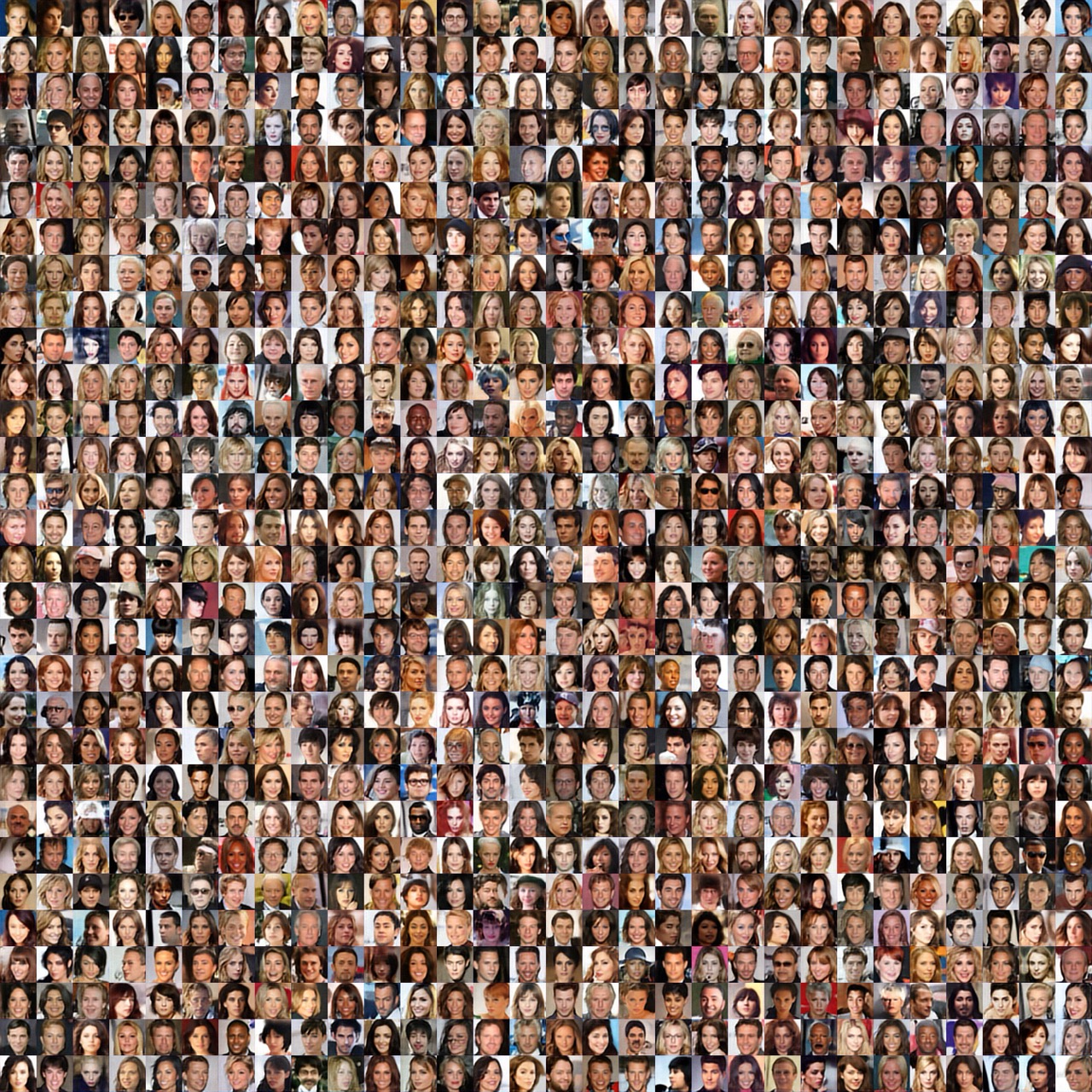}
    \caption{RSD generation results for Gaussian Blur with $\sigma=0.2$}
     
\end{figure}

\begin{figure}
    \centering
    \includegraphics[width=\linewidth]{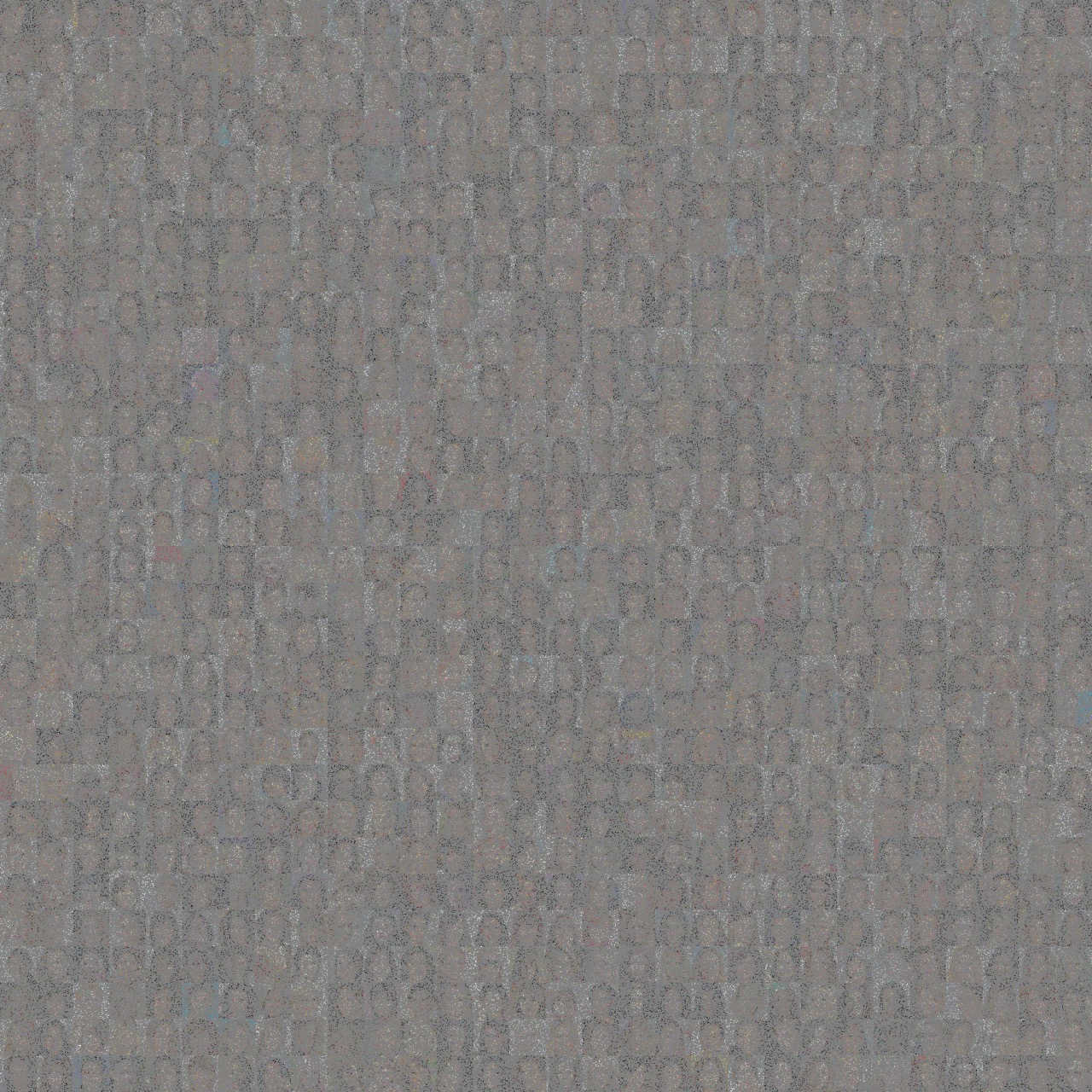}
    \caption{Dataset samples for Random Inpainting with $p=0.9$ and $\sigma=0.2$}
\end{figure}

\begin{figure}
    \centering
    \includegraphics[width=\linewidth]{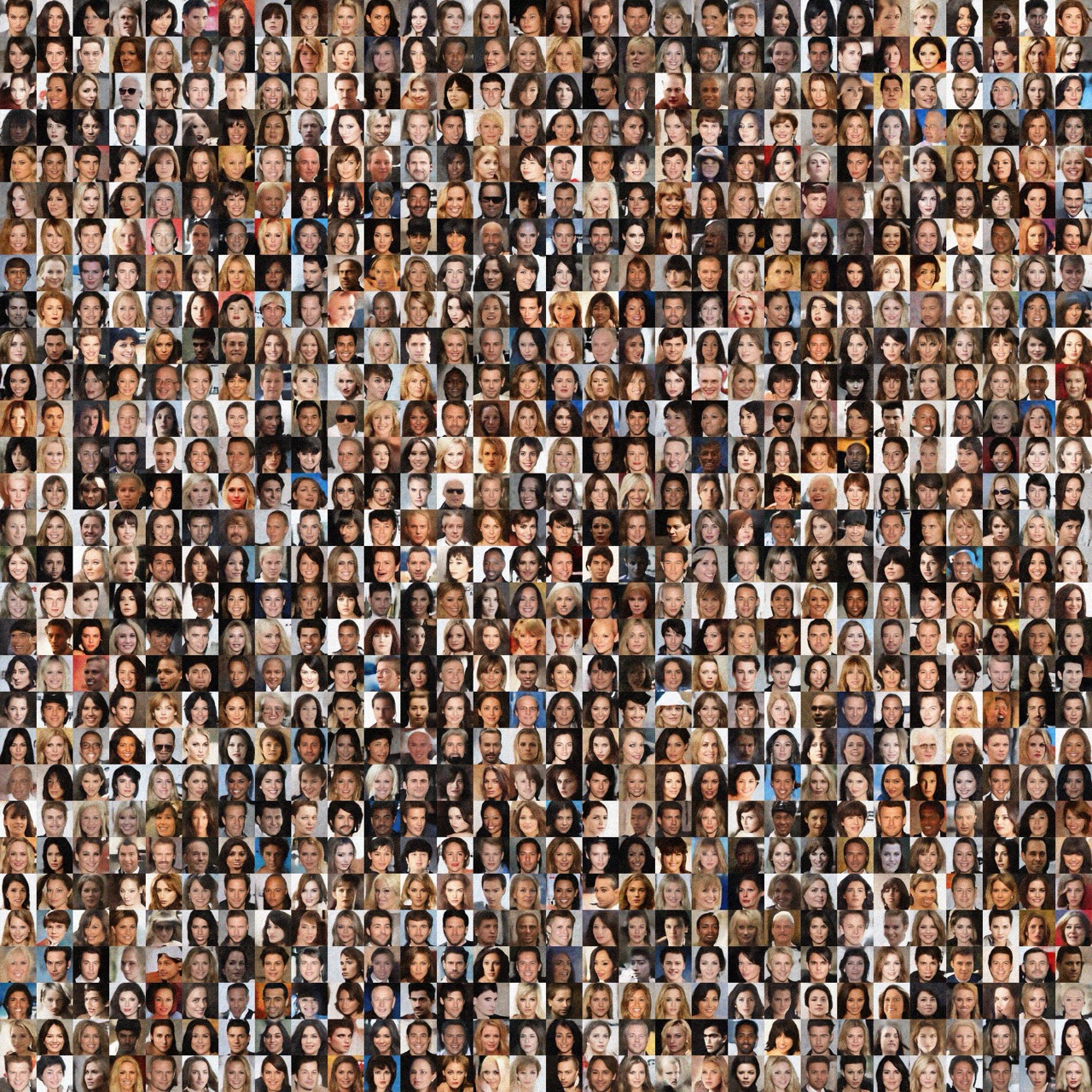}
    \caption{RSD generation results for Random Inpainting with $p=0.9$ and $\sigma=0.2$}
     
\end{figure}

\begin{figure}
    \centering
    \includegraphics[width=\linewidth]{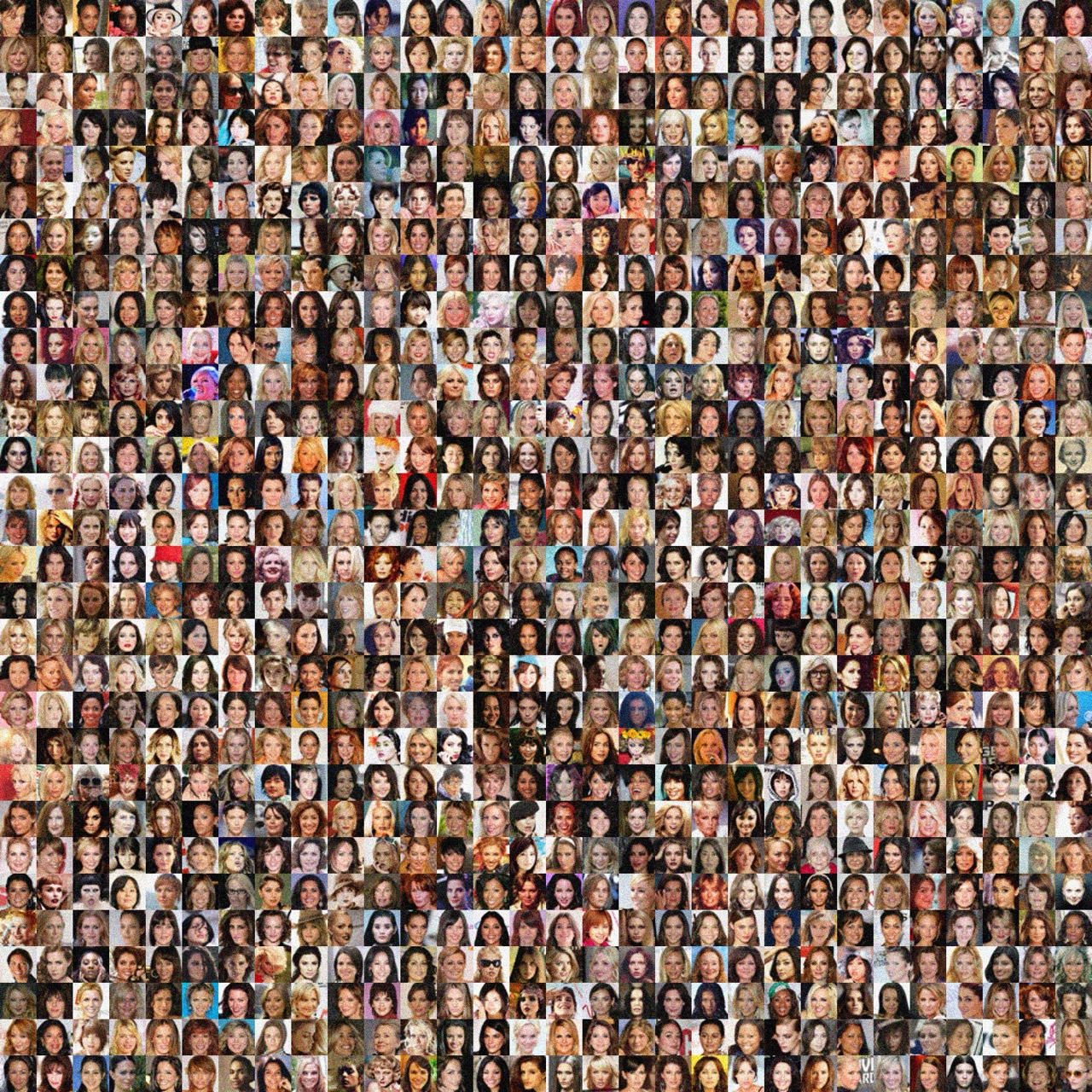}
    \caption{Dataset samples for Super Resolution $\times2$ Task with $\sigma=0.2$}
     
\end{figure}

\begin{figure}
    \centering
    \includegraphics[width=\linewidth]{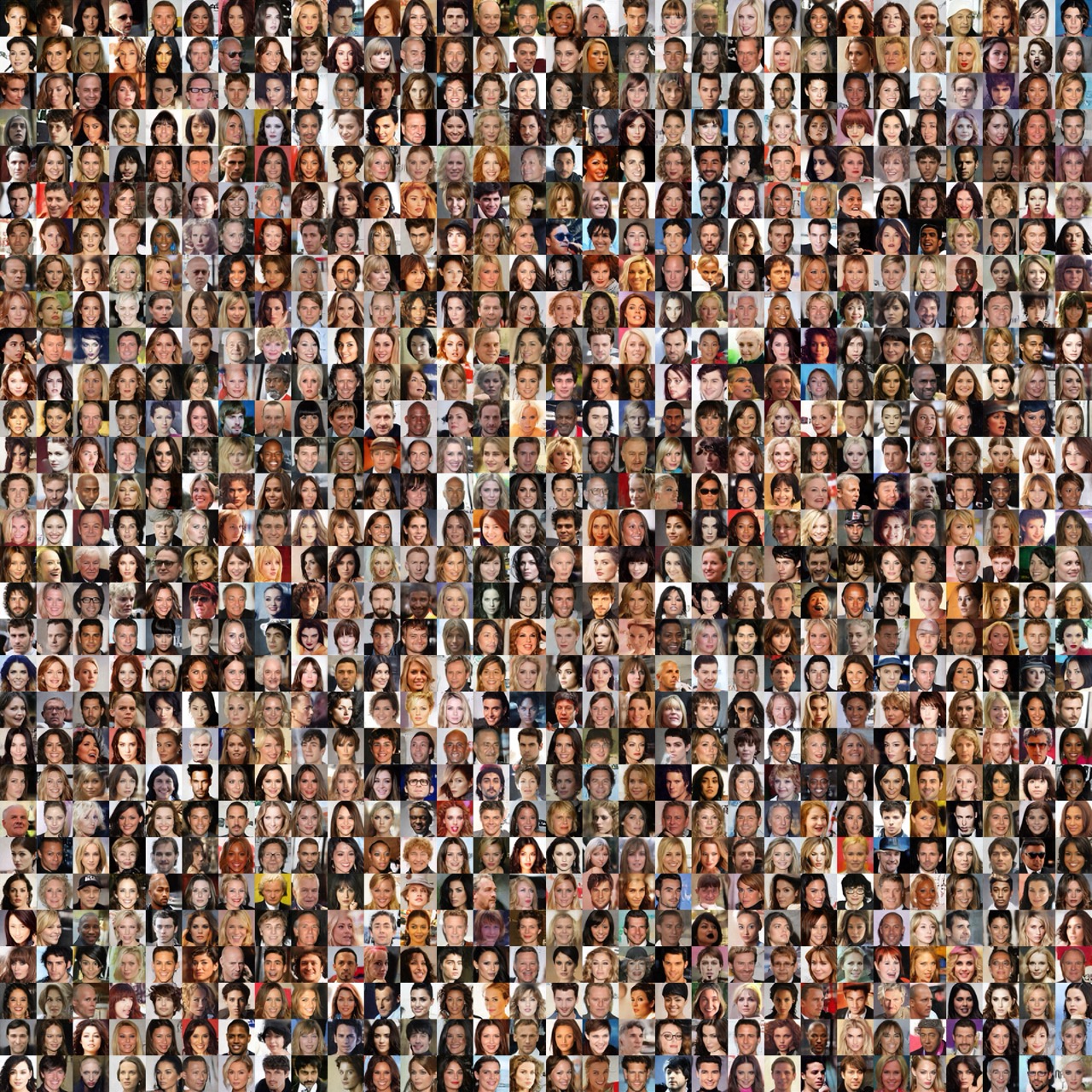}
    \caption{RSD generation results for Super Resolution $\times2$ Task with $\sigma=0.2$}
     
\end{figure}

\begin{figure}
    \centering
    \includegraphics[width=\linewidth]{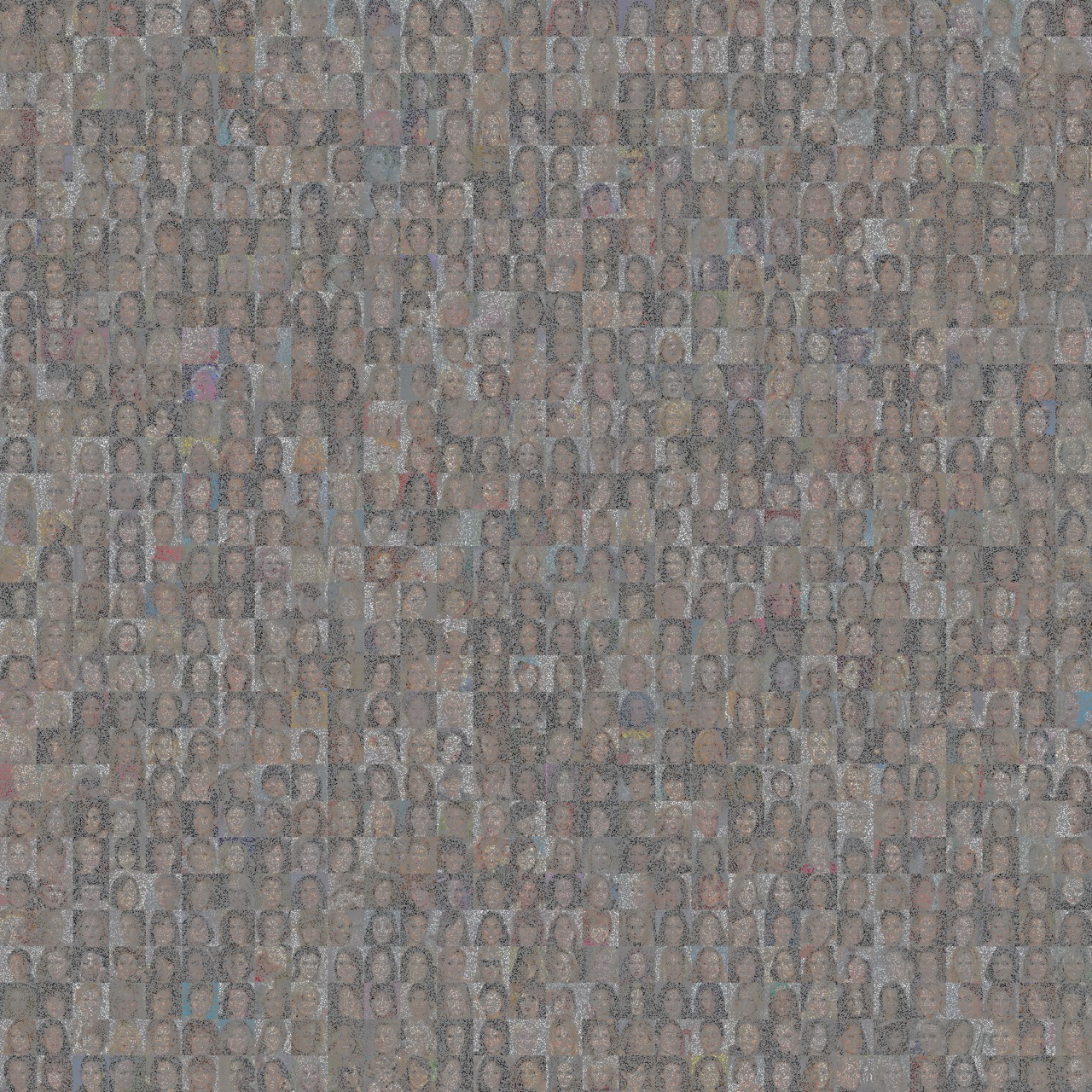}
    \caption{Dataset samples for Random Inpainting with $p=0.8$ and $\sigma=0$}
     
\end{figure}

\begin{figure}
    \centering
    \includegraphics[width=\linewidth]{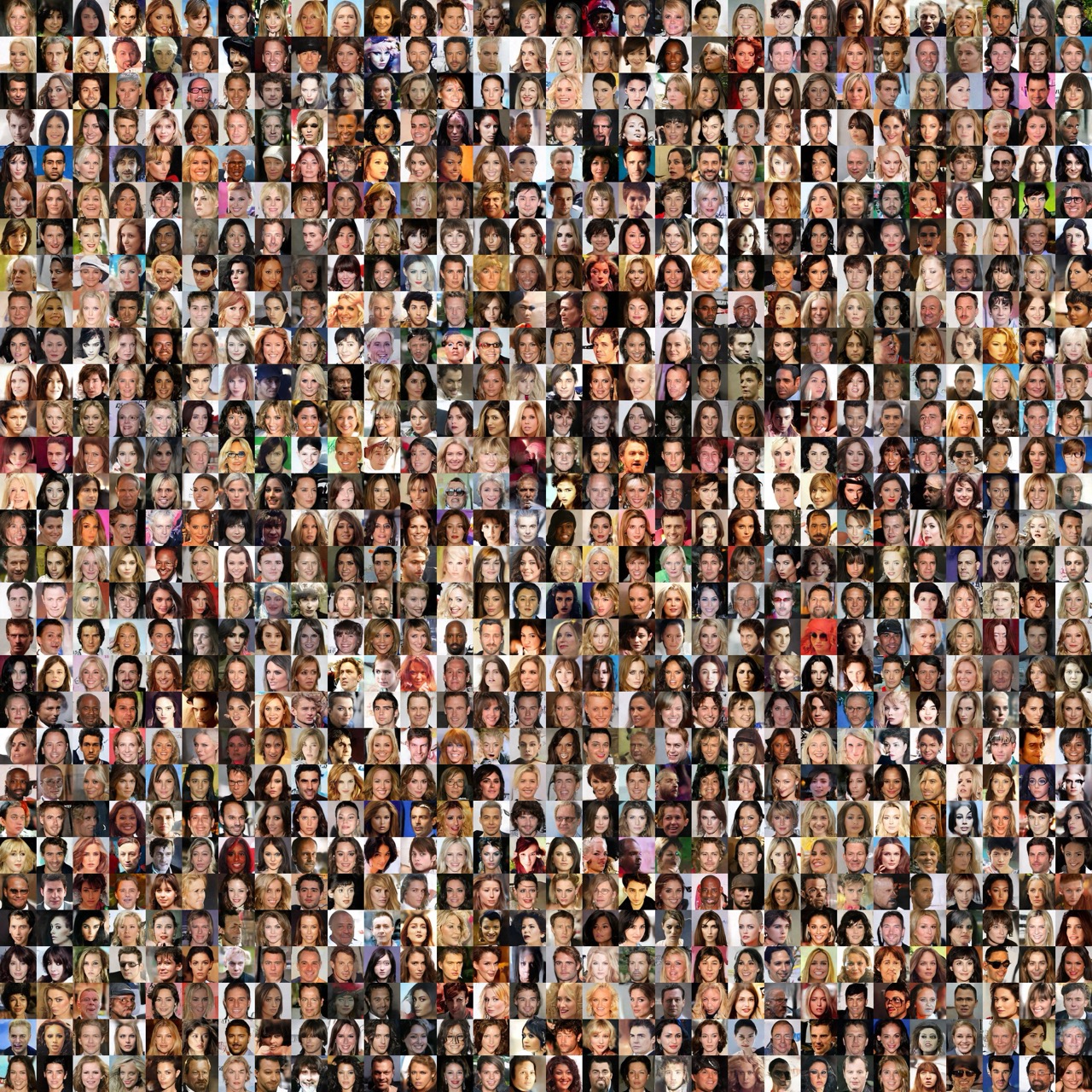}
    \caption{RSD generation results for Random Inpainting with $p=0.8$ and $\sigma=0$}
     
\end{figure}

\begin{figure}
    \centering
    \includegraphics[width=\linewidth]{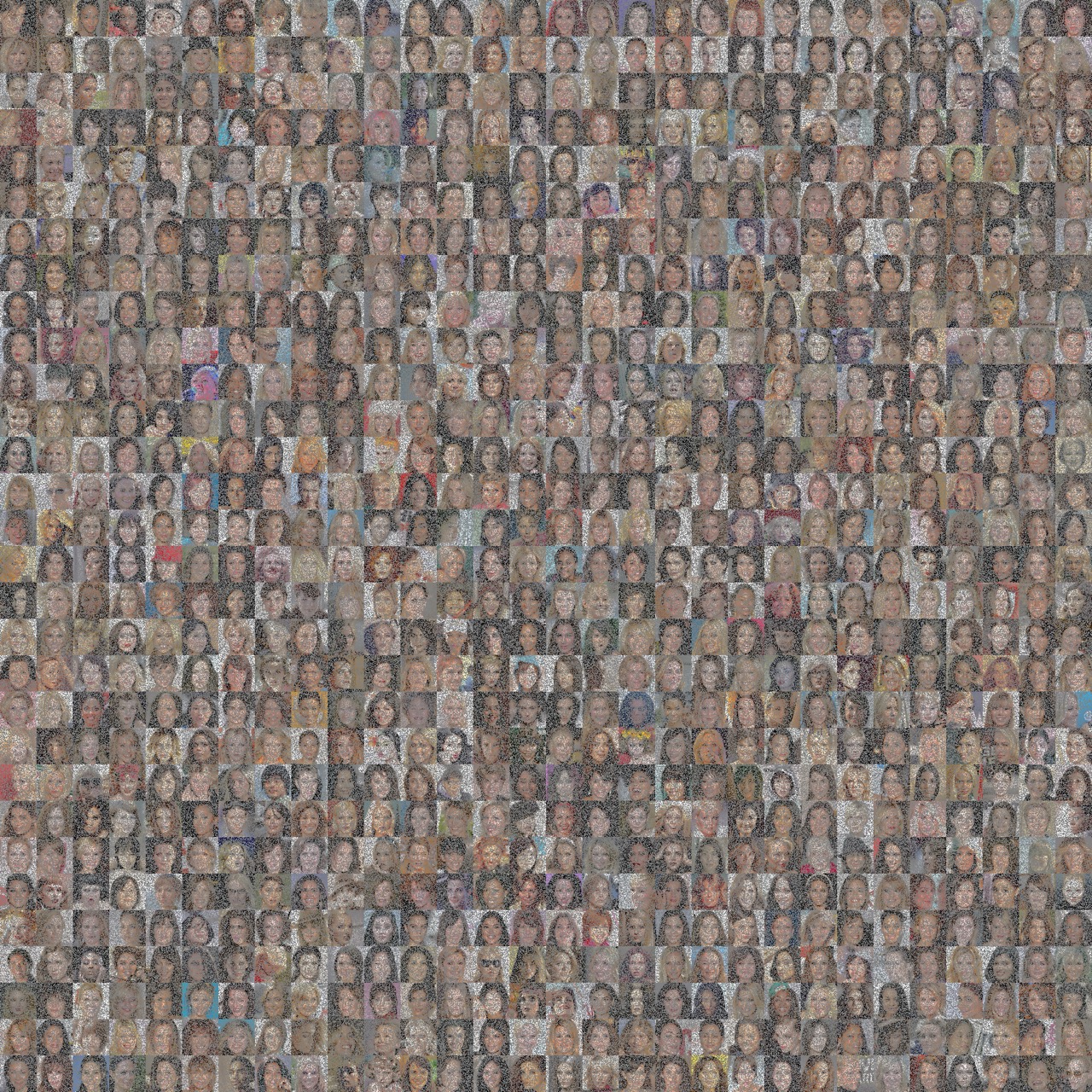}
    \caption{Dataset samples for Random Inpainting with $p=0.6$ and $\sigma=0$}
     
\end{figure}

\begin{figure}
    \centering
    \includegraphics[width=\linewidth]{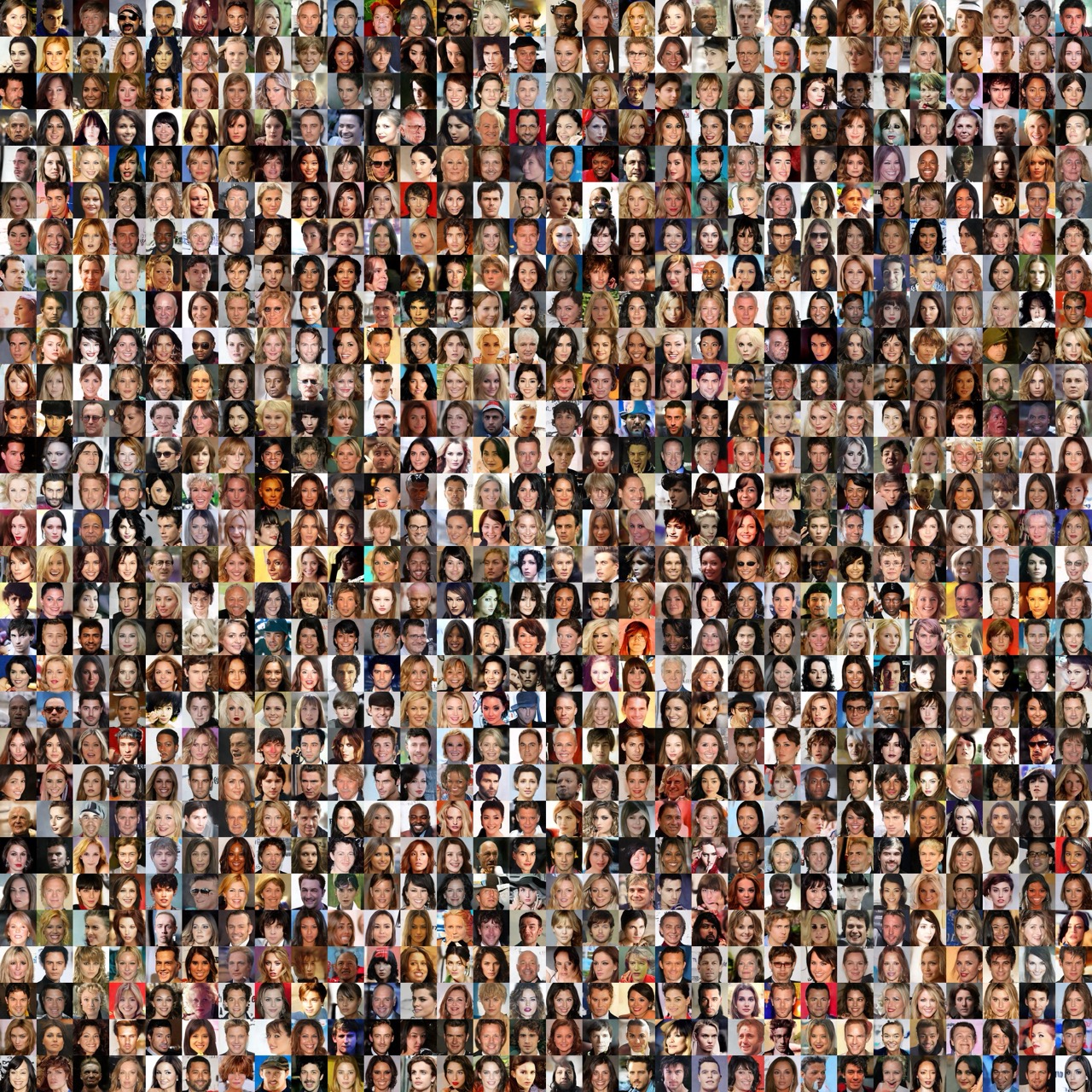}
    \caption{RSD generation results for Random Inpainting with $p=0.6$ and $\sigma=0$}
     
\end{figure}

\end{document}